\documentclass[twoside,11pt]{article}

%


\usepackage{jmlr2e_mod}
\usepackage{amssymb}
\usepackage{amsfonts}
\usepackage{mathtools}
\mathtoolsset{showonlyrefs}
\usepackage{amsmath}
\usepackage{mathrsfs}
\usepackage{bm}
\usepackage{bbm}

\newcommand{\R}{\mathbb{R}}
\DeclarePairedDelimiter{\norm}{\lVert}{\rVert}

\newcommand{\floor}[1]{\lfloor #1 \rfloor}

\newcommand*\mcup{\mathbin{\mathpalette\mcupinn\relax}}
\newcommand*\mcupinn[2]{\vcenter{\hbox{$\mathsurround=0pt
  \ifx\displaystyle#1\textstyle\else#1\fi\bigcup$}}}

\newcommand*\mcapinn[2]{\vcenter{\hbox{$\mathsurround=0pt
  \ifx\displaystyle#1\textstyle\else#1\fi\bigcap$}}}
\DeclarePairedDelimiter{\abs}{\lvert}{\rvert}

\DeclarePairedDelimiter{\parr}{(}{)}
\DeclarePairedDelimiter{\parq}{[}{]}

\DeclarePairedDelimiter{\bra}{\lbrace}{\rbrace}

\DeclarePairedDelimiter{\ceil}{\lceil}{\rceil}
\DeclarePairedDelimiter{\prodscal}{\langle}{\rangle}

\DeclareMathOperator*{\st}{\,:\,}

\newcommand {\E} {{\mathbb{E}}}

\newcommand {\NN} {{\mathcal{N}}}

\newcommand {\C} {{\mathbb{C}}}

\renewcommand{\S}{{\mathbb{S}^{d-1}}}

\newcommand {\bx} {{\mathbf{x}}}
\newcommand {\by} {{\mathbf{y}}}
\newcommand {\ba} {{\mathbf{a}}}
\newcommand {\bb} {{\mathbf{b}}}
\newcommand {\bu} {{\mathbf{u}}}

\newcommand {\bw} {{\mathbf{w}}}

\newcommand {\bc} {{\mathbf{c}}}
\newcommand {\bg} {{\mathbf{g}}}
\newcommand {\bh} {{\mathbf{h}}}

\newcommand {\bv} {{\mathbf{v}}}
\newcommand {\bq} {{\mathbf{q}}}
\newcommand {\bp} {{\mathbf{p}}}

\newcommand {\bone} {{\mathbf{1}}}
\newcommand {\bzero} {{\mathbf{0}}}
\newcommand {\bfun} {{\mathbf{f}}}
\newcommand {\bA} {{\mathbf{A}}}
\newcommand {\bB} {{\mathbf{B}}}

\newcommand {\bD} {{\mathbf{D}}}
\newcommand {\bE} {{\mathbf{E}}}
\newcommand {\bU} {{\mathbf{U}}}
\newcommand {\bW} {{\mathbf{W}}}

\newcommand {\bI} {{\mathbf{I}}}

\newcommand {\bX} {{\mathbf{X}}}

\newcommand{\brho}{{\boldsymbol \rho}}
\newcommand{\balpha}{{\boldsymbol \alpha}}
\newcommand{\bbeta}{{\boldsymbol \beta}}
\newcommand{\bepsilon}{{\boldsymbol \epsilon}}

\newcommand{\bgamma}{{\boldsymbol \gamma}}

\newcommand{\bsigma}{{\boldsymbol \sigma}}

\newcommand{\bxi}{{\boldsymbol \xi}}

\newcommand{\bnu}{{\boldsymbol \nu}}

\newtheorem{assumption}{Assumption}


\ShortHeadings{Depth separation beyond radial functions}{Venturi, Jelassi, Ozuch, Bruna}
\firstpageno{1}

\begin{document}

\title{Depth separation beyond radial functions}

\author{\name Luca Venturi \email venturi@cims.nyu.edu \\
       \addr Courant Institute of Mathematical Sciences\\
       New York University\\
       New York, NY 10012, USA
       \AND
       \name Samy Jelassi \email sjelassi@princeton.edu\\
       \addr Department of Operations Research and Financial Engineering\\
       Princeton University\\
       Princeton, NJ 08540, USA
       \AND
       \name Tristan Ozuch \email ozuch@mit.edu\\
       \addr Department of Mathematics\\\
       Massachusetts Institute of Technology\\
       Cambridge, MA 02142, USA
       \AND
       \name Joan Bruna \email bruna@cims.nyu.edu \\
       \addr Courant Institute of Mathematical Sciences and Center for Data Science\\
       New York University\\
       New York, NY 10011, USA
       }

\maketitle

\begin{abstract}
High-dimensional depth separation results for neural networks show that certain functions can be efficiently approximated by two-hidden-layer networks but not by one-hidden-layer ones in high-dimensions $d$. Existing results of this type mainly focus on functions with an underlying radial or one-dimensional structure, which are usually not encountered in practice. The first contribution of this paper is to extend such results to a more general class of functions, namely functions with piece-wise oscillatory structure, by building on the proof strategy of \citep{eldan2016power}. We complement these results by showing that, if the domain radius and the rate of oscillation of the objective function are constant, then approximation by one-hidden-layer networks holds at a $\mathrm{poly}(d)$ rate for any fixed error threshold.  

A common theme in the proofs of depth-separation results is the fact that one-hidden-layer networks fail to approximate high-energy functions whose Fourier representation is spread in the domain. On the other hand, existing approximation results of a function by  one-hidden-layer neural networks rely on the function having a sparse Fourier representation. The choice of the domain also represents a source of gaps between upper and lower approximation bounds. Focusing on a fixed approximation domain, namely the sphere $\S$ in dimension $d$, we provide a characterization of both functions which are efficiently approximable by one-hidden-layer networks and of functions which are provably not, in terms of their Fourier expansion.
\end{abstract}

\begin{keywords}
Neural networks, Depth separation
\end{keywords}

\section{Introduction}

Learning in high-dimensions is a challenging task for computational, statistical and approximation reasons. Even in the classic supervised learning setup, current empirical successes of deep learning algorithms remain largely out of reach for existing theories, despite phenomenal recent progress.
Amongst the algorithmic aspects enabling this success, depth remains a major non-negotiable element. Depth in structured neural networks such as convolutional neural networks provides a multiscale processing of information, but more generally it defines an intricate function class with powerful approximation biases. 

Understanding the benefits of depth for approximating certain functions of interest represents a long-standing problem. The classic result of the universal approximation theorem ensures approximation by neural networks of any continuous function, but it focuses on \emph{shallow} (that is, one-hidden-layer) models and comes with possibly exponential (in the dimension) rates. The seminal work \citep{barron1993universal} provides dimension-free quadratic approximation rates by shallow networks under a condition of sparsity of the Fourier transform. 
Recent works \citep{eldan2016power,daniely2017depth} suggest that this property is essentially necessary in order to recover polynomial approximation rates, by constructing examples of deep networks which are spread in direction and away from zero in the frequency regime, and by showing that these function can not be efficiently approximated by a shallow counterpart. 
These depth-separation phenomena occur in the high-dimensional regime, where approximation by neural networks of standard Sobolev spaces is cursed (see e.g. \citep{maiorov2000near}). On the other hand, proofs of such high-dimensional depth-separation phenomena are currently limited to radial functions, that is of the form $f(\bx) = \varphi( \norm{\bA \bx + \bb}_2 )$. 

In this work we extend the results just cited, further cementing Barron's intuition. We describe rates of approximation by one-hidden-layer networks in terms of the number of units $N$ of the network, by looking at the Fourier representation of the function to be approximated. We consider two types of approximation rate, inspired by the work \citep{safran2019depth}: (i) the rate of approximation is polynomial in both the input dimension $d$ and the error estimation $\epsilon$, that is $N \simeq \mathrm{poly}(d,\epsilon^{-1})$ -- we refer to this rate of approximation as \emph{universal} approximation (ii) for any fixed error threshold $\epsilon$, the number of units $N$ needed for approximation of approximation depends at most polynomially on $d$, that is $N \simeq \mathrm{poly}(d)$ for any fixed error threshold $\epsilon$ -- we refer to this rate of approximation as \emph{fixed-threshold} approximation.  
We distinguish two fundamentally different regimes of approximation: relative to a heavy-tailed, unbounded data distribution, or relative to a concentrated distribution. Whereas the former captures the most general setup, the latter is motivated by practical machine learning applications. Our contributions are as follows.
\begin{itemize}

\item First, we consider a class of two-hidden-layer networks exhibiting piece-wise oscillatory behavior, namely functions of the form
$$
f_{r,\bw,\bv}:\bx\in\R^d \mapsto e^{2\pi i r\, \parr{\bv^T\bx \,+\, \bw^T\bx_+}}~.
$$
In section \ref{sec:lb}, we show that, under appropriately heavy-tailed data distributions, approximation at a rate $N\simeq \mathrm{poly}(d)$ cannot hold (unconditionally on the weights of the approximant network), as long as the rate of oscillations $r$ grows faster than $d$. On the other hand, $f_{r,\bw,\bv}$ can be universally approximated (that is, at a rate $\mathrm{poly}(d,\epsilon^{-1})$) by a two-hidden-layer network with any practical activation of choice. The proof of this result (Theorem \ref{theo:ds_informal}) extends the main idea introduced by the results of Eldan and Shamir \citep{eldan2016power} beyond the radial case.

\item In section \ref{sec:ub}, we show that the $\mathrm{poly}(d)$-oscillatory aspect and the heavy-tailed data distributions are necessary in the depth-separation result mentioned above. More specifically, we show that any deep network, with $O(1)$-bounded weights and $O(1)$-Lipschitz activation, can be fixed-threshold approximated by one-hidden-neural networks over a compact set of radius $O(1)$ (Theorem \ref{theo:approx_shallow}). This extends an equivalent result in \citep{safran2019depth}, from the class of radial functions to the one of deep neural networks with H\"{o}lder activations.

\item Aforementioned depth separation results consider functions whose Fourier representation is spread in high frequencies. On the other hand, universal approximation results often require the function to be approximated to be, in some sense, sparse in the Fourier domain. Unfortunately, there are currently many gaps between these two types of results, one of them being the definition of approximation domain. In order to reduce the gap between the two results above, we consider approximation on a fixed compact domain, namely the unit sphere $\S$, where Fourier analysis can be done using spherical harmonics. We individuate two conditions on the spherical harmonics  decomposition of a function $f \in C(\S)$. The first is a sparsity condition on the decomposition, which we show to be sufficient to prove universal approximation (that is, at a rate $N\simeq \mathrm{poly}(d,\epsilon^{-1})$) of $f$ by one-hidden-layer networks. The second is a high-energy spreadness condition on the spherical harmonics decomposition of $f$, which we show to imply that universal approximation of $f$ by one-hidden-layer networks cannot hold. This is the content of section \ref{sec:sh}, of which the main results are summarized in 
section \ref{sec:sh_main_results}.

\end{itemize}

\subsection{Related works}

There is a huge literature of approximation results for neural networks. Early approximation results provided upper and lower bounds on the approximation of some functional spaces such as Sobolev spaces \citep{maiorov2000near} or $L^p$ spaces \citep{pinkus1999approximation} by neural networks. For high input dimensions $d$, such results hold for functions with smoothness proportional to $d$, or require an approximation rate that scales as $N \sim \epsilon^{-d}$ (see e.g. \citep{petersen2020neural,guhring2020expressivity} for a review), where $N$ denotes the number of units of the network and $\epsilon$ the error threshold. 

In more recent years, quite a few works pointed out the benefits of deep networks versus their shallow counterparts from the point of view of approximation rates. For example, this has been shown for sawtooth function \citep{telgarsky2016benefits}, functions with positive curvature \citep{liang2016deep,yarotsky2017error,safran2017depth}, functions with a compositional structure \citep{poggio2017and}, piecewise smooth functions \citep{petersen2018optimal}, Gaussian mixture models \citep{jalali2019efficient}, polynomials \citep{rolnick2017power}, or model reduction models \citep{rim2020depth}. The result of \citep{telgarsky2016benefits} has been further generalized using a notion of periodicity \citep{chatziafratis2019depth}. It must be noticed that most of the cited works show depth separation that is independent of the dimension $d$ and that increases exponentially with the depth of the network. Another line of works \citep{eldan2016power,daniely2017depth,safran2019depth} on the other hand shows depth separation exponential in the dimension $d$, between networks with one and two hidden layers. This is the framework of this work. It was also shown recently that depth separation results between fixed depths greater than this are arguably difficult to prove \citep{vardi2020neural,vardi2021size}. 

This depth-width trade-off has been analyzed through different lens than approximation capabilities, such as classification capabilities \citep{malach2019deeper}, exact representability \citep{arora2016understanding}, Betti numbers \citep{bianchini2014complexity}, number of linear regions \citep{pascanu2013number,montufar2014number,raghu2017expressive,hanin2019complexity,hanin2019deep}, trajectory lengths \citep{raghu2017expressive}, globale curvature \citep{poole2016exponential} or topological entropy \citep{bu2020depth}. In essence, all these results state that networks expressivity improve exponentially as we increase the depth. Another related question is whether depth-separation holds from a learnability (therefore, not solely approximation) point of view as well \citep{malach2019deeper,malach2021connection}. In this work we focus on approximation and we consider the Fourier representation as a complexity measure. This is the approach followed by e.g. \citep{eldan2016power, daniely2017depth}, which construct examples of deep neural networks, whose Fourier energy is exponentially higher than those of shallow neural networks with a moderate number of units. 

On the other hand, sparsity of the Fourier transform has been used to show polynomial rates of approximation of functions by neural networks \citep{klusowski2018approximation,ongie2019function,bresler2020sharp}. In the last part of the paper, we show that an equivalent condition can be described in terms of spherical harmonics decomposition.

\section{Preliminaries}

\subsection{Neural networks}\label{sec:neural_nets}

For $L\geq 1$, we call an $L$-hidden-layer feed-forward neural network  a function 
\begin{equation}\label{eq:neural_network}
f : \bx \in \R^d \to \bx^{(L+1)}(\bx) \in \C^{d_{L+1}}~,
\end{equation}
where $\bx^{(L)}$ is defined by recursion by $\bx^{(0)}(\bx) = \bx$,
\begin{align*}
\bx^{(k)}(\bx) & = \sigma^{(k)}(\bA^{(k)} \bx^{(k-1)}(\bx) ) ~ \text{for } k \in [L] \quad\text{and }\quad
\bx^{(L+1)}(\bx)  = \bA^{(L+1)} \bx^{(L)}(\bx)~,
\end{align*}
where 
\begin{align*}
\bA^{(k)} & = \parq{\ba_{1}^{(k)} |\cdots | \ba_{d_k}^{(k)}}^T \in \R^{d_{k}\times d_{k-1}}\quad \text{ for $k \in [L]$,}  \\  
\bA^{(L+1)} & = \parq{\ba_{1}^{(L+1)} |\cdots | \ba_{d_{L+1}}^{(L+1)}}^T \in \C^{d_{L+1}\times d_L}
\end{align*}
(with $d_0 = d$) and $\sigma^{(k)} : \R^{d_{k}} \to \R^{d_{k}}$ are \emph{activation} functions, that is $\parr*{\sigma^{(k)}(\bx)}_i = \sigma_i^{(k)}(x_i)$ for some function $\sigma^{(k)}_i:\R\to\R$. A neural network is therefore a sequence of sums and compositions of \emph{ridge} functions, that is functions of the form $\bx\mapsto \sigma(\bw^T\bx)$. In the following, unless specified, we only consider neural networks (or, more simply, networks) as defined in \eqref{eq:neural_network}.  Most of the times we will deal with real-valued networks, that is $\bA^{(L+1)} \in \R^{d_{L+1}\times d_L}$. We say that a network has activation $\sigma$ if $\sigma^{(k)}_i(x) = \sigma(x + b_i^k)$ for some bias term $b_i^k \in \R$ for all $k,i$. We refer to the function
$$
\bx\in \R^{d_{k-1}} \mapsto \sigma^{(k)}(\bA^{(k)} \bx) \in\R^{d_k}
$$
as $k$-th \emph{hidden (or inner) layer} of \emph{width} $d_k$, for $k \in [L]$, while we refer to the linear function defined by $\bA^{(L+1)}$ as the last (or $L+1$-th) layer. We refer to the value $W(f) \doteq \max_{k\in[L]}d_k$ as width of the network $f$ and to the vectors $\ba^{(k)}_i$ as \emph{weights} (of the $k$-th layer), for all $k,i$. A basic complexity measure for neural network \eqref{eq:neural_network} is given by the total number of units, or \emph{size}:
\begin{equation}\label{eq:def:size}
N(f) \doteq \sum_{k=1}^L d_k ~.
\end{equation}
The number of layers $L(f) = L$ is also a relevant measure of complexity, which we refer to as \emph{depth}. Finally, in the following we sometimes require a control on the value of the weights; such controls are expressed in terms of norm $p$ of the weights, that is
\begin{equation}\label{eq:def:norm}
m_p(f) \doteq \max_{k,i}\norm{\ba_{k,i}}_p~,
\end{equation}
for some $p\in[1,\infty]$.

\subsection{Neural network approximation rates}

We measure the approximation error between two functions $f,g:\Omega\subseteq \R^d \to \C$ in terms of the $L^2(\mu)$ norm (with respect to a probability measure or density $\mu$) 
$$
\norm{f-g}_{\mu,2}^2 \doteq \int_\Omega \abs*{f(\bx)-g(\bx)}^2\,d\mu(\bx)~,
$$
or $L^\infty$ norm
$$
\norm{f-g}_{\Omega,\infty} \doteq \sup_{\bx\in\Omega}\abs*{f(\bx)-g(\bx)}~.
$$
Notice that a (uniform) $L^2$ lower bound implies a $L^\infty$ one, and viceversa for an upper bound.  The focus of this chapter is to establish upper and lower bounds for approximation of certain function classes by shallow neural networks, in high dimensions $d$. We distinguish two different approximation regimes of interest.
\begin{definition}
We say that a sequence $\bra*{f^{(d)}:\Omega_d\subseteq \R^d\to\C}_{d\geq 2}$ is \emph{universally approximable} by one-hidden-layer networks with activation $\sigma$ if it is approximable at a $\mathrm{poly}(d,\epsilon^{-1})$ rate; that is if there exists some constants $\alpha>0$ and $\beta>0$ such that it holds 
$$
\norm*{f^{(d)} -  f_N }_{\Omega_d,\infty} \leq \epsilon
$$
for some one-hidden-layer $f_N\in\mathcal{F}_N^\sigma$ satisfying $N + m_\infty(f_N) \leq \alpha\parr{d\epsilon^{-1}}^\beta$.
\end{definition} 
\begin{definition}
We say that $\bra*{f^{(d)}}_d$ is \emph{fixed-threshold approximable} if for any $\epsilon\in(0,1)$ it is $\epsilon$-approximable at a $\mathrm{poly}(d)$ rate; that is if for any $\epsilon > 0$  there exists some constants $\alpha>0$ and $\beta>0$ such that
for every $\epsilon>0$ it holds 
$$
\norm*{f^{(d)} -  f_N }_{\Omega_d,\infty} \leq \epsilon
$$
for some one-hidden-layer $f_N\in\mathcal{F}_N^\sigma$ satisfying $N + m_\infty(f_N) \leq \alpha d^\beta$.
\end{definition} These approximation schemes were introduced in \citep{safran2019depth}. To ensure significance of the approximation rates, in the following upper and lower bounds are stated for objective functions $f^{(d)}$ normalized such that $\norm{f^{(d)}}_2 \leq 1$ or $\norm{f^{(d)}}_\infty \leq 1$. 

\subsection{Activation assumptions} 

Finally, the results in the next sections generally hold for activations satisfying the following assumptions, which are satisfied by common activation such as the ReLU $\mathrm{ReLU}(x) = x_+$ or the sigmoid $\mathrm{sigmoid}(x) = \parr*{1+e^{-x}}^{-1}$ \citep{eldan2016power}. Most of the results can be easily generalized to hold under less strict conditions, but we take these assumptions for sake of simplicity.
\begin{assumption}\label{ass:activation}
Given an activation $\sigma:\R\to\R$, there exist constants $\iota_\sigma$ and $\nu_\sigma$ such that
\begin{enumerate}
\item it is $\iota_\sigma$-Lipschitz and $\sigma(0) \leq \iota_\sigma$;
\item for any $L$-Lipschitz function $f:\R\to\R$ constant outside of an interval $[-R,R]$ and any $\epsilon>0$ there exits $f_N \in \mathcal{F}_N^\sigma$ with $\norm{f - f_N}_\infty \leq \epsilon$ such that $N + w_\infty(f_N) \leq \nu_\sigma LR\epsilon^{-1}$.
\end{enumerate}
\end{assumption}

Notice that this assumption implies that, given a (deep) neural network $f$ with $\mathrm{poly}(d)$ weights and activations satisfying Assumption \ref{ass:activation}, then we are always able to replace the activations in $f$ by any other activation satisfying Assumption \ref{ass:activation}, by paying an at most polynomial cost. This is formalized in the following lemma.

\begin{lemma}\label{lemma:change_activation}
Let $\bra*{f^{(d)}: K_d\subset \R^d \to \C }_{d}$ be neural networks with activations satisfying Assumption \ref{ass:activation} and such that $N(f^{(d)}) + w_\infty(f^{(d)}) + \mathrm{diam}(K^{(d)}) \leq \mathrm{poly}(d)$; also let $\sigma$ be any activation function satisfying Assumption \ref{ass:activation}. Then the sequence $\bra*{f^{(d)}}_d$ is universally approximable by one-hidden-layer networks with activation $\sigma$.
\end{lemma}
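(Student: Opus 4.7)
The plan is to construct the approximating network $\hat f^{(d)}$ from $f^{(d)}$ by replacing, unit by unit, every occurrence of the original activation with a small shallow $\sigma$-subnetwork produced by Assumption~\ref{ass:activation}(2), and then controlling the output error by propagating the per-unit error through the layers of $f^{(d)}$.

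\textbf{Step 1: A priori bounds on pre-activations.} First I would establish a uniform bound on every pre-activation. Because $\mathrm{diam}(K_d)$, $N(f^{(d)})$, and $m_\infty(f^{(d)})$ are all $\mathrm{poly}(d)$, and each scalar activation is $\iota$-Lipschitz with $|\sigma^{(k)}_i(0)| \leq \iota$, a straightforward induction on the layer index $k$ gives $|(\bA^{(k)} \bx^{(k-1)}(\bx))_i| \leq R_k \leq \mathrm{poly}(d)$ for every $\bx \in K_d$ and every $k,i$ (in the regime $L(f^{(d)}) = O(1)$, which is the one in which the lemma is applied).

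\textbf{Step 2: Truncate and approximate each scalar activation.} Since the pre-activations never escape $[-R_k, R_k]$, replacing each $\sigma^{(k)}_i$ by its truncation $\tilde\sigma^{(k)}_i$ (equal to $\sigma^{(k)}_i$ on $[-R_k, R_k]$ and constant outside) does not alter $f^{(d)}$ on $K_d$. Applying Assumption~\ref{ass:activation}(2) to the target activation $\sigma$ and to the $\iota$-Lipschitz, compactly supported function $\tilde\sigma^{(k)}_i$, I obtain a shallow $\sigma$-subnetwork $\hat\sigma^{(k)}_i : \R \to \R$ of size and weight norm at most $\nu_\sigma \iota R_k \eta^{-1} \leq \mathrm{poly}(d, \eta^{-1})$ such that $\|\tilde\sigma^{(k)}_i - \hat\sigma^{(k)}_i\|_\infty \leq \eta$.

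\textbf{Step 3: Assemble and bound the error.} Substituting $\hat\sigma^{(k)}_i$ in place of $\sigma^{(k)}_i$ everywhere produces a pure $\sigma$-network $\hat f^{(d)}$ whose total size and weight norm are polynomial in $d$ and $\eta^{-1}$. When $f^{(d)}$ is itself one-hidden-layer, each substitution $\sigma^{(1)}_i(\bw_i^T \bx + b_i) \leadsto \sum_j c_j \sigma(u_j(\bw_i^T \bx + b_i) + v_j)$ collapses into a finite sum of ridge functions in $\bx$, so that $\hat f^{(d)}$ is again a one-hidden-layer $\sigma$-network as required. A telescoping Lipschitz argument through the layers bounds $\|\hat f^{(d)} - f^{(d)}\|_\infty$ by $C\eta$, where $C$ is a product of the per-layer amplification factors of order $\iota \cdot W(f^{(d)}) \cdot m_\infty(f^{(d)}) \leq \mathrm{poly}(d)$. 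Setting $\eta := \epsilon/C$ gives the desired $\epsilon$-bound at total complexity $\mathrm{poly}(d, \epsilon^{-1})$.

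\textbf{Main obstacle.} The main technical point is to keep the amplification constant $C$ polynomial in $d$: each layer composes the current approximation error with a Lipschitz activation followed by a linear map of norm $\mathrm{poly}(d)$, so the per-layer blow-up is $\mathrm{poly}(d)$, but a naive product over $L$ layers is only $\mathrm{poly}(d)$ when $L = O(1)$. This is exactly the regime in which Lemma~\ref{lemma:change_activation} is invoked in later sections, so the bound is available there. A secondary subtlety is keeping the weight norms of the replaced subnetworks polynomial; this uses the explicit $\epsilon^{-1}$ factor in Assumption~\ref{ass:activation}(2) and is absorbed in the $\mathrm{poly}(d,\epsilon^{-1})$ budget.
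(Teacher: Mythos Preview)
Your proposal is correct and takes essentially the same approach as the paper: bound the pre-activations by $\mathrm{poly}(d)$, invoke Assumption~\ref{ass:activation}(2) to replace each scalar activation by a small shallow $\sigma$-subnetwork on the relevant interval, and propagate the accumulated error through the layers via the Lipschitz constants (the paper writes this out concretely for $L=2$ and states that the remaining cases are analogous, while you describe the general-$L$ telescoping directly). Note that, as you also implicitly observe, both constructions output a $\sigma$-network of the \emph{same depth} as $f^{(d)}$ rather than a one-hidden-layer one, so the phrase ``one-hidden-layer'' in the lemma statement should be read as a slip---consistent with the text immediately preceding the lemma, which only promises the ability to ``replace the activations in $f$''.
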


\subsection{Notation}

We introduce notation we use throughout the rest of the paper.
 We denote scalar valued variables as lowercase non-bold; vector valued variables as lowercase bold; matrix and tensor valued variables and multivariate random variables (r.v.'s) as uppercase bold. Given a vector $\bv\in\mathbb{R}^d$, we denote its components as $v_k$; given a matrix $\bW\in\mathbb{R}^{n\times m}$, we denote its columns as $\bw_k$.
For a matrix $\bW$, we denote by $\norm{\bW}_{F,p}$ its entrywise $p$-norm, by $\norm{\bW}_{p,q}$ its $(p,q)$ operator norm (that is $\norm{\bW}_{p,q} = \max_{\norm{\by}_p=1}\norm{\bW\by}_q$) and by $\norm{\bW}_{p}$ its $p$ operator norm (that is $\norm{\bW}_p = \norm{\bW}_{p,p}$). We denote by $\mathbb{S}^{d-1}\subset \mathbb{R}^n$ the $(d-1)$-dimensional sphere $\bra*{\bx\in\mathbb{R}^d\st \norm{\bx}_2=1}$ and by $B_{r,p}^d$ the $\ell^p$ ball of radius $r$ in $\R^d$, that is $\bra*{\bx \in \R^d \st \norm{\bx}_p \leq r}$. We denote by $L^p(\Omega)$, $L^p(\mu)$, $L^p(\varphi)$ the spaces of functions $f:\Omega\to\mathbb{R}$ which are $p$-integrable with respect to the Lebesgue measure, the measure $\mu$ or the density $\varphi$, respectively. The respective norms (and scalar products for $p=2$) are denoted by $\norm{f}_{\zeta,p}$ ($\prodscal{f,g}_\zeta$) for $\zeta \in \bra{\Omega,\mu,\varphi}$; we simply write $\norm{f}_p$ when the measure is clear from the context. For a finite signed Borel measure $\mu$, we denote its total variation as $\norm{\mu}_1$. Finally, we denote by $\hat{f}$ or $\mathscr{F}(f)$ (resp. $\check{f}$ or $\mathscr{F}^*(f)$) the Fourier transform (resp. the inverse Fourier transform) of $f$ (meant in the following in the sense of tempered distributions).

\section{A depth separation example}\label{sec:lb}

Our starting point for the study of depth-separation is to consider a generic data distribution $\mu$ with adversarial properties against shallow approximations. 
In the seminal work \citep{eldan2016power}, Eldan and Shamir establish an 
unconditional (with no restrictions on the norms of the weights of the network) depth-separation result by considering a density $\mu$ in $\R^d$ with tails $\mu(\norm{\bx}_2) \simeq \norm{\bx}_2^{-(d+1)/2}$ 
and a radial function $f^{(d)}(\bx) = h_d(\norm{\bx}_2)$ with $h_d:\R \to \R$ a carefully chosen oscillating function with compact support. 
The proof in \citep{eldan2016power} reveals the limitations of shallow neural networks 
at approximating high-dimensional functions via a powerful harmonic analysis insight, that 
is particularly convenient in the setting of radial functions. 
In this section, we show that their proof strategy can be extended to include more diverse function classes, namely those arising naturally from ReLU networks. 
Specifically, we consider networks of the form
\begin{equation}
\label{eq:defig}
f_{r,\bw,\bv} : \bx\in\R^d \mapsto \sigma_r \parr*{ \bv^T\bx + \bw^T\bx_+} 
\end{equation}
where $\bx_+$ denotes the element-wise ReLU activation, $\bv,\, \bw \in \mathbb{R}^d$ and $\sigma_r(t) = e^{2 \pi i r t}$. We are thus considering a function which is piece-wise oscillatory, with constant envelope $|f_{r,\bw,\bv}(\bx)|=1$, and where the frequency of oscillations is controlled by $r$. The main result of this section can be summarized as follows.
\begin{theorem}[Informal]\label{theo:ds_informal}
Assume that $\norm{\bw}_2, \norm{\bv}_2 = \Theta(1)$ and that $r = \Theta(d^k)$ for some $k\geq 2$. Then there exists a (low-decay) product measure $\mu$ on $\R^d$ such that the function $f_{r,\bw,\bv}$ is universally approximable by two-hidden-layer networks but it is not fixed-threshold approximable by one-hidden-layer networks.
\end{theorem}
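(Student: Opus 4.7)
The function $f_{r,\bw,\bv}$ is essentially already a two-hidden-layer network: the first hidden layer of $d$ ReLU units computes $\bx_+$, a linear combination yields the scalar $t(\bx)=\bv^T\bx+\bw^T\bx_+$, and one outer unit applies $\sigma_r(t)=e^{2\pi i r t}$. To replace $\sigma_r$ by a generic activation $\sigma$ satisfying Assumption \ref{ass:activation}, observe that $\sigma_r$ is $2\pi r$-Lipschitz and bounded by $1$ (so can be smoothly cut off outside a compact interval). On a ball $\norm{\bx}_\infty\leq R$, the signal $t(\bx)$ lies in $[-T,T]$ with $T=O(R(\norm{\bv}_2+\norm{\bw}_2))$. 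Assumption \ref{ass:activation}(2) then yields a one-hidden-layer $\sigma$-network of size $\lesssim rT\epsilon^{-1}$ approximating $\sigma_r$ to accuracy $\epsilon$ on $[-T,T]$. With $R=\mathrm{poly}(d)$ (sufficient in the effective support of the concentrated distribution paired with the upper bound) and $r=\Theta(d^k)$, the resulting two-hidden-layer network has size $\mathrm{poly}(d,\epsilon^{-1})$; Lemma \ref{lemma:change_activation} handles the inner ReLU layer.

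\textbf{Lower bound: Fourier factorization.} The critical structural observation is that
\[
  f_{r,\bw,\bv}(\bx)=\prod_{j=1}^d g_j(x_j),\qquad g_j(x)=e^{2\pi i \alpha_j x}\bone_{\{x\leq 0\}}+e^{2\pi i\beta_j x}\bone_{\{x\geq 0\}},
\]
with $\alpha_j:=rv_j$ and $\beta_j:=r(v_j+w_j)$. Mirroring \citep{eldan2016power}, pick a product heavy-tailed density $\varphi=\prod_j\varphi_j$ with $\widehat{\sqrt{\varphi_j}}$ rapidly decaying. Then $\norm{f-f_N}_{\mu,2}^2=\norm{(f-f_N)\sqrt{\varphi}}_2^2=\norm{\widehat{(f-f_N)\sqrt{\varphi}}}_2^2$ by Plancherel, and the product structure gives $\widehat{f\sqrt{\varphi}}(\bxi)=\prod_j\widehat{g_j\sqrt{\varphi_j}}(\xi_j)$. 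A direct computation shows each one-dimensional factor $\widehat{g_j}$ consists of two Dirac peaks at $\alpha_j,\beta_j$ plus a $\simeq r|w_j|/((\xi-\alpha_j)(\xi-\beta_j))$ tail; convolution with $\widehat{\sqrt{\varphi_j}}$ only mildly smears these.

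\textbf{Lower bound: peak spread versus tube support.} Expanding the product over $S\subseteq[d]$, the function $\widehat{f\sqrt{\varphi}}$ decomposes into $2^d$ localized peaks around $\bxi_S=r(\bv+\bw\odot\bone_S)$ with roughly $2^{-d}$ of the total $L^2$ mass each. For $\norm{\bv}_2,\norm{\bw}_2=\Theta(1)$ and $r=\Theta(d^k)$ with $k\geq 2$, these peaks lie on a shell of radius $\simeq r\sqrt{d}$, and the normalized directions $\bxi_S/\norm{\bxi_S}$ occupy exponentially many distinct caps on $\S$. On the other hand, the Eldan--Shamir Fourier tube lemma applied to $\sigma(\bu_i^T\bx+b_i)\sqrt{\varphi}$ shows that, for any one-hidden-layer $f_N=\sum_{i=1}^N a_i\sigma(\bu_i^T\bx+b_i)$, the function $\widehat{f_N\sqrt{\varphi}}$ has $L^2$ mass concentrated inside the union of $N$ narrow tubes around the rays $\R\bu_i$, with angular width controlled by the spread of $\widehat{\sqrt{\varphi}}$. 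A covering argument then shows that $N=\mathrm{poly}(d)$ such tubes capture only a $2^{-\Omega(d)}$ fraction of the $2^d$ peak directions, leaving a dimension-free constant fraction of the Fourier energy of $\widehat{f\sqrt{\varphi}}$ uncovered, which lower-bounds $\norm{f-f_N}_{\mu,2}^2$ by $\Omega(1)$.

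\textbf{Main obstacle.} The genuine departure from \citep{eldan2016power} lies in establishing the angular spread of the $2^d$ peak directions $\bxi_S/\norm{\bxi_S}$ in the absence of radial symmetry. In the radial setting, spherical symmetry collapses the problem onto a one-dimensional radial integral; here, one must quantitatively show that, for generic (or random, followed by a derandomization step on) $\bw$ with $\norm{\bw}_2=\Theta(1)$, the family $\{\bxi_S/\norm{\bxi_S}\}_{S\subseteq[d]}$ is sufficiently spread on $\S$ that no polynomial collection of tubes of the prescribed angular width can absorb more than a vanishing mass fraction. Making this spread quantitative, and interlocking it with the 1D tail estimates on $\widehat{g_j\sqrt{\varphi_j}}$ so that the peak-by-peak contributions do not collapse into a few tubes, constitutes the main technical content required to push the Eldan--Shamir argument beyond radial functions.
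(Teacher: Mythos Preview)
Your upper bound sketch is essentially what the paper does (Theorem \ref{theo:depth-sep}): approximate $\sigma_r$ on a growing interval via Assumption \ref{ass:activation}.2 and control the tail using the decay of $\varphi^2$.

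For the lower bound, your high-level picture---product factorization, $2^d$ localized pieces indexed by $S\subseteq[d]$ at frequencies $\bxi_S=r(\bv+\bw\odot\bone_S)$, and the Eldan--Shamir tube argument---matches the paper. But two points of your execution differ from the paper's and are worth flagging.

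First, the density: the paper takes $\psi$ with $\hat\psi$ \emph{compactly supported} in $[-K,K]$, not merely rapidly decaying. This is what makes the Fourier support of each neuron $\sigma(\ba^T\bx)\varphi(\bx)$ an exact $\ell_\infty$-tube of fixed width $K$ and turns the problem into bounding $\int_T|F|^2$ for a single tube $T$, rather than an approximate statement about angular concentration.

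Second, and more substantively, the mechanism you isolate as the ``main obstacle''---angular spread of the directions $\bxi_S/\norm{\bxi_S}$ on $\S$ and a spherical covering argument---is not how the paper proceeds, and it is not clear it would close. The paper instead proves a \emph{pointwise} bound on $|F(\bxi)|$ (Lemma \ref{lemmaboundF}) via the Hilbert transform of $\psi$, obtaining $|F(\bxi)|\leq 2^{-d}\|\varphi\|_1\sum_S D_S(\bxi)$ with each $D_S$ decaying coordinatewise like $\min(1,K/|\xi_j-\xi_{S,j}|)$. The crucial step (Lemma \ref{lemmatwo}) controls $\sum_S D_S(\bxi)$ uniformly in $\bxi$ by a \emph{Hamming-distance} argument: for any $\bxi$ there is a nearest $S^*_\bxi$, and for every other $S$ the number of coordinates $j$ with $|\xi_j-\xi_{S,j}|\gtrsim d^2$ is at least $\tfrac12\mathsf{d}(S\cap\Omega_d,S^*_\bxi\cap\Omega_d)$; summing over Hamming shells gives $\sum_S D_S(\bxi)\lesssim 2^{d(1-\eta)}$. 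This converts the factor $2^{-d}$ into $2^{-\eta d}$ uniformly, and one then simply integrates $|F|^2$ over a single tube using an elementary volume bound (Lemma \ref{lemmatubebound}). No spherical geometry, no derandomization, and in particular no need to separate the $2^d$ peak \emph{directions}: what matters is their $\ell_\infty$ separation in sufficiently many coordinates, which is exactly the assumption $\eta_d\geq\eta$ on $\bw$. Your angular-cap formulation would also have to absorb the polynomial tails of the $D_S$, and it is this tail-overlap (not the direction count) that the Hamming argument handles cleanly.
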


\subsection{The lower bound}\label{sec:lb_details}

Let $\psi \in L^2(\R) \cap L^1(\R)$ with $\|\psi\|_2 = 1$, and such that its Fourier transform $\hat{\psi}$ is compactly supported in $[-K,K]$, for some $K>0$. Assume also that 
\begin{equation}
\label{eq:lb_density_spreadness}
\norm{\psi}_1 < \sqrt{2/K}~.
\end{equation}
The condition ensure that the density $\psi$ is sufficiently spread away from zero (see Remark \ref{remark:psi_condition}).
Our first objective is to establish depth separation for the approximation of $f_{r,\bw,\bv}$ under the $L^2$ metric defined by the probability density $\varphi^2$, where $\varphi:\bx\in\R^d \mapsto \prod_{j=1}^d \psi(x_j)$.
\begin{theorem}\label{theo:lb}
Let $f^{(d)} = f_{r_d,\bw_d,\bv_d}$, for some $r_d \in \R$, $\bw_d,\bv_d \in \R^d$. For a fixed $\gamma > 0$, define
$$
\tau_d \doteq \sup_{S \subseteq [d] } \norm*{ \bv_d + \bw_{d,S}}_\infty \,, \quad \Omega_d \doteq \bra*{j\in [d] \st r_d\abs*{w_{d,j}} \geq \gamma d^2} \quad \text{and} \quad \eta_d \doteq \frac{\abs*{ \Omega_d }}{d} ~,
$$
where $\bw_{d,S} \in\R^d$ is defined by $w_{d,S,i} = w_i \mathbbm{1}\bra*{i \in S}$. 
Assume that
\begin{enumerate}
\item[(i)] oscillations grow polynomially, that is $\tau_d\cdot r_d = \Theta(d^k)$ for some constant $k > 0$;
\item[(ii)] the vectors $\bw_d$ are sufficiently spread, that is $\eta_d \geq \eta$ for some $\eta > 0$ independent of $d$;
\item[(iii)] the density $\varphi^2$ is sufficiently spread, i.e. 
 $2 K \norm{\psi}_1^2< 2^{2\eta} $.
\end{enumerate}
Then there exists a constant $\alpha \in (0,1)$ (independent of $d$) such that
\begin{align}
\label{eq:mainresres}
\inf_{f_N \in \mathcal{F}_N} \norm{f^{(d)} - f_N}^2_{\varphi^2,2} \geq 1 - N \cdot \alpha^d \cdot O\parr{ d^{k+1} }  ~.
\end{align}
Notice that this lower bound is unconditional on the weights of the neurons $m_\infty(f_N)$. 
\end{theorem}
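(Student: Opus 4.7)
The plan is to adapt the Fourier-analytic depth-separation strategy of Eldan--Shamir~\citep{eldan2016power} to the piecewise-oscillatory setting, exploiting the coordinate-separable structure $\bv^T\bx + \bw^T\bx_+ = \sum_j (v_j x_j + w_j (x_j)_+)$, which makes $g := f^{(d)}\varphi$ a tensor product of $d$ univariate factors.

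\textbf{Step 1 (Plancherel and Fourier tubes).} By Plancherel, $\|f^{(d)} - f_N\|^2_{\varphi^2,2} = \|\widehat{(f^{(d)} - f_N)\varphi}\|_2^2$. A single ridge $\sigma(\bw_i^T\bx + b_i)$ has Fourier transform supported on the line $\R\bw_i$, and $\hat{\varphi}$ is supported in $[-K,K]^d$ as a product of $\hat{\psi}$'s. Convolving, $\widehat{f_N\varphi}$ lives in $\mathcal{T} := \bigcup_{i=1}^N T_i$ with $T_i := \R\bw_i + [-K,K]^d$. Using $\|g\|_2 = 1$ (since $|f^{(d)}|=1$ and $\|\varphi\|_2 = 1$) and restricting the Fourier-side error to the complement $\mathcal{T}^c$:
\begin{equation*}
\|f^{(d)} - f_N\|^2_{\varphi^2,2} \;\geq\; 1 - \sum_{i=1}^N \|\hat{g}|_{T_i}\|_2^2.
\end{equation*}
It thus suffices to show $\|\hat g|_{T_i}\|_2^2 \leq \alpha^d \cdot O(d^{k+1})$ for some universal $\alpha < 1$.

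\textbf{Step 2 (Factorization and covering).} The product form gives $\hat g(\bxi) = \prod_{j=1}^d G_j(\xi_j)$ where
\begin{equation*}
G_j(\xi) \;=\; \hat{\psi^-}(\xi - r v_j) + \hat{\psi^+}(\xi - r(v_j + w_j)), \qquad \psi^\pm := \psi\,\mathbbm{1}_{\pm x \geq 0}.
\end{equation*}
Thus $|G_j|^2$ is bimodal with peaks separated by $r|w_j|$; for $j \in \Omega_d$ this separation exceeds $\gamma d^2 \gg K$. After the WLOG rescaling $\|\bw_i\|_\infty = 1$ (the tube $T_i$ is invariant under $\bw_i \mapsto c\bw_i$), cover $T_i$ by the overlapping boxes $\{t\bw_i + [-K,K]^d : t \in \R\}$ and apply Tonelli to obtain
\begin{equation*}
\|\hat g|_{T_i}\|_2^2 \;\leq\; \int_{\R} \prod_{j=1}^d h_j(t)\, dt, \qquad h_j(t) := \int_{tw_{i,j}-K}^{tw_{i,j}+K} |G_j(u)|^2\, du.
\end{equation*}

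\textbf{Step 3 (Dichotomy, exponential decay, and integration in $t$).} For $j \in \Omega_d$, any window of length $2K$ meets at most one of the two peaks of $|G_j|^2$. Expanding $|G_j|^2$ and controlling (a) the near-peak contribution via $\|\hat{\psi^\pm}\|_\infty \leq \|\psi\|_1$ together with assumption (iii), (b) the far-peak tail via the $1/|\xi|$-decay of $\hat{\psi^\pm}$ (which follows from the jump of $\psi^\pm$ at the origin, since $\hat{\psi}$ compactly supported implies $\psi$ is smooth), and (c) the cross-term (globally vanishing by $\psi^-\overline{\psi^+} \equiv 0$) via Cauchy--Schwarz applied to (b), yields $h_j(t) \leq c_\star + O(1/d)$ uniformly in $t$ for some constant $c_\star < 1$. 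Multiplying over the $\geq \eta d$ active coordinates and bounding $h_j \leq 1$ otherwise gives $\prod_j h_j(t) \leq \alpha^d$ with $\alpha := c_\star^{\,\eta} < 1$. Integration in $t$ uses that the essential support of $h_{j^*}$ (for the normalized coordinate $|w_{i,j^*}| = 1$) has length $O(r\tau_d) = O(d^k)$ by assumption (i); the residual polynomial factor from the Tonelli overcounting and tail terms absorbs into $O(d)$, yielding $\int_\R \prod_j h_j(t)\, dt \leq \alpha^d \cdot O(d^{k+1})$. Summing over the $N$ tubes completes the proof.

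\textbf{Main obstacle.} The delicate ingredient is the per-coordinate bound $h_j(t) \leq c_\star + o(1)$ for $j \in \Omega_d$: although the cross-term $2\,\Re \int \hat{\psi^-}(\cdot - rv_j)\overline{\hat{\psi^+}(\cdot - r(v_j+w_j))}$ vanishes globally by orthogonality, its restriction to a window $I$ of length $2K$ does not, and must be bounded quantitatively using the polynomial tail decay of $\hat{\psi^\pm}$ together with the density-spreadness assumption (iii), $2K\|\psi\|_1^2 < 2^{2\eta}$, which is what secures $\alpha < 1$ after multiplying across the active coordinates. A secondary care lies in the Tonelli step of Step 2: $T_i$ is covered by overlapping boxes, so the bound overcounts by a factor depending on $\|\bw_i\|_\infty$, which after the WLOG normalization is absorbed into the $O(d^{k+1})$ polynomial factor.
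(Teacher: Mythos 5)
Your proposal is correct in substance and reaches the stated bound, but it organizes the core estimate quite differently from the paper. The paper first bounds $|F(\bxi)| = |(\hat{f}^{(d)}\ast\hat{\varphi})(\bxi)|$ \emph{pointwise} by expanding $f^{(d)}$ over the $2^d$ orthants, $F = \sum_{S\subseteq[d]}F_S(\cdot-r_d\bv_S)$, and controlling the sum with a Hamming-distance combinatorial argument (Lemmas \ref{lemmaboundF}, \ref{lemmatwo} and \ref{lem:hamm_n}: at any $\bxi$ only about $2^{d(1-\eta)}$ of the $2^d$ terms are non-negligible, each of size at most $(\|\psi\|_1/2)^d$); it then integrates this pointwise bound over a tube via a discrete covering lemma. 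You instead keep the tensor-product structure $\hat g=\prod_jG_j$, with $G_j=\hat{\psi^-}(\cdot-rv_j)+\hat{\psi^+}(\cdot-r(v_j+w_j))$, all the way into the integral, and bound $\int_{T_i}|\hat g|^2$ coordinate-by-coordinate via Tonelli, replacing the combinatorial lemma by the elementary observation that for $j\in\Omega_d$ a window of length $O(K)$ meets at most one of the two peaks of $|G_j|^2$, since their separation is $r_d|w_j|\ge\gamma d^2\gg K$. This is a genuine simplification, and if you bound the near-peak contribution by the $L^2$ mass $\int_I|\hat{\psi^\pm}|^2\le\|\psi^\pm\|_2^2=\tfrac12$ rather than by $|I|\cdot\|\hat{\psi^\pm}\|_\infty^2$, you get $h_j(t)\le\tfrac12+O(d^{-2})$ for $j\in\Omega_d$, hence $\alpha=2^{-\eta}+o(1)$, which is sharper than the paper's $2^{1-2\eta}K\|\psi\|_1^2$ and makes condition (iii) nearly superfluous in this step.

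The one place you must be careful is exactly the one you flag: getting $c_\star<1$ out of the ingredients you cite. As written, $\|\hat{\psi^\pm}\|_\infty\le\|\psi\|_1$ over a window of length $2K$ gives only $2K\|\psi\|_1^2<2^{2\eta}\le4$, which is not $<1$; you need the evenness of $\psi$ (so that $\|\psi^\pm\|_1=\|\psi\|_1/2$) to gain a factor $1/4$, and even then $(K\|\psi\|_1^2/2)^{\eta}<4^{\eta(\eta-1)}$ degenerates to $1$ at $\eta=1$ and is rescued only by the strictness of (iii). Moreover, the rigorous version of your continuous covering, namely $\mathbbm{1}_{T_i}(\bxi)\le\frac{\|\bw_i\|_\infty}{2K}\int_\R\mathbbm{1}\bra*{\|\bxi-t\bw_i\|_\infty\le2K}\,dt$, doubles the window length and hence doubles $c_\star$ under the $L^\infty$ route, breaking the bound; either use the window-length-free $L^2$-mass bound above, or a discrete covering with boxes of side $2K(1+\tfrac1{d-1})$ as in Lemma \ref{lemmatubebound}, at the cost of the $\mathrm{poly}(d)$ factor already allowed in \eqref{eq:mainresres}. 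The remaining steps --- Plancherel, the tube localization of each unit, $\|f^{(d)}\varphi\|_2=1$, the Hilbert-transform decay of $\hat{\psi^\pm}$ outside $[-K,K]$, and the $O(r_d\tau_d)=O(d^k)$ length of the essential support in $t$ --- coincide with the paper's.
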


The proof follows a similar strategy as in the work \citep{eldan2016power}. The approximation error can be expressed in the Fourier domain as 
$$
\norm{f_{r_d,\bw_d,\bv_d} - f_N}^2_{\varphi^2,2} = \| f_{r_d,\bw_d,\bv_d} \cdot \varphi - f_N \cdot \varphi\|_2^2 = \norm{ \hat{f}_{r_d,\bw_d,\bv_d} \ast \hat{\varphi} - \hat{f_N} \ast \hat{\varphi} }_2^2~.
$$
Thanks to the assumptions, the target function $f_{r_d,\bw_d,\bv_d}$ satisfies a key property, namely that its Fourier transform has its energy sufficiently spread in the high-frequencies, after the convolution by $\hat{\varphi}$. Such frequency spread is caused by the shattering of the first ReLU layer, which effectively creates $\Theta(2^{\eta d})$ different frequencies. The piece-wise structure arising from the ReLU can be handled in the Fourier domain by the Hilbert transform of the function $\psi$, which has sufficient decay thanks to the assumptions. Noticing that $\norm{\hat{f}_{r_d,\bw_d,\bv_d} \ast \hat{\varphi}}_2 = 1$, this is formalized in the following.

\begin{lemma}[Informal]\label{lemma:decay_two_layer_network}
It holds that
$$
\abs*{\parr*{\hat{f}_{r_d,\bw_d,\bv_d} \ast \hat{\varphi}}(\bxi)} \lesssim 2^{-\eta d } \norm{\varphi}_1 \norm{\bxi}_\infty^{-1} \qquad \text{for} \; \bxi \gtrsim \mathrm{poly}(d)~.
$$
\end{lemma}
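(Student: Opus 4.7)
I would reduce the $d$-dimensional Fourier estimate to a product of one-dimensional factors by exploiting the complete separability of $f_{r_d,\bw_d,\bv_d}\cdot\varphi$. Writing $\bv_d^T\bx+\bw_d^T\bx_+ = \sum_j (v_{d,j} x_j + w_{d,j}(x_j)_+)$ and $\varphi(\bx) = \prod_j\psi(x_j)$, the product factorises as $\prod_{j=1}^{d} g_j(x_j)$ with $g_j(x)\doteq \psi(x)\,e^{2\pi i r_d(v_{d,j} x + w_{d,j} x_+)}$, so $(\hat f_{r_d,\bw_d,\bv_d}\ast\hat\varphi)(\bxi)=\mathcal{F}(f_{r_d,\bw_d,\bv_d}\varphi)(\bxi)=\prod_j \hat g_j(\xi_j)$, and it remains to bound one factor at a time.

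\textbf{One-dimensional decomposition.} Splitting $\mathbb{R}=(-\infty,0)\cup[0,\infty)$ and applying the modulation theorem gives
\begin{equation*}
\hat g_j(\xi) = h^{-}(\xi - r_d v_{d,j}) + h^{+}(\xi - r_d(v_{d,j}+w_{d,j})),
\end{equation*}
where $h^\pm \doteq \mathcal{F}(\psi\mathbbm{1}_\pm) = \tfrac12\hat\psi \mp \tfrac{i}{2}H\hat\psi$ with $H$ the Hilbert transform. Because $\mathrm{supp}(\hat\psi)\subset [-K,K]$, the principal value in $H\hat\psi(\eta)$ becomes a regular integral whenever $|\eta|>K$, yielding the non-singular decay estimate $|h^\pm(\eta)| \leq K\|\psi\|_1/[\pi(|\eta|-K)]$; otherwise we use the trivial $|h^\pm(\eta)|\leq \|\psi\mathbbm{1}_\pm\|_1$.

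\textbf{Per-coordinate bound on $\Omega_d$.} For $j\in\Omega_d$, assumption (ii) gives $r_d|w_{d,j}|\geq \gamma d^{2}\gg K$, so the two shift centres $r_d v_{d,j}$ and $r_d(v_{d,j}+w_{d,j})$ are separated by at least $\gamma d^2$. For any $\xi_j$, at most one of $|\xi_j-r_d v_{d,j}|\leq K$ and $|\xi_j-r_d(v_{d,j}+w_{d,j})|\leq K$ can hold; whichever shift is far from $\xi_j$ contributes only an $O(K\|\psi\|_1/d^2)$ Hilbert-tail term. Using $\|\psi\mathbbm{1}_+\|_1 + \|\psi\mathbbm{1}_-\|_1=\|\psi\|_1$ together with the symmetry of the cutoff, one arrives at the sharpened bound
\begin{equation*}
|\hat g_j(\xi_j)| \leq \tfrac12\|\psi\|_1 + O(1/d^2) \qquad (j\in\Omega_d).
\end{equation*}

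\textbf{Assembly and the $\|\bxi\|_\infty^{-1}$ factor.} For $j\notin\Omega_d$ use the trivial $|\hat g_j(\xi_j)|\leq \|g_j\|_1=\|\psi\|_1$. Let $j^\star\in\arg\max_j|\xi_j|$; the hypothesis $\bxi\gtrsim\mathrm{poly}(d)$ combined with assumption (i) ($r_d|v_{d,j}|,\,r_d|v_{d,j}+w_{d,j}|=O(d^k)$) ensures that both $|\xi_{j^\star} - r_d v_{d,j^\star}|$ and $|\xi_{j^\star} - r_d(v_{d,j^\star}+w_{d,j^\star})|$ are of order $\|\bxi\|_\infty$, so the decay estimate on $h^\pm$ gives $|\hat g_{j^\star}(\xi_{j^\star})|\lesssim K\|\psi\|_1/\|\bxi\|_\infty$. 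Replacing the $j^\star$-th factor in the product and using $|\Omega_d|\geq \eta d$ yields
\begin{equation*}
\prod_{j=1}^{d}|\hat g_j(\xi_j)| \lesssim \frac{K\|\psi\|_1}{\|\bxi\|_\infty}\cdot\parr*{\tfrac{\|\psi\|_1}{2}}^{|\Omega_d|}\|\psi\|_1^{d-1-|\Omega_d|} \lesssim 2^{-\eta d}\,\|\varphi\|_1\,\|\bxi\|_\infty^{-1},
\end{equation*}
as claimed, using $\|\varphi\|_1 = \|\psi\|_1^d$.

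\textbf{Main obstacle.} The delicate step is securing the factor $\tfrac12$ per coordinate of $\Omega_d$: $H\hat\psi$ need not be pointwise bounded for arbitrary $\psi$, so one must split into cases (both arguments of $h^\pm$ outside $[-K,K]$; exactly one inside; the borderline region $|\xi_j - \text{shift}|\approx K$) and combine them quantitatively. Assumption (iii), $2K\|\psi\|_1^2<2^{2\eta}$, provides the quantitative slack that guarantees the per-coordinate bounds compose correctly across the $|\Omega_d|$ coordinates in $\Omega_d$, so that the geometric gain $2^{-\eta d}$ dominates the polynomial-in-$d$ losses accumulated in this case analysis.
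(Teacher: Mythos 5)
Your proof is correct, and it reorganizes the paper's argument in a genuinely cleaner way for this fully separable setting. The paper first expands $f_{r_d,\bw_d,\bv_d}\varphi$ over the $2^d$ orthants, writes $F=\sum_{S\subseteq[d]}F_S(\bxi-\bxi_S)$ with $\bxi_S=r_d(\bv_d+\bI_S\bw_d)$, bounds each $|F_S|$ by a product of the same one-dimensional factors $\tfrac12\norm{\psi}_1\min(1,2K/(\pi(|t|-K)_+))$ that you use, and then needs a dedicated combinatorial step (Lemmas \ref{lemmatwo} and \ref{lem:hamm_n}) showing that, because the frequencies $\{\bxi_S\}$ are separated in Hamming distance on $\Omega_d$, the sum over $S$ contributes only $O(2^{d(1-\eta)})$ effective terms; combined with the prefactor $2^{-d}$ this yields the $2^{-\eta d}$ gain. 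You instead keep the product form $\prod_j\hat g_j(\xi_j)=\prod_j\bigl(h^-(\xi_j-r_dv_{d,j})+h^+(\xi_j-r_d(v_{d,j}+w_{d,j}))\bigr)$ and observe that for $j\in\Omega_d$ the two shifts are at least $\gamma d^2$ apart, so at most one of the two summands escapes the Hilbert-tail bound and $|\hat g_j(\xi_j)|\leq\tfrac12\norm{\psi}_1+O(d^{-2})$; the gain $2^{-|\Omega_d|}\leq 2^{-\eta d}$ then falls out coordinate by coordinate with no counting argument. The two computations are formally equivalent (expanding your product over the choice of $h^\pm$ in each factor recovers the paper's sum over $S$), but your version bypasses the heaviest technical lemma of the paper's proof. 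What the paper's organization buys is generality: the orthant decomposition, unlike coordinate-wise factorization, survives a non-separable first layer $\bx\mapsto(\bU\bx+\bb)_+$, which is exactly the extension the authors flag in a remark. Two small corrections: the separation $r_d|w_{d,j}|\geq\gamma d^2$ for $j\in\Omega_d$ is the \emph{definition} of $\Omega_d$ (assumption (ii) only guarantees $|\Omega_d|\geq\eta d$); and assumption (iii) plays no role in this lemma --- it is needed only later, when the pointwise bound is squared and integrated over a tube of volume of order $(2K)^d$, to make $(2^{1-2\eta}K\norm{\psi}_1^2)^d$ exponentially small. Your accumulated relative loss $(1+O(d^{-2}))^{d}$ is harmless on its own, since $\norm{\psi}_1\geq(2K)^{-1/2}$ is bounded below, so the ``main obstacle'' you describe does not in fact require assumption (iii).
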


On the other hand, since $\hat{\varphi}$ is compactly supported and the Fourier transform of a single-unit network is localised in a frequency ray, the Fourier transform of $f_{r_d,\bw_d,\bv_d} \cdot \varphi$ is localised in a union of $N$ tubes, of the form $T_\balpha = \mathrm{span}(\bra*{\balpha}) + [-K,K]^d$. 
This implies that 
$$
\inf_{f_N \in \mathcal{F}_N} \norm{f_{r_d,\bw_d,\bv_d} - f_N}^2_{\varphi^2,2} \geq \inf_{f_N \in \mathcal{T}_{(N)}} \norm{f_{r_d,\bw_d,\bv_d} - f_N}^2_{\varphi^2,2}
$$ 
where $\mathcal{T}_{(N)}$ denotes the set of $L^2$ functions such that their Fourier transform is supported on the union of $N$ tubes $T_{\balpha_1},\dots,T_{\balpha_N}$ as above, for some arbitrary $\balpha_1, \dots,\balpha_N \in \R^d$. Thanks to Plancherel's identity, and since $\norm{f_{r_d,\bw_d,\bv_d}}_{\varphi^2,2} = 1$, it further holds that $$
\inf_{f_N \in \mathcal{T}_{(N)}} \norm{f_{r_d,\bw_d,\bv_d} - f_N}^2_{\varphi^2,2} \geq 1 - N \cdot \sup_{\balpha\in\S} \norm*{ \mathbbm{1}_{T_\balpha}\cdot \parr*{ \hat{f}_{r_d,\bw_d,\bv_d}\ast\hat{\varphi} } }^2_{2}~,
$$
where $\mathbbm{1}_{T_\balpha}$ denotes the indicator function of $T_\balpha$. Lemma \ref{lemma:decay_two_layer_network} can then be used to show that such projections are exponentially (in $d$) small, which implies equation \eqref{mainresres}.
The detailed proof is deferred to section \ref{sec:proof:theo:lb}.



\begin{remark}
Theorem \ref{theo:lb} asks for two main conditions to hold. First, the magnitude of oscillations of the objective function (parametrised by $r_d$) must grow at least polynomially with $d$, similarly to the assumptions in the works \citep{eldan2016power} and \citep{daniely2017depth}. Second, the data distribution $\mu$ with density $\varphi^2$ should be heavy-tailed, in order for its Fourier transform to be sufficiently localised. 
When $r_d$ does not grow fast enough with $d$, the energy starts piling up at the low frequencies, creating an important roadblock to establish approximation lower-bounds, and leaving open the possibility of efficient shallow approximation. Similarly, when $\mu$ concentrates too quickly, the proof strategy also fails, due to the fact that in that case $\hat{\varphi}$ is too spread in the Fourier domain, creating full overlap of the energies.
\end{remark}

\begin{remark}\label{remark:psi_condition}
The admissibility condition \eqref{eq:lb_density_spreadness} is necessary  
since $\eta \leq 1$ by definition. Notice that 
$$
1 = \norm{\psi}_2^2 = \norm{\hat{\psi}}_2^2 \leq (2K) \norm{\hat{\psi}}_\infty^2 \leq (2K) \norm{\psi}_1^2
$$
and therefore condition \eqref{eq:lb_density_spreadness} can be considered as a requirement on the Fourier transform of $\psi$ not being too concentrated in the origin.
The choice $\psi(t) = \sqrt{3/2}\,\text{sinc}^2(\pi t)$ corresponds to $K=1$, $\| \psi \|_1 = \sqrt{3/2}$ 
and $\| \psi \|_2 = 1$, which verifies (\ref{eq:lb_density_spreadness}). In that case, from condition (ii) we need
$\eta > \frac{\log_2 3}{2}\approx 0.79~.$  
However, the choice $\psi(t) = C\text{sinc}(\pi t)$ (the equivalent separable version of the of density considered in \citep{eldan2016power}) is not admissible, since $\psi$ is not in $L^1$.  
The lower bound is optimized by finding compactly supported windows with an optimal $L^1$ to $L^2$ ratio of their Fourier transforms. 
\end{remark}



\begin{remark}
The theorem considers a separable ReLU transform $\bx \mapsto \bx_+$, combined with a 
separable data distribution $\mu$ with density $\varphi^2$. One could expect a similar 
lower bound to apply in the more general case of a layer of the form $\bx \mapsto (\mathbf{U} \bx + \mathbf{b})_+$, $\mathbf{U}\in \R^{d' \times d}, \mathbf{b} \in \R^{d'}$. Such general case replaces the Hilbert transform of $\psi$ with the Fourier transform of indicators of convex polytopes, which has been used in the context of ReLU networks to characterize spectral properties \citep{rahaman2019spectral}. 
\end{remark}

\begin{example}\label{example:ds_explicit}
We give an explicit example of a family of function $\bra*{f^{(d)} : \R^d\to\R}$ which satisfy the assumptions of Theorem \ref{theo:lb}. Consider the functions
$$
f^{(d)}(\bx) = \exp \parr*{ 2\pi i d^2 \sum_{k=1}^d  \max\bra*{0, x_k} }\;.
$$
Then, if $\mu_d$ is the product probability measure defined by the density in Remark \ref{remark:psi_condition}, that is
$$
\mu_d(d\bx) = \prod_{k=1}^d\parq*{ \frac{3}{2} \mathrm{sinc}^4(\pi x_k)\,dx_k}\;,
$$
then it holds that
$$
\inf_{f_N \in \mathcal{F}_N } \norm*{f_N(\bx) - f^{(d)}(\bx) }_{\mu_d,2}^2 \geq 1 - 1300N \cdot d^2 \cdot \parr*{0.75}^d~.
$$
For example, this implies that
$$
\inf_{f_N \in \mathcal{F}_N } \norm*{f_N(\bx) - f^{(d)}(\bx) }_{\mu_d,2} \geq \frac{1}{2}
$$
unless
$$
N \geq  \frac{1.3^d}{10^4 d^{3}} \;.
$$
The numbers are obtained by explicitly tracking the constant in the proof of Theorem \ref{theo:lb} (see section \ref{sec:proof:theo:lb} for more details). Finally, notice that the functions $f^{(d)}$ are not radial. Indeed, they show different behaviour over each orthant, thanks to the ReLU layer. On the other hand, radial functions would behave equally over any orthant. A similar reasoning would generally hold for radiality in certain directions of the input.
\end{example}


\subsection{The upper bound}

According to the definition of neural networks we gave in section \ref{sec:neural_nets}, the function $f_{r,\bw,\bv}$ is naturally a two-hidden-layer neural network. Although, while there are cases of sinusoidal activations being used in practice, activations such as ReLU or sigmoid are more relevant to practical applications. The following theorem, proved in section \ref{sec:proof:theo:depth-sep}, shows that we can efficiently represent the function $f_{r,\bw,\bv}$ in the hypothesis of the Theorem \ref{theo:lb} as a two-hidden-layer neural network with fixed activation, such as the ReLU or the sigmoid. The main technical difference with Lemma \ref{lemma:change_activation} is that the result is proved for approximation w.r.t. the probability measure with density $\varphi^2$ introduced above.

\begin{theorem}\label{theo:depth-sep}
Let $\sigma$ be an activation satisfying Assumption \ref{ass:activation}. Assume that there exists a constant $k\geq 1$ such that  $m_\infty(f_{r_d,\bv_d,\bw_d}) \leq O( d^k)$ and assume that $\psi$ is such that $|\psi(x)| = O(|x|^{-1})$.
Then, for every $\epsilon >0$, there exists $f_N \in \mathcal{F}_N^\sigma$ with
$$
N + m_\infty(f_N) \leq O\parr*{d^{2(1+k)} \epsilon^{-3/2}}
\quad
\text{such that}
\quad
\norm{f_N - f_{r_d,\bw_d,\bv_d} }_{\varphi^2,2}^2 \leq  \epsilon~.
$$
\end{theorem}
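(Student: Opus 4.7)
The plan is to realize $f_{r_d,\bw_d,\bv_d}$ as a compositional object with three ingredients --- the coordinatewise ReLU, the affine readout $t(\bx) = \bv_d^T\bx + \bw_d^T\bx_+$, and the scalar oscillation $\sigma_{r_d}(t) = e^{2\pi i r_d t}$ --- and to simulate the first and third ingredients by $\sigma$-subnetworks via Assumption~\ref{ass:activation}(2). Since $\varphi^2$ is heavy-tailed, I first reduce to uniform approximation on a cube $\Omega_R = [-R,R]^d$. Writing $\varphi^2(\bx) = \prod_j \psi(x_j)^2$, the assumption $|\psi(x)| = O(|x|^{-1})$ yields $\int_{|x|>R}\psi^2 = O(R^{-1})$; a union bound then gives $\Pr_{\varphi^2}[\bx \notin \Omega_R] = O(d/R)$. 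Because the network $f_N$ will be built uniformly bounded and $|f_{r_d,\bw_d,\bv_d}| = 1$, the tail contribution to $\|f_{r_d,\bw_d,\bv_d}-f_N\|_{\varphi^2,2}^2$ is $O(d/R)$; choosing $R = \Theta(d\epsilon^{-1})$ makes this tail $\lesssim \epsilon$ and reduces the problem to producing a uniform $O(\sqrt{\epsilon})$-approximation on $\Omega_R$.

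On $\Omega_R$ I build a two-hidden-layer network as follows. In the first hidden layer, for each coordinate $j \in [d]$ I apply Assumption~\ref{ass:activation}(2) to the clipped ReLU $x \mapsto \min(x_+, R)$ and to the clipped identity $x \mapsto \mathrm{clip}_R(x)$ (both $1$-Lipschitz and constant outside $[-R,R]$) to obtain scalar $\sigma$-subnetworks $\phi_j^+$ and $\phi_j$ of size and weight norm $O(R/\delta_1)$, approximating $(x_j)_+$ and $x_j$ on $[-R,R]$ to error $\delta_1$. The second hidden layer first forms the linear combination $\tilde t(\bx) = \sum_j v_{d,j}\phi_j(x_j) + \sum_j w_{d,j}\phi_j^+(x_j)$, for which $|\tilde t - t| \leq (\|\bv_d\|_1 + \|\bw_d\|_1)\delta_1 = O(d^{k+1})\delta_1$ and $|t(\bx)| \leq C_t := (\|\bv_d\|_1 + \|\bw_d\|_1)R = O(d^{k+1}R)$ on $\Omega_R$; it then applies a $\sigma$-approximation of $\mathrm{Re}\,\sigma_{r_d}$ and $\mathrm{Im}\,\sigma_{r_d}$ restricted to an enlargement of $[-C_t, C_t]$. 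These clipped trigonometric targets are $O(r_d)$-Lipschitz and constant outside, so Assumption~\ref{ass:activation}(2) yields subnetworks of size and weight norm $O(r_d C_t/\delta_2)$ with uniform error $\delta_2$. A final complex linear output combines them.

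Error propagation on $\Omega_R$ uses the Lipschitz constant $2\pi r_d$ of $\sigma_{r_d}$: the pointwise error is bounded by $2\pi r_d |\tilde t - t| + \delta_2 = O(r_d d^{k+1})\delta_1 + \delta_2 = O(d^{2k+1})\delta_1 + \delta_2$. Choosing $\delta_1 = \Theta(d^{-(2k+1)}\sqrt{\epsilon})$ and $\delta_2 = \Theta(\sqrt{\epsilon})$ delivers uniform error $O(\sqrt{\epsilon})$ on $\Omega_R$, hence squared $L^2(\varphi^2)$ error $O(\epsilon)$ after recombining the tail bound. Substituting into the per-layer size and weight estimates and collecting yields the announced bound $N(f_N) + m_\infty(f_N) \leq O(d^{2(1+k)}\epsilon^{-3/2})$.

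The main obstacle is the interplay between the tail decay of $\varphi^2$ and the Lipschitz blow-up of the outer activation: the heavy tails force $R$ to grow like $\epsilon^{-1}$, which enlarges the range $[-C_t, C_t]$ seen by the trigonometric target by the same factor, and this must be absorbed by a correspondingly small first-layer accuracy $\delta_1$ because the $O(r_d) = O(d^k)$ Lipschitz constant of $\sigma_{r_d}$ multiplies the inner error. Balancing $(R, \delta_1, \delta_2)$ so that the tail, composition, and total-size budgets all hit the announced exponents --- and doing so for arbitrary admissible $\sigma$ rather than ReLU only --- is the delicate quantitative step.
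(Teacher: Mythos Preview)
Your construction is sound and in the same spirit as the paper's, but it is more elaborate than what the paper actually does, and your final size accounting overshoots the announced exponent in $d$.

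The paper's argument is shorter: it does \emph{not} replace the inner ReLU layer at all. It simply writes
\[
f_N(\bx)=\sum_{k=1}^N \alpha_k\,\sigma\!\parr*{\,r_d(\bv_d^T\bx+\bw_d^T\bx_+)-\beta_k\,},
\]
i.e.\ it keeps the coordinatewise ReLU exact in the first hidden layer and only invokes Assumption~\ref{ass:activation}(2) once, to approximate the $(2\pi r_d)$-Lipschitz map $t\mapsto e^{2\pi i r_d t}$ on an interval $[-Q_d,Q_d]$. The same truncation you use ($\tilde Q_d\asymp d/\epsilon$) then balances the in-box error $O((Q_dr_d/N)^2)$ against the tail mass $O(d/\tilde Q_d)$, yielding $\|f_N-f_{r_d,\bw_d,\bv_d}\|_{\varphi^2,2}^2\lesssim (d\gamma_d r_d/N)^{2/3}$ with $\gamma_d=\|\bv_d\|_1+\|\bw_d\|_1$, hence $N=O(d\gamma_d r_d\,\epsilon^{-3/2})=O(d^{2(k+1)}\epsilon^{-3/2})$. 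Because the first layer is exact ReLU of width $O(d)$, it never enters the budget.

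Your version additionally approximates the first layer by $\sigma$-subnetworks, which is what forces the small inner accuracy $\delta_1=\Theta(d^{-(2k+1)}\sqrt{\epsilon})$. With $R=\Theta(d/\epsilon)$, the first hidden layer then has width $\Theta(dR/\delta_1)=\Theta(d^{2k+3}\epsilon^{-3/2})$, which is a factor $d$ larger than the claimed $O(d^{2(k+1)}\epsilon^{-3/2})$; the second layer already matches the paper's rate. So your approach is correct and arguably more faithful to the reading ``all activations equal $\sigma$'', but the extra inner approximation costs you one power of $d$, and the last sentence of your proposal (``yields the announced bound'') is not supported by your own parameter choices. If you want the stated exponent, either keep the first layer as exact ReLU as the paper does, or accept the looser $O(d^{2k+3}\epsilon^{-3/2})$.
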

Theorems \ref{theo:lb} and  \ref{theo:depth-sep} therefore estabilish a depth separation result. If $f^{(d)} = f_{r_d,\bw_d,\bv_d}$ are defined  with $r_d,\bw_d,\bv_d$ satisfying the assumptions of both theorems (that is, they satisfy assumptions (i)-(ii)-(iii) of Theorem \ref{theo:lb} with $\tau_d\cdot r_d  =\Theta(d^k)$), then Theorem \ref{theo:lb} says that $\bra*{f^{(d)}}_{d}$ is not fixed-threshold approximable by one-hidden-layer networks, while Theorem \ref{theo:depth-sep} says that the sequence is universally approximable by two-hidden-layer networks with a fixed activation satisfying Assumption \ref{ass:activation}. For example, the family of functions considered in Example  \ref{example:ds_explicit} satisfies such assumptions.

We thus identify two key aspects responsible for such depth separation: heavy-tailed data and oscillations growing with dimension. In the next sections we want to understand how necessary these two conditions are. 
The next section shows that if these two condition do not hold anymore, then a lower bound such as the one in Theorem \ref{theo:lb} is not achievable; more specifically we show that the objective function is fixed-threshold approximable by one-hidden-layer networks. 

\section{Approximation of deep networks by shallow ones}\label{sec:ub}

\label{sec:poly(d)ub}

In this section, we show that any deep neural network $f$ (which include the target functions considered in the previous section) can be approximated by shallow ones at a rate which is polynomial in $d$, as long as the rate of oscillation in the inner layers of $f$ is constant in $d$ and the metric is concentrated in a ball of constant radius. We start by reporting the result in a general form for two-hidden-layer networks and we discuss some consequences and extensions afterwards. 

Consider a family of two-hidden-layers neural network $\bra{f^{(d)}:K_d \subset \R^d \to \C}$ of the form 
\begin{equation}\label{eq:3_layer_net_for_poly(d)}
f^{(d)}: \bx\in\R^d \mapsto \bgamma_d^T \bg\parr*{\bW_d^T \bh\parr*{\bU_d^T\bx}} \in \C  ~,  
\end{equation}
where $\bh = \bh^{(d)}:\R^{p_d} \to \R^{p_d}$ and $\bg = \bg^{(d)}:\R^{o_d} \to \R^{o_d}$ are, respectively, component-wise $1$-Lipschitz and $(1,\alpha)$-Holder\footnote{We say that a function $g:\R\to\R$ is $(1,\alpha)$-Holder if it holds that $\abs*{g(x) - g(y)} \leq \abs*{x-y}^\alpha$ for all $x,y\in\R$.} activation functions, and $\bU_d
\in\R^{d\times p_d}$, $\bW_d
\in \R^{p_d\times o_d}$, $\bgamma_d\in\C^{o_d}$. We wish to  approximate $f^{(d)}$ by one-hidden-layer neural networks with a given activation.

\begin{theorem}\label{theo:approx_shallow}
Assume that $\mathrm{diam}(K_d) = O(1)$ 
and that the networks $f^{(d)}$ have $\ell^1$ bounded weights, that is $m_1(f^{(d)}) = O(1)$. 
Then, for every activation $\sigma$ satisfying Assumption \ref{ass:activation}.2 and every $\epsilon \in (0,1)$ it holds that
\begin{equation}\label{eq:inf_generic_informal}
\inf_{f_N^\sigma \in \mathcal{F}_N^\sigma} \norm{f^{(d)} - f_N^\sigma}_{K,\infty} \leq \epsilon \quad \text{for some } N \leq \exp\parr*{O\parr*{\epsilon^{-1-2/\alpha} \log \parr*{p_d/\epsilon} } }~.
\end{equation}
Moreover, it is possible to choose $f^\sigma_N$ attaining \eqref{eq:inf_generic_informal} with $m_\infty\parr*{f^\sigma_N}$ satisfying a bound similar to the one on $N$, for example $m_\infty\parr*{f^\sigma_N} \leq (1+N^2)$. 
\end{theorem}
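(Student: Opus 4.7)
The plan is to approximate $f^{(d)}$ as a sum of complex exponentials $e^{2\pi i \bxi^T \bx / (2R)}$ in $\bx$ (which are themselves ridge functions), and then to approximate each such exponential by a $\sigma$-shallow network via Assumption~\ref{ass:activation}.2. The key structural observation is the collapse identity $e^{ia}\cdot e^{ib}=e^{i(a+b)}$: a product of exponentials in different directions becomes a single exponential in the summed direction. This is exactly what flattens the two-level composition $\bg \circ \bW^T \bh \circ \bU^T$ into a sum of single ridge functions.

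First, reduce to bounded ranges. Since $m_1(f^{(d)}) = O(1)$ and $\mathrm{diam}(K_d) = O(1)$, every pre-activation $u_l^T \bx$ and $\bw_k^T \bh(\bU^T \bx)$ lies in some fixed interval $[-R,R]$ with $R=O(1)$. On this interval, approximate the $(1,\alpha)$-H\"older outer activation $g$ by a polynomial $p_g(y)=\sum_{j=0}^{q_g} c_j y^j$ of degree $q_g = O(\epsilon^{-1/\alpha})$ (Jackson's theorem) in a basis (e.g.\ Chebyshev) with $\ell^1$-bounded coefficients, and approximate the $1$-Lipschitz inner activation $h$ by a trigonometric polynomial $\tilde h(t)=\sum_{|m|\leq M} a_m\, e^{2\pi i m t/(2R)}$ of degree $M$. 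Since Markov's inequality gives $\|p_g'\|_\infty = O(q_g^2)$, one picks $M = O(\epsilon^{-(1+2/\alpha)})$ so that the inner approximation error, once amplified by the outer polynomial, stays below $\epsilon$. Substituting and expanding via the multinomial theorem then yields
\begin{equation*}
p_g\!\left(\bw_k^T \tilde\bh(\bU^T \bx)\right) \;=\; \sum_{j=0}^{q_g} c_j \sum_{(l_1,m_1),\dots,(l_j,m_j)} \left(\prod_{i=1}^j w_{k,l_i}\, a_{m_i}\right)\exp\!\left(\tfrac{\pi i}{R}\,\Big(\sum_{i=1}^j m_i\, u_{l_i}\Big)^{\!T}\bx\right),
\end{equation*}
a sum of at most $(p_d M)^{q_g}$ ridge terms $e^{2\pi i \bxi^T \bx/(2R)}$ in directions $\bxi = \sum_i m_i u_{l_i}$, where the collapse of the products is the crucial step.

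Finally, since $|\bxi^T\bx| \leq \|\bxi\|_1 \|\bx\|_\infty = O(q_g M)$ on $K_d$ and each exponential is $1$-Lipschitz in its argument, Assumption~\ref{ass:activation}.2 supplies, for each term, a $\sigma$-shallow approximation of size $O(q_g M/\epsilon_*)$ (treating the cosine and sine parts separately, both $1$-Lipschitz). Choosing $\epsilon_*$ inversely proportional to the number of terms and summing yields a total shallow network of size $N \leq \exp\!\big(O(\epsilon^{-1-2/\alpha}\log(p_d/\epsilon))\big)$, while the bound $m_\infty \leq 1+N^2$ follows from $\|\bxi\|_\infty \leq O(q_g M)$, which is polynomial in $N$. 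The main technical obstacle is the careful error bookkeeping: the $O(q_g^2)$-amplification through $\|p_g'\|_\infty$ by Markov's inequality is what forces $M \sim \epsilon^{-(1+2/\alpha)}$ and ultimately drives the $\epsilon^{-1-2/\alpha}$ exponent in the neuron count; a secondary point is using a polynomial basis for $p_g$ (e.g.\ Chebyshev) so that the expansion coefficients stay $\ell^1$-controlled rather than blowing up combinatorially.
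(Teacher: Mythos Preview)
Your proposal is correct and follows essentially the same strategy as the paper: approximate the inner activations by trigonometric polynomials, the outer activations by algebraic polynomials, expand via the multinomial theorem, collapse products of exponentials into single ridge exponentials, and finally replace each exponential by a $\sigma$-network using Assumption~\ref{ass:activation}.2.

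The one substantive difference is in how the outer polynomial is chosen and how the error is propagated. The paper deliberately uses a Bernstein-type construction (its Lemma~\ref{lemma:safran}) for $g$, accepting the worse degree $m=O(\epsilon^{-1-2/\alpha})$ in exchange for an explicit monomial-coefficient bound $|r_k|\le 2^m$, which makes the final $\ell^1$ bookkeeping of the Fourier coefficients immediate. You instead use Jackson's theorem to get degree $q_g=O(\epsilon^{-1/\alpha})$ and handle the error amplification through Markov's inequality $\|p_g'\|_\infty=O(q_g^2)$, which forces the inner trigonometric degree to be $M=O(\epsilon^{-1-2/\alpha})$. In effect, the ``large'' degree moves from the outer polynomial (paper) to the inner trigonometric polynomial (you). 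Since the number of Fourier terms is $(p_dM)^{q_g}$, your organization actually gives $\log N = O(\epsilon^{-1/\alpha}\log(p_d/\epsilon))$, slightly \emph{sharper} than the theorem's stated $\epsilon^{-1-2/\alpha}$ exponent; you are simply quoting the weaker bound required. The one place to be careful is the remark about Chebyshev coefficients: your displayed multinomial expansion is in the monomial basis, and converting Chebyshev to monomials costs a factor $2^{q_g}$ in the coefficients. This only adds $O(q_g)$ to $\log N$ and is harmless, but it should be acknowledged rather than swept under ``$\ell^1$-bounded''.
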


The proof is constructive and based on the following observation. Consider the case where $o_d = 1$, $\gamma_d = 1$, $p_d = p$ and $g(x) = x^r$ some positive integer $r$. If $h_k(x) = e^{ix}$ for all $k \in [p]$, then the function $f = f^{(d)}$ at \eqref{eq:3_layer_net_for_poly(d)} has form
$$
f(\bx) = \parr*{ \sum_{k=1}^N w_k e^{i \bu_k^T \bx} }^r
$$
for some $w\in \R^N$, $\bu_k\in \R^d$, where $N=p$.
By expanding the power we can write
$$
f(\bx) = \sum_{j_1 + \cdots + j_N = r} \binom{r}{j_1 \cdots j_N} \parr*{w_1^{j_1}\cdots w_N^{j_N}} e^{i\parr*{\sum_{h=1}^N j_h \bw_h}^T\bx} ~,
$$
that is a formulation of $f$ as a one-hidden-layer network with activation $\sigma_1(t) = e^{2\pi i t}$ (in the following we refer to this type of networks as \emph{shallow Fourier networks}) and a number of units that scales as $N^r$. Since both polynomials and trigonometric polynomials are universal approximators, with well known convergence rates, in the general case one can proceed as follows. Each of the non-linearities applied to the first hidden layer can be approximated by a trigonometric polynomial at a polynomial rate on the interval of interest. Similarly, every non-linearity applied to the second hidden layer can be approximated by a polynomial at a linear (in the degree of the polynomial) rate on the interval of interest. Assuming for simplicity that both rates behave as $\epsilon^{-1}$, where $\epsilon >0$ denotes the approximation error, the composition of the two approximation following the structure of the target network results in a shallow Fourier network (that is with activation $\sigma_1(t) = e^{2\pi i t}$) whose size $N$ behaves, roughly speaking, as 
$$
N \simeq \Theta\parr*{p \epsilon^{-2}}^{\epsilon^{-1}} ~.
$$
Moreover, it is also possible to control the value of the coefficients appearing in the final approximation. With this, we can approximate each summand in the shallow Fourier network by a one-hidden-layer network with activation $\sigma$ with a controlled number of units, thanks to Assumption \ref{ass:activation}.2. A more detailed statement and a formal proof are reported in appendix \ref{app:ub}.

In essence, in the Theorem \ref{theo:approx_shallow}, we show that it is possible to approximate a two-hidden-layer neural network with $\mathrm{constant}(d)$ oscillations at a $\mathrm{poly}(d)$ rate over a compact set of $\mathrm{constant}(d)$ radius. On the other hand, it easy to show that it is also possible to obtain approximation at a $\mathrm{poly}(\epsilon^{-1})$ rate (see section \ref{sec:fixed_d_app}), for fixed $d$. Finally, existing results in the literature (see  \citep{safran2019depth}) show that universal approximation is not possible, the counterexample being essentially a radial function. 

Interestingly, the upper bound in Theorem \ref{theo:approx_shallow} does not depend on the number of units in the second layer of the objective function. This parameter is \emph{hidden} in the control we impose on the $\ell^1$ norm of the objective weights. 
The proof technique of this upper bound highlights how the difficulty of approximating at $\mathrm{poly}(d,\epsilon^{-1})$ rate stems from the high-energy of the second layer, which requires the shallow network used for approximation to have a (potentially) exponential (in $d$) number of directions. Notice that the lower bound in Theorem \ref{theo:lb} actually tells that the function is not fixed-threshold approximable.
High oscillations in the lower bound \eqref{eq:mainresres} essentially ensure that an exponential (in $d$) number of neurons are necessary. An open question is then whether a low-decaying measure is, in general, necessary for such a result to hold.

Expanding on the proof technique above, it is possible to extend the result of Theorem \ref{theo:approx_shallow} to approximation of $L$-hidden-layers networks by shallow ones, which gives a rate scaling as $\exp\parr{O\parr{\epsilon^{-L} \log \parr*{p/\epsilon} } }$. 

\begin{theorem}\label{theo:approx_deep_shallow}
Let $f^{(d)}$ as in  \eqref{eq:neural_network}, with $O(1)$-Lipschitz activations, first hidden layer width $d_1 = p_d$, depth $L_d = L$ and bounded weights, that is $m_1(f^{(d)}) = O(1)$. Then  for every $\epsilon>0$ there exists a shallow Fourier network $f_N \in \mathcal{F}_N^\sigma$ with 
$$
N \leq \parr*{ p_d \cdot O \parr*{1 + \frac{1}{\epsilon^2}}  }^{O(L) \parr*{ 1 + \frac{1}{\epsilon}}^{L-1}} \quad \text{such that}\quad
\norm*{f^{(d)} - f_N}_{B_{1,\infty}^d,\infty} \leq \epsilon  ~.
$$
\end{theorem}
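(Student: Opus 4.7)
The plan is to induct on the depth $L$, extending the two-hidden-layer strategy of Theorem \ref{theo:approx_shallow} one layer at a time. At each depth $k$, we maintain the invariant that the subnetwork $\bh_{k-1}:\R^d\to\R^{p_{k-1}}$ formed by the first $k-1$ hidden layers is componentwise approximated, in $L^\infty(B_{1,\infty}^d)$, by a shallow Fourier network $\tilde{\bh}_{k-1}$ of size $N_{k-1}$ with $\ell^1$-bounded output coefficients.

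\emph{Base case.} For $k=1$, since $\|\bx\|_\infty\leq 1$ and $m_1(f^{(d)})=O(1)$, the pre-activations lie in a bounded interval $[-R,R]$ with $R=O(1)$. Approximating the $O(1)$-Lipschitz activation on $[-R,R]$ to uniform error $\delta$ by a trigonometric polynomial of degree $r=O(1/\delta)$ (Jackson's theorem) produces $O(r)$ complex exponentials per neuron with $\ell^1$-bounded coefficients, so $N_1 \lesssim p_d \cdot O(1+1/\epsilon)$ once we set $\delta=\Theta(\epsilon/L)$.

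\emph{Inductive step.} Each coordinate of the $k$-th hidden layer has the form $\sigma\parr*{\sum_j A^{(k)}_{\cdot j}\, h_{k-1,j}(\bx)}$, and by the $\ell^1$ bound on the rows of $\bA^{(k)}$ together with the boundedness of the outputs of $\tilde{\bh}_{k-1}$, the argument of $\sigma$ remains in a uniformly bounded interval $[-R_k,R_k]$ with $R_k=O(1)$. Approximate $\sigma$ on this interval by a polynomial $P$ of degree $r=O(1/\delta_k)$ with $\ell^1$-bounded coefficients (Chebyshev/Jackson approximation of a Lipschitz function). Substitute $\tilde{\bh}_{k-1}$ into $P$ and expand via the multinomial theorem: each monomial of degree at most $r$ in the $N_{k-1}$ complex exponentials yields a single exponential (because the product of exponentials is an exponential whose frequency is the sum), so the result is a shallow Fourier network with at most $\binom{N_{k-1}+r}{r}\leq N_{k-1}^{O(1/\delta_k)}$ units per coordinate. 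Summing over the $p_k\leq p_d$ coordinates gives the recursion $N_k \leq p_k \cdot N_{k-1}^{O(1/\delta_k)}$. Choosing $\delta_k=\Theta(\epsilon/L)$ uniformly so that the errors, each amplified by at most a constant through the remaining $O(1)$-Lipschitz layers, telescope to $O(\epsilon)$, and unrolling the recursion yields
\[
\log N_L \,\leq\, O(L)\,(1+1/\epsilon)^{L-1}\,\log\parr*{p_d\cdot O(1+1/\epsilon^2)}~,
\]
which is the claimed bound.

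\emph{Main obstacle.} The delicate part is the uniform control, across all depths, of (i) the argument ranges $[-R_k,R_k]$ of the successive activations and (ii) the $\ell^1$-norms of the coefficients of the running shallow Fourier approximant. The multinomial expansion of $P\parr*{\sum_j A^{(k)}_{\cdot j}\tilde h_{k-1,j}}$ multiplies the $\ell^1$-norm of the current coefficient vector by a factor depending on $r$ and on $\|\bA^{(k)}\|_1$; one must verify that these factors remain bounded by absolute constants under the assumption $m_1(f^{(d)})=O(1)$, so that the pre-activations at depth $k+1$ stay in $[-R_{k+1},R_{k+1}]$ with $R_{k+1}=O(1)$ and the Jackson-type approximation rate at depth $k+1$ is the same as at depth $k$. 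With this bookkeeping in place, the per-step error budget $\delta_k=\Theta(\epsilon/L)$ suffices and the exponential $(1+1/\epsilon)^{L-1}$ factor in the final bound arises solely from the degree-$r$ multinomial expansion at each layer.
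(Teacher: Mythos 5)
Your proposal is correct and follows essentially the same route as the paper's proof (Proposition \ref{prop:multi-layer-ub-polyd}): trigonometric approximation of the first-layer activations, polynomial (Jackson-type) approximation of the activations in layers $2,\dots,L$, and a multinomial expansion showing that a polynomial of a shallow Fourier network is again a shallow Fourier network, giving the recursion $N_k \leq p\, N_{k-1}^{O(1/\delta_k)}$ that unrolls to the stated bound. The "main obstacle" you flag is resolved in the paper more simply than you suggest: one does not need to track the $\ell^1$-norm of the Fourier coefficients of the running approximant, only its sup-norm on the domain, which is within $\delta$ of the (bounded) true pre-activations by the approximation guarantee itself, so the argument of each subsequent activation stays in an $O(1)$ interval.
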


See section \ref{app:multi-layer-ub-polyd} for a formal statement and its proof. While it has been shown that generic $O(1)$-Lipschitz function can not be (computably) represented by neural networks with $N \simeq \mathrm{poly}(d)$ units \citep{vardi2021size}, an interesting related follow-up conjecture is whether our result can be generalized to any generic $O(1)$-Lipschitz function which is $\mathrm{poly}(d)$-computable. Notice that this is dependent on the choice of the uniform norm to measure the approximation error. For example, it has been shown that a rate $N \simeq \mathrm{poly}(d)$ is achievable for approximation in the $L^2$ norm with the uniform measure \citep{hsu2021approximation}. 

Finally, notice that the approximation rate shown in Theorem \ref{theo:approx_shallow} and Theorem \ref{theo:approx_deep_shallow} are actually polynomial in the size $p_d$ of the first hidden layer of $f^{(d)}$ rather than in the input dimension $d$. Although, up to choosing a worse (yet constant) exponent in $\epsilon$, we can replace $p_d$ by $d$ in the statement, by considering the function as a $(L+1)$-hidden-layer network, where the first layer is the identity. 

\subsection{Two cases of interest}

Theorem \ref{theo:approx_shallow} allows to recover, for any fixed threshold $\epsilon > 0$, a $\mathrm{poly}(d)$ rate for the approximation of $f_{r,\bw,\bv}$ by one-hidden-layer networks and it can be seen as a generalization of Theorem 1 in \citep{safran2019depth}. This is the content of the following corollaries.


\begin{corollary}[Radial functions]
Let $f^{(d)}(\bx) = \varphi_d(\norm{\bx}_2)$, where $\varphi_d:[-1,1]\to\R$ are $1$-Lipschitz, and $K_d = B_{1,2}^d$. Then, for any $\epsilon \in (0,1)$ it holds that
$$
\inf_{f_N^\sigma \in\mathcal{F}_N^\sigma} \norm*{f^\sigma_N - f^{(d)}}_{K_d,\infty} \leq \epsilon \quad \text{for some } N \leq \exp\parr*{O\parr*{\epsilon^{-5} \log \parr*{d/\epsilon} } }~.
$$
Moreover, $f^\sigma_N$ can be chosen so that $m_\infty(f^\sigma_N)\leq \exp\parr*{O\parr*{\epsilon^{-5} \log \parr*{d/\epsilon} } }$. 
\end{corollary}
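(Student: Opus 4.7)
The plan is to cast $f^{(d)}(\bx) = \varphi_d(\norm{\bx}_2)$ as a two-hidden-layer network of the form \eqref{eq:3_layer_net_for_poly(d)} and then invoke Theorem \ref{theo:approx_shallow} with $\alpha = 1/2$ and $p_d = d$. Concretely, I would take $p_d = d$, $o_d = 1$, $\bU = \bI_d$, $\bW = \bone_d/d$, $\bgamma = 1$, and define the inner activation
\begin{align*}
h(t) = \begin{cases} t^2/2 & \text{if } |t| \leq 1, \\ |t| - 1/2 & \text{if } |t| > 1, \end{cases}
\end{align*}
together with the outer activation $g(s) = \varphi_d\bigl(\sqrt{2d \cdot s_+}\bigr)$. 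A direct check shows that $h$ is $1$-Lipschitz on $\R$, and on $K_d = B_{1,2}^d$ one has $\bh(\bU^T \bx) = (x_k^2/2)_{k=1}^d$, hence $\bW^T \bh(\bU^T \bx) = \norm{\bx}_2^2/(2d)$ and $g(\bW^T \bh(\bU^T \bx)) = \varphi_d(\norm{\bx}_2) = f^{(d)}(\bx)$. Also, $\mathrm{diam}(K_d) = 2 = O(1)$ as required.

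Next I would verify the weight bound and the Holder property demanded by Theorem \ref{theo:approx_shallow}. The weight bound $m_1(f^{(d)}) = 1$ is immediate, since the rows of $\bU^T$ are standard basis vectors, $\norm{\bW}_1 = 1$ and $|\gamma| = 1$. Using Lipschitzness of $\varphi_d$ and $|\sqrt{a}-\sqrt{b}| \leq \sqrt{|a-b|}$ for $a,b \geq 0$, the outer activation $g$ is globally $(\sqrt{2d}, 1/2)$-Holder on $\R$, rather than the $(1,1/2)$-Holder that the theorem's statement literally requires. This mismatch is only apparent: the range of $\bW^T \bh(\bU^T \bx)$ is contained in $[0, 1/(2d)]$, over which the product of the Holder constant and the length to the power $1/2$ equals exactly $1$. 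Equivalently, after the reparameterization $\tilde s = 2ds$ the function $\tilde g(\tilde s) = \varphi_d(\sqrt{\tilde s_+})$ is genuinely $(1, 1/2)$-Holder on the unit interval, which is the regime covered by Theorem \ref{theo:approx_shallow}.

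Plugging $\alpha = 1/2$ and $p_d = d$ into Theorem \ref{theo:approx_shallow} then delivers, for every $\epsilon \in (0,1)$, a one-hidden-layer $f_N^\sigma \in \mathcal{F}_N^\sigma$ with $\norm{f_N^\sigma - f^{(d)}}_{K_d,\infty} \leq \epsilon$ and both $N$ and $m_\infty(f_N^\sigma)$ bounded by $\exp\bigl(O\bigl(\epsilon^{-1-2/\alpha}\log(p_d/\epsilon)\bigr)\bigr) = \exp\bigl(O\bigl(\epsilon^{-5}\log(d/\epsilon)\bigr)\bigr)$. The main point requiring care is step two, namely formally reconciling the $\sqrt{d}$ factor in the Holder constant of $g$ with the hypothesis of the theorem; I expect this is handled by tracking the effective Holder behavior of $g$ on the actual range $[0,1/(2d)]$ of $\bW^T \bh(\bU^T \bx)$ throughout the proof — the key observation being that the Jackson-type polynomial approximation of $g$ on this interval needs only degree $O(\epsilon^{-2})$ independent of $d$, precisely because the product of the Holder constant and $\sqrt{\mathrm{length}}$ is $O(1)$ — and the resulting $\log\sqrt{d}$ overhead is absorbed into the $\log(d/\epsilon)$ factor of the final bound.
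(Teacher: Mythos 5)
Your decomposition is the same as the paper's: write $\varphi_d(\norm{\bx}_2)$ as an outer $(1/2)$-H\"older map applied to a sum of coordinate-wise squares, and feed it to the two-hidden-layer approximation theorem. The paper's own proof takes $h_i(x)=x_i^2$, $\bW=\bone$ (so $\norm{\bW}_\infty=d$) and $g(t)=\varphi_d(\sqrt{t})$, which is genuinely $(1,1/2)$-H\"older, and then invokes the fully quantitative Theorem \ref{theo:poly(d)_details} rather than the informal Theorem \ref{theo:approx_shallow}; that detailed statement has no $m_1=O(1)$ hypothesis and instead carries $\norm{\bW}_\infty$ explicitly into the bound, where it only enters the base of the exponential, yielding $\exp(O(\epsilon^{-5}\log(d/\epsilon)))$. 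You instead normalize $\bW=\bone/d$ to meet the $m_1=O(1)$ hypothesis, at the cost of an outer activation that is only $(\sqrt{2d},1/2)$-H\"older. Note that these two hypotheses of Theorem \ref{theo:approx_shallow} cannot be met simultaneously here: your proposed reparameterization $\tilde s=2ds$, which restores $(1,1/2)$-H\"olderness of $g$, simultaneously restores $\norm{\bW}_\infty=\Theta(d)$, so strictly speaking you are not applying Theorem \ref{theo:approx_shallow} verbatim but re-deriving it for an $(L,\alpha)$-H\"older outer layer with range $[0,1/(2d)]$. Your sketch of why this works is correct --- the Bernstein/Jackson degree for $g$ scales with $L\cdot r^{\alpha}=\sqrt{2d}\cdot(2d)^{-1/2}=1$, so $m=O(\epsilon^{-5})$ independently of $d$, and the $d$-dependence survives only inside the logarithm --- but the cleaner route, and the one the paper takes, is to apply Theorem \ref{theo:poly(d)_details} directly with the unnormalized weights. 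A further small point: your Huber-type $h$ is an improvement over the paper's $h_i(x)=x_i^2$, which is $2$-Lipschitz rather than $1$-Lipschitz on $[-1,1]$ (a constant the paper silently absorbs).
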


Consider the functions $f^{(d)} : \bx\in\R^d \mapsto e^{i\bw_d^T\parr*{\bU_d\bx}_+}$ for some $\bw_d\in\R^{p_d}$, $\bU_d \in \R^{p_d\times d}$. This is a more general version of the function $f_{r,\bw,\bv}$ considered in section \ref{sec:lb}. If the weights are bounded, that is $m_1(f^{(d)}) = O(1)$, then Theorem \ref{theo:approx_shallow} implies the following. 


\begin{corollary}[Shallow approximation of \eqref{eq:defig}]\label{cor:ub_poly(d)}
If $r_d = O(1)$ and $K_d= B_{r_d,2}^d$, for any $\epsilon \in (0,1)$  it holds that
$$
\inf_{f_N^\sigma \in\mathcal{F}_N^\sigma} D_\infty\norm{f^\sigma_N - f^{(d)} }_{K_d,\infty} \leq \epsilon \quad \text{for some } N \leq \exp\parr*{O\parr*{\epsilon^{-2} \log \parr*{p_d/\epsilon} } }~.
$$
Moreover, $f^\sigma_N$ can be chosen so that $m_\infty(f^\sigma_N)\leq \exp\parr*{O\parr*{\epsilon^{-2} \log \parr*{p_d/\epsilon} } }$. 
\end{corollary}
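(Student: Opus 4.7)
My plan is to recognize $f^{(d)}(\bx) = e^{i\bw_d^T (\bU_d \bx)_+}$ as an instance of the two-hidden-layer template \eqref{eq:3_layer_net_for_poly(d)} and then invoke Theorem \ref{theo:approx_shallow} directly. Decomposing $e^{iz} = \cos z + i \sin z$, I take the inner activation $\bh$ to be componentwise ReLU (which is $1$-Lipschitz, matching the first-layer hypothesis), the outer width $o_d = 2$ with componentwise activations $(\cos, \sin)$, the second-layer matrix $\bW_d = [\bw_d \mid \bw_d] \in \R^{p_d \times 2}$, and $\bgamma_d = (1, i) \in \C^2$; then $f^{(d)} = \bgamma_d^T \bg(\bW_d^T \bh(\bU_d^T \bx))$ as required. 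Both $\cos$ and $\sin$ are $1$-Lipschitz, hence $(1,1)$-Holder, so $\alpha = 1$. The domain $K_d = B_{r_d, 2}^d$ has diameter $2 r_d = O(1)$ by the hypothesis $r_d = O(1)$, and $m_1(f^{(d)}) = O(1)$ is assumed, so all hypotheses of Theorem \ref{theo:approx_shallow} are met.

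Applying the theorem verbatim with $\alpha = 1$ already yields an approximant of size $\exp(O(\epsilon^{-3} \log(p_d/\epsilon)))$. To sharpen the exponent to the stated $\epsilon^{-2}$, I would refine the composition argument sketched just after Theorem \ref{theo:approx_shallow} by exploiting the fact that $\cos$ and $\sin$ are entire analytic rather than merely Lipschitz. On the $O(1)$-bounded range of $\bw_d^T (\bU_d \bx)_+$ for $\bx \in K_d$ (controlled by $m_1(f^{(d)})$ and $r_d$), both trigonometric functions admit a uniform $\epsilon$-approximation by a polynomial of degree $O(\log(1/\epsilon))$ via their truncated Taylor series, as opposed to the $\epsilon^{-1/\alpha} = \epsilon^{-1}$ degree required for a generic $(1,\alpha)$-Holder outer activation. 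Substituting this sharper outer polynomial into the composition trades one factor of $\epsilon^{-1}$ in the exponent for a logarithmic factor, giving the target $\exp(O(\epsilon^{-2} \log(p_d/\epsilon)))$ rate after standard bookkeeping.

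The main obstacle will be the bookkeeping of the composite shallow-Fourier construction: one must expand the outer polynomial composed with the trigonometric-polynomial approximation of each ReLU ridge via the multinomial identity, as in the sketch preceding Theorem \ref{theo:approx_shallow}, and control both the number of resulting summands and the magnitude of their coefficients so that $m_\infty(f_N^\sigma)$ stays bounded of the same order as $N$. The Taylor coefficients of $\sin$ and $\cos$ decay like $1/k!$, which is favorable for the coefficient control, but one still has to verify that after expansion the total number of distinct frequencies scales as claimed. Once this is in place, Assumption \ref{ass:activation}.2 converts the shallow Fourier network into one with activation $\sigma$ at polynomial cost, completing the construction and yielding the accompanying bound on $m_\infty(f_N^\sigma)$.
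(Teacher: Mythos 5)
Your proposal is correct, and it is worth comparing carefully with what the paper actually does, because the two diverge at exactly the point you flagged. The paper's proof of Corollary \ref{cor:ub_poly(d)} is a one-line invocation of the detailed version of Theorem \ref{theo:approx_shallow} (Theorem \ref{theo:poly(d)_details}), using precisely the decomposition you describe: ReLU inner layer, outer layer realizing $e^{it}$ through $1$-Lipschitz (hence $(1,1)$-H\"older) activations, $\mathrm{diam}(K_d)=O(1)$ and $m_1(f^{(d)})=O(1)$. As you correctly compute, with $\alpha=1$ the theorem yields an outer polynomial degree $m=\Theta(\epsilon^{-3})$ (the term $\epsilon^{-1-2/\alpha}\cdot M$ with $M=\Theta(1)$), hence $N\leq \exp\parr*{O\parr*{\epsilon^{-3}\log(p_d/\epsilon)}}$ --- not the stated $\epsilon^{-2}$. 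The paper does not supply the extra argument needed to close this gap; its appendix version of the corollary simply writes an exponent of order $\epsilon^{-2}(\epsilon + 2r\norm{\bw}_1\norm{\bU}_{p,\infty})$ where the general theorem produces $\epsilon^{-3}(\epsilon+M)$, so either the corollary's exponent is a typo for $\epsilon^{-3}$ or its proof is incomplete as written. Your Taylor-series refinement is therefore a genuine addition rather than a restatement: since the argument of $\cos$ and $\sin$ is confined to an $O(1)$ interval (by $m_1(f^{(d)})=O(1)$ and $r_d=O(1)$), a truncated Taylor expansion of degree $m=O(\log(1/\epsilon))$ with coefficients bounded by $1/k!$ can replace Lemma \ref{lemma:safran} inside the proof of Proposition \ref{prop:app_fourier_nn}; by Lemma \ref{lemma:fnets_prod} the resulting shallow Fourier network has $N\leq (2np+1)^{O(\log(1/\epsilon))}=\exp\parr*{O\parr*{\log(1/\epsilon)\log(p_d/\epsilon)}}$, which is comfortably within the claimed bound, and the coefficient and weight controls needed for the final conversion via Assumption \ref{ass:activation}.2 only improve. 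The one point to make explicit in the bookkeeping is that Assumption \ref{ass:activation}.2 applies to functions constant outside a bounded interval, so the ridge functions $\cos(\bv_\nu^T\bx)$, $\sin(\bv_\nu^T\bx)$ must be truncated outside the range $[-V,V]$ actually attained on $K_d$, exactly as in the paper's proof of Theorem \ref{theo:poly(d)_details}. In short: your route coincides with the paper's up to the invocation of the main theorem, and then supplies the missing (analyticity-based) sharpening that the stated exponent actually requires.
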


Although the result of Corollary \ref{cor:ub_poly(d)} is established for approximation in the uniform norm over the unit ball, it is not difficult  to extend it to a result in $L^2$ over a measure that concentrated over a compact set of constant (in $d$) radius, such as a normalized Gaussian. A formal statement of this fact, along with the proof, is reported in section \ref{proofs:special_cases}. Compared with the result of section \ref{sec:lb}, Corollary \ref{cor:ub_poly(d)} implies the following. The function $f_{\bw,\bU}$ can be approximated, at a $\mathrm{poly}(d)$ rate over a compact set of constant radius if its weights have constant norm. On the other hand, if  the norm of the weights grows polynomially in $d$, then approximation at a $\mathrm{poly}(d)$ rate is not possible, under a polynomially slow decaying measure. An open question is whether approximation at a $\mathrm{poly}(d)$ rate is possible if only one of these two conditions hold.

\section{Approximation by shallow networks: a spherical harmonics analysis}\label{sec:sh}

As already discussed, difficulties in approximating functions in high dimension by shallow networks appear when the function has a Fourier transform spread in a (exponential) number of directions in (polynomial) high energy. On the other hand, the presence of only one of these two conditions is not enough to prevent efficient approximability. While the previous results highlight this, the lower bound presented in Theorem \ref{theo:lb} applies to a specific choice of error measure, with  (polynomially) slowly decaying tails.  

In this section, we aim to disentagle the role of the measure tail and understand how the Fourier representation can tell whether a function is efficiently approximable by a one-hidden-layer network or not. In particular, we focus on approximation results for functions defined over the $(d-1)$-dimensional sphere $\S$, 
for which a rich literature of Fourier analysis is available. 

First, we give a sufficient condition on the target function in terms of its spherical harmonics decomposition to be not efficiently approximable by shallow one-hidden-layer networks. This condition captures a slowly decaying and sufficiently spread spherical harmonic expansion.
We also show that certain symmetry properties imply this condition.  On the other hand, one may ask if a reverse statement holds. In this direction, building on existing theory, 
we provide a sufficient condition for approximation by one-hidden-layer networks. 

\subsection{Spherical harmonics decomposition}

Let $d \geq 2$ and $S^{d-1}$ ($S$ when the dimension is clear from the context) be the uniform measure over $\S$. 
The spherical harmonics are a particular orthonormal basis for $L^2(S)$. They consists of 
$$
\mcup_{k = 0}^\infty\, \mathrm{span}\parr*{ \bra*{Y_{k,i}^d}_{i = 1}^{N_k^d} } = \mcup_{k = 0}^\infty H_k^d
$$
where $Y^d_{k,i}$ is a restriction to $\S$ of an homogeneous harmonic polynomial of degree $k$. The projection operator over $H_k^d$ is given by
$$
\mathcal{P}_k^d: f\in L^2(S) \mapsto f_k \doteq \sum_{i=1}^{N_k^d} \prodscal{f, Y^d_{k,i}}Y^d_{k,i}~.
$$
Similarly, $\mathcal{P}_I$ denotes the operator $\oplus_{i\in I} \mathcal{P}_i^d$, for any $I\subseteq \mathbb{N}$. The function $f_k$ is referred to as the degree $k$ spherical harmonic component of the function $f$.
Since the spherical harmonic form an orthonormal basis of $L^2_{S}$, it holds that $f = \sum_{k= 0}^\infty f_k$ and $\norm{f}_{2}^2 = \sum_{k= 0}^\infty \norm{f_k}_{2}^2$ for every $f \in L^2(S)$, where $\norm{\cdot}_2$ denotes the norm in $L^2(S)$. As spherical harmonics decomposition can be seen as a generalization of Fourier series to dimensions $d\geq 3$, in the following we refer to the spherical harmonics decomposition of a function as its Fourier representation, interchangeably. 
The operator $\mathcal{P}_k$ can be associated with a kernel given by
$$
\sum_{i=1}^{N_k^d} Y_{k,i}^d(\bx) \overline{Y_{k,i}^d(\by)}  = 
N_k^d P_{k}^d\parr*{\bx^T\by}  
$$
where
$$
N_k^d = \frac{(2k + d - 2)(k+d-3)!}{k!(d-2)!} = \Theta\parr*{\sqrt{\frac{k+d}{kd}} \frac{(k+d)^{k+d}}{k^k d^d} \frac{d^2}{(k+d)^2}}
$$
is the dimension of $H_k^d$
and $P^d_k$ is the $((d-2)/2)$-Gegenbauer polynomial defined as 
$$
P^d_k(x) = k!\,\Gamma\parr*{\frac{d-1}{2}}\sum_{j=0}^{\floor{k/2}} (-1)^j\frac{(1-x^2)^jx^{k-2j}}{4^j j!(k-2j)!\Gamma\parr*{j + \frac{d-1}{2}}} ~.
$$
Let $\omega_d$ be  the Lebesgue area of the sphere:
$$
\omega_d = \omega_{d-1}  \frac{\sqrt{\pi}\,\Gamma\parr*{\frac{d-1}{2}}}{\Gamma\parr*{\frac{d}{2}}} = \frac{2\pi^{d/2}}{\Gamma\parr*{\frac{d}{2}}} = \Theta\parr*{ \frac{(2\pi e)^{d/2}}{d^{d/2 - 1/2}}} = \Theta\parr*{ \sqrt{d}\parr*{\frac{2\pi e}{d}}^{d/2}}~.
$$
The polynomials $\bra{(N_k^d)^{1/2}P_k^d\,}_{k\geq 0}$ form a basis of orthonormal polynomials for $L^2(\mu_d)$, where $\mu_d$ is the probability measure on $[-1,1]$ defined by
$$
d\mu_d(t) = \alpha_d(1-t^2)^{(d-3)/2}\,dt~,
$$
where $\alpha_d = \omega_{d-1} / \omega_d = \Theta(\sqrt{d})$.
Notice that, given a function $f \in L^2(S)$, it holds
$$
f_k(\bx) = N_k^d\int_\S f(\by) P_{k}^d\parr*{\bx^T\by}\,dS(\by)~.
$$
Moreover, if the function $f$ only depends on a linear projection of the input, the Funk-Hecke formula holds. 
\begin{theorem}[Funk-Hecke formula]
For every $\sigma:[-1,1] \to \C$ such that $\bx \in \S \mapsto \sigma(x_1)$ is in $L^2(S)$, and for every $\bw \in \S$, it holds that
$$
\int_\S \sigma(\bw^T\bx)P_k^d(\bxi^T\bx)\,dS(\bx) = \lambda_k P_k^d(\bxi^T\bw)
$$
where $\lambda_k = \prodscal{\sigma,P_k^d}_{\mu_d}$.
\end{theorem}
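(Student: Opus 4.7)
The plan is to fix $\bw \in \S$ and study, as a function of $\bxi \in \S$,
$$
g(\bxi) \doteq \int_\S \sigma(\bw^T\bx)\,P_k^d(\bxi^T\bx)\,dS(\bx).
$$
I will show (a) $g \in H_k^d$, (b) $g$ is invariant under every rotation fixing $\bw$, and (c) these two properties together force $g(\bxi) = c\,P_k^d(\bxi^T\bw)$ for some scalar $c$. Evaluating at $\bxi = \bw$ will then identify $c$ with $\lambda_k$.

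For (a), the reproducing-kernel identity $\sum_i Y_{k,i}^d(\bx)\overline{Y_{k,i}^d(\by)} = N_k^d P_k^d(\bx^T\by)$ stated just above the theorem means that, setting $f_\bw(\bx) \doteq \sigma(\bw^T\bx) \in L^2(S)$, one has $N_k^d\, g = \mathcal{P}_k^d(f_\bw)$, so $g \in H_k^d$. For (b), given $R \in O(d)$ with $R\bw = \bw$, the substitution $\bx \mapsto R\bx$ in the integral defining $g(R\bxi)$, combined with the $O(d)$-invariance of $dS$ and the identity $\bw^T R\bx = (R^T\bw)^T\bx = \bw^T\bx$, immediately yields $g(R\bxi) = g(\bxi)$.

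For (c) I invoke the classical zonal-harmonic fact: the subspace of $H_k^d$ consisting of functions fixed by every rotation in the stabilizer of $\bw$ is one-dimensional, and is spanned by $\bxi \mapsto P_k^d(\bxi^T\bw)$. This can be derived from Schur's lemma applied to the irreducible $O(d)$-module $H_k^d$ under the action of the stabilizer of $\bw$, or by checking directly that a degree-$k$ harmonic polynomial depending only on $\bxi^T\bw$ must be a scalar multiple of $P_k^d(\bxi^T\bw)$. Combined with (a)--(b), this gives $g(\bxi) = c\,P_k^d(\bxi^T\bw)$. Setting $\bxi = \bw$ and using that $P_k^d(1) = 1$ (only the $j=0$ term survives in the explicit formula at $x=1$), we obtain $c = g(\bw) = \int_\S \sigma(\bw^T\bx)\,P_k^d(\bw^T\bx)\,dS(\bx)$. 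Parametrizing $\bx = t\bw + \sqrt{1-t^2}\,\bv$ with $t \in [-1,1]$ and $\bv \in \S \cap \bw^\perp$, the pushforward of the uniform probability measure on $\S$ under $\bx \mapsto \bw^T\bx$ is exactly $\mu_d$, so
$$
c = \int_{-1}^{1} \sigma(t)\,P_k^d(t)\,d\mu_d(t) = \prodscal{\sigma, P_k^d}_{\mu_d} = \lambda_k.
$$

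The main obstacle is the uniqueness statement in (c); the other two steps follow readily from the reproducing-kernel identity and an elementary change of variables.
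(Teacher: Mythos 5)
Your proof is correct. Note that the paper itself offers no proof of the Funk--Hecke formula: it is stated as a classical fact, with the surrounding material deferred to the cited monographs on spherical harmonics, so there is nothing to compare against line by line; what you have written is the standard textbook argument. All three steps check out against the paper's own conventions: (a) the identity $\mathcal{P}_k^d(f_\bw) = N_k^d\, g$ is exactly the kernel representation of the projection that the paper records as $f_k(\bx) = N_k^d\int_\S f(\by)P_k^d(\bx^T\by)\,dS(\by)$ (and $P_k^d$ is real, so conjugation causes no trouble); (b) the change of variables is immediate; and (c) you correctly isolate the one genuinely nontrivial input, namely that the $\mathrm{Stab}(\bw)$-invariant subspace of $H_k^d$ is spanned by $\bxi\mapsto P_k^d(\bxi^T\bw)$, which is the standard uniqueness of zonal harmonics and holds for all $d\geq 2$. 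The normalization at the end is also consistent with the paper: $P_k^d(1)=1$ follows from the displayed Gegenbauer formula (only $j=0$ survives), and the pushforward of $S$ under $\bx\mapsto\bw^T\bx$ is precisely the measure $\mu_d$ with density $\alpha_d(1-t^2)^{(d-3)/2}$, $\alpha_d=\omega_{d-1}/\omega_d$, as defined in the text, so $c=\prodscal{\sigma,P_k^d}_{\mu_d}=\lambda_k$ as claimed.
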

Functions of the form
$$
\bx\in\S \mapsto \alpha P_k^d(\bw^T\bx)
$$
for some $\alpha \in\R$ and $\bw\in\S$, are called zonal harmonics. By the Funk-Hecke formula it follows that
$$
\int_\S P_k^d(\bw^T \bx)P_k^d(\bv^T \bx)\,d{S}(\bx) = \parr{N_k^d}^{-1} P_k^d(\bw^T\bv)
$$
for any $\bw,\bv \in \S$. This implies that $H_k^d$ has an RKHS structure with kernel $K$ given by
$$
K(\bv,\bw) \doteq N_k^dP_k^d(\bv^T\bw) ~.
$$
In particular, zonal harmonics actually span $H_k^d$. Moreover, it can be shown that there exists $\bw_1, \dots, \bw_{N_k^d} \in \S$ such that $H_k^d = \mathrm{span}\parr{\bra*{P_k^d(\bw_i^T \cdot)}_{i=1}^{N_k^d}}$ \citep[Theorem 4.13]{efthimiou2014spherical}. For these facts and more details about spherical harmonics we refer to the books \citep{atkinson2012spherical, dai2013approximation}.

\subsection{Concentration and spreadness in \texorpdfstring{$H_k^d$}{H(k,d)} and main results}\label{sec:sh_main_results}

Intuitively, one can say function $f \in C(\S)$ is concentrated over $\S$ if there is an area $\Omega \subset \S$ such that the mass of $f$ is concentrated over $\Omega$. On the other hand one could say that $f$ is spread if it assumes non-negligible values uniformly over the sphere. The spreadness/concentration of the function $f$ can be quantified by looking at ratios of the type
$$
\ell_{q,p}(f) \doteq \frac{\norm{f}_q}{\norm{f}_p}
$$
for $1 \leq p < q \leq \infty$. Since the norms above are with respect to a probability measure, it holds that $\ell_{q,p}\geq 1$. Intuitively, the closest this ratio is to $1$, the more spread is the function. On the other hand, the largest this ratio, the more concentrated the function is. Consider the case of a function $f_k \in H_k^d$. Then, it holds that
$$
\ell_{\infty, 2}(f_k) \leq \sqrt{N_k^d}
$$
The equality is attained for functions of the type $f_k(\bx) = \alpha P_k^d(\bw^T\bx)$ for some $\alpha\in \mathbb{C}$ and $\bw\in\S$, i.e. zonal harmonics. In this sense, zonal harmonics could be considered as the most concentrated functions in $H_k^d$. A similar inequality can be shown for the quantity $\ell_{2,1}$: it holds that
\begin{equation}\label{eq:ell_2_1_bound}
\ell_{2,1}(f_k) \leq \sqrt{N_k^d}
\end{equation}
for $f_k\in H_k^d$. Nevertheless, in this case, zonal harmonics do not attain equality; the inequality is actually not tight; a more detail discussion on this quantity is reported in section \ref{sec:sh:concentrated}. 

Thanks to the Funk-Hecke formula, it holds that a one-hidden-layer $f_N \in \mathcal{F}_N$, with hidden layer weights given by $\bw_1,\dots,\bw_N$, satisfies
$$
\mathcal{P}_k^d f_N = \sum_{j=1}^d \alpha_j P_k^d(\bw^T_j \bx) 
$$
for some $\balpha \in \C^N$. In other words, its Fourier representation is concentrated along $N$ directions. According to the remarks above, this implies that if the width $N$ is relatively small, the Fourier components of the neural network $f_N$ are relatively concentrated in space. One would then expect that such concentration can be used to determine whether a function can be approximated efficiently by a one-hidden-layer neural network or not. In the next sections, we show that this is indeed the case. 
Let $f\in C(\S)$; assuming that $\norm{f_k^{(d)}}_2 \simeq \mathrm{poly}(d,k^{-1})$, the results can be informally summarized as follows:
\begin{itemize}
\item If the spherical components of $f$ are (exponentially) spread in $\ell_{\infty,2}$ sense, that is, for example,
$$
\ell_{\infty,2}(f_k) \lesssim \epsilon^k \cdot \sqrt{N_k^d} = \epsilon^k \cdot \sup_{g\in H_k^d}\ell_{\infty,2}(g) \quad \text{for some } \epsilon \in (0,1)
$$
then $f$ is provably not universally approximable  by one-hidden-layer networks.
\item If the spherical components of $f$ are (polynomially) concentrated in $\ell_{2,1}$ sense, that is, for example,
$$
\ell_{2,1}(f_k) \gtrsim
\mathrm{poly}(d^{-1},k^{-1}) \sqrt{N_k^d}
$$
then $f$ is universally approximable  by one-hidden-layer networks.
\end{itemize}
Notice that, on the other hand, if $\norm{f_k}_2$ decreases exponentially fast then universal approximation follows, and similarly  if $\norm{f_k}_2$ decreases exponentially slowly then universal approximation can not hold.
The first of the two conditions above expresses concentration of the Fourier decomposition, while the second expresses spreadness of the same. We notice at least two gaps between the two conditions. The first one is the expression of the concentration phenomena: one is with respect to $\ell_{\infty,2}$, while the other one is with respect to $\ell_{2,1}$. Second, the two regimes above do not include many other possible ones. For example, we suspect the existence of a regime which prevents universal approximability but allows for fixed-threshold one, a topic worth of future study. These results are properly formalized, stated and discussed in section \ref{sec:sh:spread} and section \ref{sec:sh:concentrated}, respectively. 

\subsection{Inapproximability of functions with spread Fourier representation}\label{sec:sh:spread}

As discussed above, one-hidden-layer functions have a \emph{zonal} structure. In more detail, if $h(\bx) = \sigma(\bw^T\bx + b)$ for some $\bw\in\S$ and $b \in \R$, then it is easy to see that
$$
h_k(\bx) = s_k \norm{h_k}_2 \sqrt{N_k^d} P_k^d(\bw^T\bx)
$$
with $s_k \in \bra{\pm 1}$. In particular, it follows that $\norm{h_k}_\infty = \abs*{h_k\parr{\pm \bw}} = \parr{N_k^d}^{1/2} \norm{h_k}_2$.
This can be interpreted by saying that the Fourier components of single neurons are most concentrated (along the neuron direction) in space. Therefore, it is natural to expect that functions with spread Fourier decomposition are difficult to approximate by neural networks. The proposition below formalizes this fact. The proof follows a technique similar to the one used in \citep{daniely2017depth} (see Remark \ref{remark:daniely} for a comparison) and essentially upper bounds the scalar product between the objective function and the network.

\begin{proposition}\label{prop:non_efficient_sphere}
Let $\bra*{f^{(d)}}_{d}$ a sequence of functions such that $f^{(d)} \in C(\S)$. Assume that for every $d$ there exists $I_d \subseteq \mathbb{N}$ such that
\begin{enumerate}
\item It holds that $\norm{f^{(d)}}_2 \leq O(d^M)\cdot \norm{P_{I_d} f^{(d)}}_2$ for some $M > 0$\,;
\item There exists a non-negative sequence $\bra{c_{d,k}}_{k\in I_d}$ such that $\norm{f_k^{(d)}}_\infty \leq c_{d,k}\sqrt{N_k^d} \norm{f^{(d)}}_2$ for all $ k \in I_d$ and such that $\parr*{\sum_{k\in I_d}c_{d,k}^2}^{1/2}\leq \epsilon^{d^\alpha} \cdot O(d^M)$ for some $\epsilon\in(0,1)$ and $\alpha>0$.
\end{enumerate}
Moreover, assume that $\norm{f^{(d)}}_\infty = O(1)$ and $\norm{f^{(d)}}_2 = \Omega( d^{-M}) $ .
Then the sequence $\bra*{f^{(d)}}_{d>2}$ is not universally approximable by one-hidden-neural networks.
\end{proposition}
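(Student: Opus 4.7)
Assume for contradiction that $\{f^{(d)}\}$ is universally approximable, so that for constants $\alpha_0,\beta_0>0$ and every $\epsilon>0$ there exists a shallow network $f_N(\bx) = \sum_{j=1}^N a_j\,\sigma(\bw_j^T\bx+b_j)$ satisfying $\|f^{(d)}-f_N\|_{\S,\infty}\le\epsilon$ and $N+m_\infty(f_N)\le\alpha_0(d/\epsilon)^{\beta_0}$. Let $\epsilon_0\in(0,1),\alpha>0$ denote the constants from condition~(2). The plan is to sandwich the quantity $J_d := \langle \mathcal{P}_{I_d} f^{(d)},\, \mathcal{P}_{I_d} f_N\rangle$ between a super-polynomially small upper bound (coming from condition~(2) and the zonal structure of one-hidden-layer networks) and a polynomially small lower bound (coming from condition~(1) and the approximation quality), which will be incompatible for $d$ large enough.

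For the upper bound, I would first normalize each neuron by $\hat\bw_j := \bw_j/\|\bw_j\|_2\in\S$ and $\tilde\sigma_j(t):=\sigma(\|\bw_j\|_2 t+b_j)$; by the zonal decomposition recalled at the start of section~\ref{sec:sh:spread}, $\mathcal{P}_k^d(\tilde\sigma_j(\hat\bw_j^T\cdot)) = \sqrt{N_k^d}\,\beta_{j,k}\,P_k^d(\hat\bw_j^T\bx)$, where Plancherel on the sphere gives $\sum_k|\beta_{j,k}|^2 = \|\tilde\sigma_j\|_{\mu_d,2}^2$. Combining the reproducing identity $\langle g,\,N_k^d P_k^d(\hat\bw^T\cdot)\rangle = g(\hat\bw)$ for $g\in H_k^d$ with condition~(2) yields $|\langle f_k^{(d)},\mathcal{P}_k^d f_N\rangle| \le c_{d,k}\|f^{(d)}\|_2\sum_{j=1}^N|a_j|\,|\beta_{j,k}|$ after the $\sqrt{N_k^d}$ factors cancel. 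The key step is then to sum over $k\in I_d$, exchange summation orders, and apply Cauchy--Schwarz against $(c_{d,k})$ and $(\beta_{j,k})$ indexed by $k$ at fixed $j$, coupling the summability assumption $\bigl(\sum_{k\in I_d}c_{d,k}^2\bigr)^{1/2}\le\epsilon_0^{d^\alpha} O(d^M)$ with the per-neuron bound $\sum_{k\in I_d}|\beta_{j,k}|^2 \le \|\tilde\sigma_j\|_{[-1,1],\infty}^2 = O(d\,m_\infty(f_N)^2)$ (using Lipschitzness of $\sigma$ and $\|\bw_j\|_2 \le \sqrt{d}\,m_\infty(f_N)$). This yields
\[
|J_d| \;\le\; \|f^{(d)}\|_2 \cdot \epsilon_0^{d^\alpha}\cdot \mathrm{poly}(d,\epsilon^{-1}).
\]

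For the lower bound, Cauchy--Schwarz, the triangle inequality, and condition~(1) give
\[
|J_d| \ge \|\mathcal{P}_{I_d} f^{(d)}\|_2^2 - \|\mathcal{P}_{I_d} f^{(d)}\|_2\,\|f^{(d)}-f_N\|_2 \ge \Omega(d^{-2M})\,\|f^{(d)}\|_2^2 - \|f^{(d)}\|_2\,\epsilon,
\]
since an $L^\infty$ approximation error on $\S$ dominates the corresponding $L^2$ error. Plugging in $\|f^{(d)}\|_2 = \Omega(d^{-M})$ and choosing $\epsilon = \epsilon_d := d^{-K}$ with a sufficiently large constant $K$ (depending on $M$) forces the lower bound to be $\Omega(d^{-4M})$, while the upper bound still decays super-polynomially in $d$ because $\epsilon_0\in(0,1)$ is a fixed constant independent of the approximation error. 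For $d$ large enough these are incompatible, giving the contradiction. The hardest step is the precise order of Cauchy--Schwarz: one must pair $(c_{d,k})$ with $(\beta_{j,k})$ at fixed $j$ so that the spreadness hypothesis is converted into a per-neuron $L^2$ bound, after which the remaining combinatorial prefactors ($N$, $\sum_j|a_j|$, and each $\|\tilde\sigma_j\|_{\mu_d,2}$) must be absorbed into $\mathrm{poly}(d,\epsilon^{-1})$ without interfering with the super-polynomial decay produced by $\epsilon_0^{d^\alpha}$.
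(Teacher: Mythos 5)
Your proposal is correct and follows essentially the same route as the paper: the core mechanism in both is the zonal structure of single neurons combined with the reproducing property of $N_k^d P_k^d(\bw^T\cdot)$, which makes the $\sqrt{N_k^d}$ factors cancel against condition 2, followed by Cauchy--Schwarz over $k\in I_d$ pairing $(c_{d,k})$ with the per-neuron spectral coefficients. The only cosmetic difference is that you sandwich the cross term $\langle \mathcal{P}_{I_d}f^{(d)},\mathcal{P}_{I_d}f_N\rangle$ for a contradiction, whereas the paper expands $\|\mathcal{P}_{I_d}f_N-\mathcal{P}_{I_d}f^{(d)}\|_2^2$ and lower-bounds the approximation error directly.
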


The proof of Proposition \ref{prop:non_efficient_sphere} is reported in section \ref{sec:proof_non_efficient_sphere}. We discuss a few particular cases where the assumptions of Proposition \ref{prop:non_efficient_sphere} hold.
Let $\bra*{f^{(d)}}_{d>2}$ be a sequence of functions $f^{(d)} \in C(\S)$. 

\begin{example}[Constant control on $\ell_{\infty,2}$]

Assume that assumption 1 in Proposition \ref{prop:non_efficient_sphere} holds with $I_d = \bra*{k \in \mathbb{N}\st k \geq d^2}$ and that $\norm{f^{(d)}}_2 =\Omega( d^{-M} )$ for some constant $M>0$. If it holds that
$$
\ell_{\infty,2}(f_k^{(d)}) \leq \bar{\ell}
$$
for all $k \geq d^2$ for some constant $\bar{\ell} \geq 1$, then it is easy to check that Proposition \ref{prop:non_efficient_sphere} holds. This condition could be thought as the spherical harmonic components of the function $f^{(d)}$ being uniformly spread for high energy ($k \geq d^2$). Indeed assumption 2 holds with
$$
c_{d,k} \doteq \frac{\bar{\ell}}{\sqrt{N_k^d}}\frac{\norm{f_k^{(d)}}_2}{\norm{f^{(d)}}_2}
$$
since
$$
\sum_{k = d^2}^\infty c_{d,k}^2 \leq \frac{\bar{\ell}^2}{N_{d^2}^d} = O( d^{3-d} )~.
$$
This is similar to the condition used in \citep{daniely2017depth}, discussed in the remark below.

\end{example}

\begin{remark}\label{remark:daniely}
Daniely \citep{daniely2017depth} showed a depth-separation result using a result similar to Proposition \ref{prop:non_efficient_sphere}. The difference in this case is that the author considers functions defined on $\S\times\S$. Although, since $L^2(\S\times \S) = L^2(\S)\otimes L^2(\S)$, the space $L^2(\S\times \S)$ admits a decomposition in spherical harmonics 
$$
L^2(\S\times \S) = \sum_{j,k=0}^\infty H_j^d \otimes H_k^d~.
$$
In particular, Daniely considers functions of the type 
$$
f^{(d)}: (\bx,\by) \in \S\times \S \mapsto h^{(d)}(\bx^T\by)
$$
for some $h^{(d)} \in C([-1,1])$. Such functions belong to $\sum_{k=0}^\infty H_k^d\otimes H_k^d$ and satisfy 
$$
\ell_{\infty,2}(f^{(d)}_{k,k}) \leq \bar{\ell}\cdot\parr*{N_k^d}^{1/2} = \bar{\ell}\cdot\parr*{N_k^d}^{-1/2}\cdot \ell_{k,k}^*
$$
where $\ell^*_{k,k} = \max_{f \in H_k^d\otimes H_k^d}\ell_{\infty,2}\parr*{f}$. The equation above resembles condition 2 in Proposition \ref{prop:non_efficient_sphere}, since it implies that
$$
\norm*{f_{k,k}^{(d)}}_\infty \leq \frac{\bar{\ell}}{\sqrt{N_k^d}} \frac{\norm{f_{k,k}^{(d)}}_2}{\norm{f^{(d)}}_2} \cdot \ell_{k,k}^* \cdot \norm{f^{(d)}}_2
$$
and since 
$$
c_d \doteq \parq*{ \sum_{k \geq k_d} \parr*{\frac{\bar{\ell}}{\sqrt{N_k^d}} \frac{\norm{f_{k,k}^{(d)}}_2}{\norm{f^{(d)}}_2}}^2 }^{1/2} \leq \frac{\bar{\ell}}{\sqrt{N_{k_d}^d}}
$$
which, for $k_d \geq d^2$ implies that $c_d \lesssim d^3 \,2^{-d} $. The proof is then concluded by choosing $I_d = \bra*{(k,k)\st k \geq k_d}$, since (using the same notations as in the proof of Proposition \ref{prop:non_efficient_sphere}), it holds 
$$
\norm{f_N - f^{(d)}}_2^2 \geq \norm{\mathcal{P}_{I_d}f^{(d)}}_2^2 - 2\sum_{(j,j) \in I_d}\sum_{i=1}^N \parr*{\ell_{j,j}^*}^{-1} \abs*{u_i}\norm{f_{j,j}^{(d)}}_\infty \norm{f_{j,j}^{\sigma_i,\bw_i}}_2
$$
which is an equivalent of formula \eqref{eq:lb_sh}.
\end{remark}

\begin{example}

Assume that assumption 1 in Proposition \ref{prop:non_efficient_sphere} holds with $I_d = \bra*{k \in \mathbb{N}\st k \geq \rho d^\beta}$ for some $\rho >0$, $\beta > 0$ and that $\norm{f^{(d)}}_2 = \Omega ( d^{-M} )$ for some constant $M >0$. If it holds that
$$
\ell_{\infty,2}(f_k^{(d)}) \leq  \epsilon^k\cdot O( d^M )\cdot \sqrt{N_k^d} 
$$
for all $k \geq \rho d^\beta$ for some constant $M > 0$, then Proposition \ref{prop:non_efficient_sphere} holds, since
$$
\sum_{k = \rho d^\beta}^\infty \epsilon^k = \frac{\epsilon^{\rho d^\beta}}{1 - \epsilon} ~.
$$
This condition could also be thought as the spherical harmonic components of the function $f^{(d)}$ being uniformly spread for high energy ($k \geq d^2$), although in this case the spreadness is required to increase exponentially, as the degree increases, with respect to the maximum concetration achievable (that is $\parr{N_k^d}^{1/2}$).

\end{example}

\begin{example}[Invariant functions]
Finally, we show that certain symmetry assumptions can imply energy spreadness. Consider the case of a sign-invariant function $f \in C(\S)$, that is such that $f(\bepsilon \circ \bx) = f(\bx)$ for every $\bepsilon \in \{\pm 1\}^d$ and $\bx\in\S$. 

\begin{lemma}\label{lemma:rademacher}
Let $f\in C(\S)$ be a sign-invariant function. If
\begin{equation}\label{eq:centered_frequency}
\norm{f_k}_\infty = \sup_{\bepsilon \in\{\pm 1\}^d}\abs*{f_k(\bepsilon)}
\end{equation}
for some $ k \geq 16 d^2$
then it holds 
$$
\norm{f_k}_\infty \leq 2\cdot 2^{-d/2}\sqrt{N_k^d}\norm{f_k}_2 ~.
$$
\end{lemma}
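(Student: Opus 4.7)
\medskip

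\noindent\textbf{Proof plan for Lemma \ref{lemma:rademacher}.} The plan is to exploit the assumption that the sup-norm of $f_k$ is attained on the vertex set $V_d := \{\bepsilon/\sqrt{d} : \bepsilon \in \{\pm 1\}^d\} \subset \S$, and then use the reproducing kernel $N_k^d P_k^d(\bx^T\by)$ of $H_k^d$ to package all $2^d$ evaluation functionals into a single test function in $H_k^d$. Since the sign flips are isometries of $\S$ that commute with the Laplace--Beltrami operator, the projection $\mathcal{P}_k^d$ commutes with them, so $f_k$ is itself sign-invariant; in particular $|f_k(\bepsilon/\sqrt d)| = M := \norm{f_k}_\infty$ for every $\bepsilon \in \{\pm 1\}^d$.

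Next, I would pick signs $s_\bepsilon \in \{\pm 1\}$ with $s_\bepsilon f_k(\bepsilon/\sqrt d) = M$ and define the test function
\[
  g(\bx) := \sum_{\bepsilon \in \{\pm 1\}^d} s_\bepsilon \, N_k^d \, P_k^d\!\left(\bx^T \bepsilon/\sqrt{d}\right) \;\in\; H_k^d.
\]
The reproducing property in $H_k^d$ gives $\prodscal{f_k, N_k^d P_k^d(\cdot^T \bepsilon/\sqrt d)} = f_k(\bepsilon/\sqrt d)$, so $\prodscal{f_k, g} = 2^d M$, while
\[
  \norm{g}_2^2 \;=\; N_k^d \sum_{\bepsilon,\bepsilon'} s_\bepsilon s_{\bepsilon'} P_k^d\!\left(\bepsilon^T\bepsilon'/d\right) \;\leq\; 2^d N_k^d \sum_{j=0}^{d} \binom{d}{j} \abs*{P_k^d(1 - 2j/d)},
\]
where I used that $\bepsilon^T\bepsilon'/d$ ranges over $\{1 - 2j/d\}_{j=0}^{d}$ with multiplicity $\binom{d}{j}$. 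Cauchy--Schwarz then yields $2^d M \leq \norm{f_k}_2 \norm{g}_2$, so the entire estimate reduces to showing
\[
  \sum_{j=0}^{d} \binom{d}{j} \abs*{P_k^d(1 - 2j/d)} \;\leq\; 4 \qquad \text{whenever } k \geq 16 d^2.
\]

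The main obstacle is precisely this last inequality. The endpoints $j=0,d$ give $|P_k^d(\pm 1)| = 1$ and contribute exactly $2$, so the task is to prove that the interior sum $\sum_{j=1}^{d-1}\binom{d}{j}|P_k^d(1-2j/d)|$ is at most $2$. To control each Gegenbauer factor I would use the standard bound $|P_k^d(\cos\theta)| \lesssim (k\sin\theta)^{-(d-2)/2}$ valid for $\theta$ away from $\{0,\pi\}$ (see the references in the previous subsection), applied at $\theta_j = \arccos(1-2j/d)$, for which $\sin\theta_j = 2\sqrt{j(d-j)}/d \gtrsim 1/\sqrt d$. The hypothesis $k \geq 16 d^2$ is tailored so that $(k\sin\theta_j)^{-(d-2)/2} \leq (16 d)^{-(d-2)/2}$ decays fast enough to kill the binomial factor $\binom{d}{j} \leq 2^d$ and leave a geometric remainder summing to at most $2$. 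Plugging everything together gives $\norm{g}_2 \leq 2 \cdot 2^{d/2} \sqrt{N_k^d}$, and hence $M \leq 2 \cdot 2^{-d/2}\sqrt{N_k^d}\,\norm{f_k}_2$, as claimed.
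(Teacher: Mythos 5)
Your proposal is correct and follows essentially the same route as the paper: the paper also observes that $\mathcal{P}_k^d$ preserves sign-invariance, tests $f_k$ against the averaged zonal-harmonic function $2^{-d}N_k^d\sum_{\bepsilon}P_k^d(\bepsilon^T\cdot)$ centered at the normalized hypercube vertices, and applies Cauchy--Schwarz, with the $L^2$ bound on that test function delegated to Lemma \ref{lemma:high_energy_sparse}, which controls the off-diagonal terms $P_k^d(\bepsilon^T\bepsilon'/d)$ via the same standard Gegenbauer decay estimate you invoke (the condition $k\geq 16d^2$ playing the same role). Your only cosmetic deviations are the redundant signs $s_\bepsilon$ (sign-invariance forces all vertex values to be equal, not merely equal in modulus) and organizing the cross terms by Hamming distance rather than via a uniform coherence bound.
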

\begin{proof}[Proof]
Notice that since $f$ is sign-invariant, so is $f_k$. Consider the function 
$$
P:\bx\in \S\mapsto 2^{-d}N_k^d\sum_{\bepsilon \in \bra{\pm 1}^d} P_k^d(\bepsilon^T \bx) ~.
$$
The function $P$ satisfies $\norm{P}_2 \leq 2\cdot 2^{-d/2} \sqrt{N_k^d}$ (see Lemma \ref{lemma:high_energy_sparse}). Let $\bepsilon \in \bra{\pm 1}^d$. Then it holds
\begin{align*}
\norm{f_k}_\infty & = \abs*{f_k(\bepsilon)} = \abs*{\prodscal{f_k, P}} \leq \norm{P}_2 \norm{f_k}_2 \leq 2\cdot 2^{-d/2} \sqrt{N_k^d} \norm{f_k}_2 ~.
\end{align*}
This concludes the proof.
\end{proof}

The statement of the above lemma therefore says that if $f$ is sign-invariant and achieves maximum energy in a specific frequency then it satisfies Assumption 2 from Proposition \ref{prop:non_efficient_sphere}. Under polynomial decay of $\norm{f_k}_2$, it should be possible to relax the condition \eqref{eq:centered_frequency} to ask for the frequency $\bw^{(k)} \in [0,\infty)^d$ such that $\norm{f_k}_\infty = \abs*{f_k(\bw^{(k)})}$ to satisfy
$$
\inf_{j \in [d]}\abs*{w^{(k)}_j} \geq \mathrm{poly}(d^{-1})~.
$$
\end{example}



\subsection{Efficient approximation under a sparsity condition of the spherical harmonics decomposition}\label{sec:sh:concentrated}

Works by Barron \citep{barron1993universal,klusowski2018approximation} essentially show that efficient approximation holds under a sparsity condition on the Fourier transform of the function to approximate; more specifically, for $f\in L^1(\R^d)$, the rate of (uniform) approximation is controlled by the quantity $\int_{\R^d} \norm{\bw}_1^2\abs{\hat{f}(\bw)}\,d\bw$. In this section we show that an equivalent control can be determined for approximation on the sphere, in terms of spherical harmonics decomposition. For technical reason, the result is estabilished for functions in $\hat{H}^d \doteq H_1^d \oplus \bigoplus_{k=1}^\infty H_{2k}^d$ (which correspond to the space of function in $L^2_S$ whose odd part is linear) and mainly for ReLu activation. We briefly discuss extensions to different activation functions in Remark \ref{remark:sh_ub_activations}. Consider the space of homogeneous one-hidden-layer neural networks with ReLU activations:
$$
\mathcal{F}_N^{\mathrm{ReLU}, 0} = \bra*{ f : \bx\in\S\mapsto \sum_{k=1}^N u_k \parr*{\bw_k^T\bx}_+ \st \bu \in\R^N, \bw_k \in\S} ~.
$$
Since 
$$
\parr*{\bw^T\bx}_+ = \frac{1}{2}\abs*{\bw^T\bx} + \frac{1}{2}\parr*{\bw^T \bx}~,
$$
every function in $\mathcal{F}_N^{\mathrm{ReLU}, 0}$ is the sum of a linear function with an even one. In other words, $\mathcal{F}_N^{\mathrm{ReLU}, 0} \subset \hat{H}^d$. Since any linear function belongs to $\mathcal{F}_2^{\mathrm{ReLU}, 0}$, it is equivalent to consider the problem of approximating even functions by homogeneous one-hidden-layer neural networks with activation $\mathrm{abs}(x) = \abs*{x}$, that is, elements of the space  
$$
\mathcal{F}_N^{\mathrm{abs}, 0} = \bra*{ f : \bx\in\S\mapsto \sum_{k=1}^N u_k \abs*{\bw_k^T\bx} \st \bu \in\R^N, \bw_k \in\S} ~.
$$
To study this, consider the corresponding functional space 
$$
\mathcal{H}^1 \doteq \bra*{ h_\pi \st \text{$\pi$ is a signed even Radon measure} }
$$ 
where $h_\pi$ is defined to be the function
$$
h_\pi:\bx \in \S \mapsto \int_\S \abs*{\bw^T \bx}\,d\pi(\bw)~.
$$
The space $\mathcal{H}^1$ is a Banach space endowed with the norm $\gamma_1(h) = \inf_{h\st h = h_\pi}\norm{\pi}_1$. As discussed in the introduction, the space $\mathcal{H}^1$ consists of functions which are efficiently approximable by one-hidden-layer networks. More formally, the following holds.
\begin{theorem}[\cite{bourgain1989approximation}]\label{theo:sampling_H1}
Let $f \in \mathcal{H}^1$. Then it holds that
$$
\inf_{f_N \in \mathcal{F}_N^{\mathrm{abs},0}} \norm{f - f_N}_\infty  \leq c \frac{\gamma_1(f)}{N^{1/3}}
$$
where $c > 0$ is a numerical constant. Moreover, $f_N$ satisfying the bound can be chosen to satisfy $\gamma_1(f_N) \leq \gamma_1(f)$.
\end{theorem}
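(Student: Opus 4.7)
I would prove the theorem by a randomized sampling of the representing measure $\pi$ combined with a dimension-free Rademacher-complexity bound for the ridge class $\{\abs{\bw^T\cdot}:\bw\in\S\}$. For any $\delta>0$, pick an even signed Radon measure $\pi$ with $h_\pi=f$ and $\norm{\pi}_1\leq\gamma_1(f)(1+\delta)$, and decompose $\pi=a_+\mu_+-a_-\mu_-$ with $\mu_\pm$ even probability measures on $\S$ and $a_\pm=\norm{\pi_\pm}_1$. Sample $N$ i.i.d.\ pairs $(\bw_i,s_i)_{i=1}^N$ with $\bw_i\sim(a_+\mu_++a_-\mu_-)/\norm{\pi}_1$ and $s_i\in\{\pm 1\}$ recording whether $\bw_i$ was drawn from $\mu_+$ or $\mu_-$, and define
$$
\hat f_N(\bx)\doteq\frac{\norm{\pi}_1}{N}\sum_{i=1}^N s_i\,\abs{\bw_i^T\bx}\in\mathcal{F}_N^{\mathrm{abs},0}~.
$$
A one-line computation yields $\E[\hat f_N(\bx)]=f(\bx)$ pointwise, and by construction $\gamma_1(\hat f_N)\leq\norm{\pi}_1\leq\gamma_1(f)(1+\delta)$.

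\textbf{Dimension-free uniform bound.} The central step is to bound $\E\norm{\hat f_N-f}_\infty$ by a quantity independent of $d$. By standard symmetrization with auxiliary Rademacher signs $\sigma_i$ independent of everything else,
$$
\E\norm{\hat f_N-f}_\infty\leq\frac{2\norm{\pi}_1}{N}\,\E\,\E_\sigma\sup_{\bx\in\S}\Big|\sum_{i=1}^N\sigma_i s_i\,\abs{\bw_i^T\bx}\Big|~.
$$
Since $\sigma_i s_i$ is still Rademacher (conditionally on the data), and since $t\mapsto|t|$ is $1$-Lipschitz with $|0|=0$, the Ledoux-Talagrand contraction principle, together with a routine argument to remove the outer absolute value using the symmetry of $\S$, reduces the inner expectation to a constant multiple of
$$
\E_\sigma\sup_{\bx\in\S}\Big|\sum_{i=1}^N\sigma_i\,\bw_i^T\bx\Big|=\E_\sigma\norm{\textstyle\sum_{i=1}^N\sigma_i\bw_i}_2\leq\sqrt{N}~,
$$
by Jensen's inequality and $\norm{\bw_i}_2=1$. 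Chaining these estimates gives $\E\norm{\hat f_N-f}_\infty\leq c\,\gamma_1(f)(1+\delta)/\sqrt{N}$.

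\textbf{Conclusion and main obstacle.} Selecting a realization that attains the expected bound produces $f_N\in\mathcal{F}_N^{\mathrm{abs},0}$ with $\gamma_1(f_N)\leq\gamma_1(f)(1+\delta)$ and $\norm{f-f_N}_\infty\leq c\,\gamma_1(f)(1+\delta)/\sqrt{N}$; a compactness argument on the set of representing measures (weak-$\ast$ compactness of the unit ball of Radon measures on $\S$) lets us send $\delta\to 0$, yielding the stated bound, in fact with the improved exponent $1/2$ in place of $1/3$. The only delicate point is the dimension-free aspect of the uniform estimate: a naive $\varepsilon$-net argument on $\S$ would pay a $(1/\varepsilon)^{d-1}$ factor in the entropy, destroying the dimension-independence. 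The Ledoux-Talagrand contraction circumvents this by linearizing the nonlinear ridge class into the dual ball of $\bw\mapsto\bw^T\bx$, whose Rademacher complexity on $\S$ is controlled by the Euclidean norm of $\sum\sigma_i\bw_i$, a $d$-independent $O(\sqrt{N})$ quantity.
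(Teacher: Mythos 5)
The paper does not actually prove this statement: it is imported wholesale from \citep{bourgain1989approximation}, so there is no internal proof to measure your argument against, and any self-contained proof is by definition a different route. Your route is the standard Maurey empirical-sampling scheme for the representing measure combined with symmetrization and Rademacher-complexity control, and as far as I can verify it is sound: unbiasedness of $\hat f_N$, the symmetrization inequality for the centered empirical process indexed by $\bx\in\S$, and the reduction of the complexity of the class $\bra*{\bw\mapsto\abs{\bw^T\bx}\st\bx\in\S}$ to $\E_\sigma\norm{\sum_{i}\sigma_i\bw_i}_2\leq\sqrt{N}$ are all correct, and together they give the dimension-free bound $\E\norm{\hat f_N-f}_\infty\leq 4\norm{\pi}_1N^{-1/2}$, hence the theorem with the stronger exponent $1/2$ (the cited $N^{-1/3}$ is simply weaker than what this argument delivers, and the side condition $\gamma_1(f_N)\leq\gamma_1(f)$ comes for free since $\sum_i\abs{u_i}=\norm{\pi}_1$). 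Three points should be stated precisely rather than gestured at. First, the ``routine argument to remove the outer absolute value'' is exactly the Ledoux--Talagrand comparison theorem (their Theorem 4.12), which bounds $\E_\sigma\sup_{t\in T}\abs*{\sum_i\sigma_i\phi_i(t_i)}$ by $2\,\E_\sigma\sup_{t\in T}\abs*{\sum_i\sigma_it_i}$ for contractions with $\phi_i(0)=0$; the more commonly quoted contraction principle without outer absolute values does not apply directly, so cite the correct version. Second, the $\delta\to0$ compactness step at the end is unnecessary: the infimum defining $\gamma_1(f)$ is attained, because the set of representing measures with $\norm{\pi}_1\leq\gamma_1(f)+1$ is weak-$*$ compact and $\pi\mapsto h_\pi(\bx)$ is weak-$*$ continuous for each fixed $\bx$ (the test function $\bw\mapsto\abs{\bw^T\bx}$ is continuous), so you may sample from an optimal $\pi$ outright. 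Third, the supremum over the uncountable index set $\S$ in the symmetrization step needs the standard remark that the process is continuous in $\bx$, hence separable, so no measurability issue arises. With those details pinned down the proof is complete, and it is arguably preferable to the citation, since it is elementary, constructive up to the probabilistic selection, and yields a better rate.
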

The question of interest can now be transposed to: which functions $f\in C(\S)$ have a (polynomially) small norm $\gamma_1(f)$? One way to approach this problem is by the so-called Blaschke–Levy operator. Consider the transformation
$$
T \varphi = \int_{\S}\abs*{\bx^T\by}\varphi(\by)\,dS(\by)
$$
for functions $\varphi \in C(\S)$. $T$ can be described in terms of spherical harmonics \citep{rubin1998inversion}  as 
$$
T\varphi = \sum_{k\geq 0\; \mathrm{even}} \sigma_k \varphi_k \quad \text{where} \quad \sigma_k =        \frac{(-1)^{1+ k/2}}{2\pi} \frac{ \Gamma((k-1)/2) \Gamma(d/2) }{\Gamma((k+d+1)/2) } ~.
$$
In particular, it holds that
the functional $T$ is an automorphism of $C^\infty_{even}(\S)$ (the set of even function in $C^\infty(\S)$) \citep{rubin1998inversion} . Clearly, its inverse can be defined in terms of spherical harmonics by 
$$
T^{-1}:\varphi\in C^\infty_{even}(\S) \mapsto  \sum_{k\geq 0\; \mathrm{even}} \sigma_k^{-1} \varphi_k  ~.
$$
The following is immediate.
\begin{proposition}\label{prop:F1-space-Tinverse}
For any $\varphi \in C^\infty_{even}(\S)$ it holds that $\varphi \in\mathcal{H}^1$ and 
$$
\gamma_1(\varphi) = \norm*{T^{-1}\varphi}_1~.
$$
\end{proposition}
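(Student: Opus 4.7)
The plan is to exhibit a specific signed Radon measure attaining $\norm{T^{-1}\varphi}_1$ and then argue it is the unique even representation of $\varphi$, so the infimum in the definition of $\gamma_1$ is in fact a minimum. Concretely, I would set $\psi \doteq T^{-1}\varphi$, which lies in $C^\infty_{even}(\S)$ because $T$ is an automorphism of that space, and let $\pi^\star$ be the absolutely continuous measure with density $\psi$ relative to $dS$ (which is even because $\psi$ is). Substituting this density into the definition of $h_{\pi^\star}$ gives
\[
h_{\pi^\star}(\bx) = \int_\S |\bw^T\bx|\,\psi(\bw)\,dS(\bw) = (T\psi)(\bx) = \varphi(\bx),
\]
so $\varphi \in \mathcal{H}^1$ and $\gamma_1(\varphi) \le \norm{\pi^\star}_1 = \norm{\psi}_1 = \norm{T^{-1}\varphi}_1$.

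For the reverse inequality, take any even signed Radon measure $\pi$ with $h_\pi = \varphi$. I would extend $T$ to even signed Radon measures via the same formula $(T\pi)(\bx) \doteq \int_\S |\bx^T\by|\,d\pi(\by)$; then the hypothesis rewrites as $T(\pi - \pi^\star) \equiv 0$ on $\S$. It suffices to conclude $\pi = \pi^\star$, which forces $\norm{\pi}_1 = \norm{T^{-1}\varphi}_1$ and shows that the infimum is attained.

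The main obstacle is then the injectivity of $T$ on even signed Radon measures, which I would settle via the spherical-harmonic multiplier description of $T$ recalled in the excerpt. Multiplying $(T(\pi - \pi^\star))(\bx) = 0$ by a basis element $Y^d_{k,i}(\bx)$, integrating over $\S$, and applying Fubini yields $\sigma_k \int_\S Y^d_{k,i}\,d(\pi - \pi^\star) = 0$, because Funk--Hecke gives $T Y^d_{k,i} = \sigma_k Y^d_{k,i}$ for each even $k$. Since $\sigma_k \ne 0$ for every even $k$, and every odd-degree spherical-harmonic moment of the even measure $\pi - \pi^\star$ vanishes automatically, all spherical-harmonic moments of $\pi - \pi^\star$ vanish. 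Spherical harmonics span the polynomials and are dense in $C(\S)$ by Stone--Weierstrass, so $\pi - \pi^\star$ vanishes as a continuous linear functional on $C(\S)$; the Riesz representation theorem then gives $\pi = \pi^\star$, yielding $\gamma_1(\varphi) = \norm{T^{-1}\varphi}_1$.
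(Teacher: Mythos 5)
Your proposal is correct. The first half (taking $\pi^\star = (T^{-1}\varphi)\,dS$ and checking $h_{\pi^\star} = T(T^{-1}\varphi) = \varphi$) is exactly the ``immediate'' part the paper has in mind, and it gives $\gamma_1(\varphi)\le \norm{T^{-1}\varphi}_1$. Where you genuinely diverge is in showing the infimum is attained at $\pi^\star$: you prove outright injectivity of $T$ on even signed Radon measures, by pairing $T\nu$ against each $Y^d_{k,i}$, using Funk--Hecke to see $TY^d_{k,i}=\sigma_k Y^d_{k,i}$ with $\sigma_k\neq 0$ for even $k$, killing the odd moments by parity, and invoking density of the span of spherical harmonics in $C(\S)$ plus Riesz. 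The paper instead (implicitly, in its proof of the subsequent Proposition on $\mathcal{H}^1$) gets the same conclusion by duality: for \emph{any} representing even measure $\pi$ one has $\norm{\pi}_1=\sup_{\norm{\varphi}_\infty\le 1}\int\varphi\,d\pi=\sup_{\norm{\varphi}_\infty\le 1}\prodscal{T^{-1}\varphi,h_\pi}$, a quantity depending only on $h_\pi$, so all representations have the same total variation without ever proving there is only one. Both routes are valid; yours buys the stronger statement that the even representing measure is unique, while the paper's duality computation is what generalizes to non-smooth $f$ in Proposition~\ref{prop:F1-space}. The only points worth making explicit in your write-up are that Fubini is justified because $\abs{\bx^T\by}Y^d_{k,i}(\bx)$ is bounded and $\nu$ is finite, and that $\sigma_k\neq 0$ for all even $k\ge 0$ (including $k=0$, where $\Gamma(-1/2)=-2\sqrt{\pi}$) is read off the displayed formula for $\sigma_k$.
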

Using these results, we can proceed similarly to the work \citep{ongie2019function} and obtain the following.
\begin{proposition}\label{prop:F1-space}
Let $f\in C(\S)$ even. It holds that $f \in \mathcal{H}^1$ if and only if 
\begin{equation}\label{eq:norm_H1_finite}
\sup_{\varphi\in C^\infty_{even}(\S)\st \norm{\varphi}_\infty \leq 1} \prodscal{T^{-1}\varphi, f} < \infty~.
\end{equation}
In this case, 
$$
\gamma_1\parr{f} = \sup_{\varphi\in C^\infty_{even}(\S)\st \norm{\varphi}_\infty \leq 1} \prodscal{T^{-1}\varphi, f} ~.
$$
\end{proposition}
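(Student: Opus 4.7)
\textbf{Proof plan for Proposition \ref{prop:F1-space}.} The plan is to interpret the supremum on the right-hand side as a dual norm and to identify $\gamma_1$ with this dual via the Blaschke--Levy operator $T$ and its inverse on $C^\infty_{even}(\S)$. The two inequalities are handled separately: the easy direction uses $T \circ T^{-1} = \mathrm{Id}$ together with Fubini, and the hard direction uses Riesz representation plus the density of $C^\infty_{even}(\S)$ in $C_{even}(\S)$.

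\textbf{Easy direction.} First I would show $\sup_{\varphi \in C^\infty_{even}(\S),\ \norm{\varphi}_\infty \le 1} \prodscal{T^{-1}\varphi, f} \leq \gamma_1(f)$. Assume $f = h_\pi$ for some even signed Radon measure $\pi$. For any $\varphi \in C^\infty_{even}(\S)$, Fubini's theorem and the self-adjointness of the integral kernel $(\bx,\by)\mapsto \abs{\bx^T\by}$ yield
\begin{align*}
\prodscal{T^{-1}\varphi, h_\pi} &= \int_{\S} T^{-1}\varphi(\bx) \int_{\S} \abs{\bw^T\bx}\,d\pi(\bw)\,dS(\bx) \\
&= \int_{\S} T(T^{-1}\varphi)(\bw)\,d\pi(\bw) = \int_{\S}\varphi(\bw)\,d\pi(\bw).
\end{align*}
Hence $\abs{\prodscal{T^{-1}\varphi, f}} \le \norm{\varphi}_\infty \norm{\pi}_1$. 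Taking the supremum over admissible $\varphi$ and then the infimum over representing measures $\pi$ gives the desired bound.

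\textbf{Hard direction.} Denote $C \doteq \sup_{\varphi \in C^\infty_{even}(\S),\ \norm{\varphi}_\infty \le 1} \prodscal{T^{-1}\varphi, f}$ and assume $C<\infty$. The linear functional $L:\varphi \in C^\infty_{even}(\S) \mapsto \prodscal{T^{-1}\varphi, f}$ is then $\norm{\cdot}_\infty$-continuous of norm $\le C$. Since $C^\infty_{even}(\S)$ is dense in $C_{even}(\S)$, $L$ extends uniquely to a bounded functional on $C_{even}(\S)$ of the same norm. By symmetrization (extending $L$ to arbitrary $\psi \in C(\S)$ by $L(\psi) = L((\psi + \psi^{\vee})/2)$, where $\psi^{\vee}(\bx) = \psi(-\bx)$) and the Riesz representation theorem, we obtain an even signed Radon measure $\pi$ with $\norm{\pi}_1 \leq C$ such that $L(\varphi) = \int_{\S}\varphi\,d\pi$ for every $\varphi \in C_{even}(\S)$.

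\textbf{Identification $f = h_\pi$ and conclusion.} Repeating the computation from the easy direction, for every $\varphi \in C^\infty_{even}(\S)$ we have $\int_{\S}\varphi\,d\pi = \prodscal{T^{-1}\varphi, h_\pi}$, hence $\prodscal{T^{-1}\varphi, f - h_\pi} = 0$. Because $T^{-1}$ is an automorphism of $C^\infty_{even}(\S)$ (Proposition \ref{prop:F1-space-Tinverse} and the preceding discussion), the set $\bra{T^{-1}\varphi \st \varphi \in C^\infty_{even}(\S)}$ equals $C^\infty_{even}(\S)$, which is dense in $C_{even}(\S)$. Since $f$ and $h_\pi$ are both continuous and even, this density forces $f = h_\pi$ pointwise. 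Consequently $f \in \mathcal{H}^1$ and $\gamma_1(f) \le \norm{\pi}_1 \le C$, which combined with the easy direction yields $\gamma_1(f) = C$.

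\textbf{Main obstacle.} The delicate point is the Riesz representation step: one must verify that the extended functional is indeed realized by an \emph{even} Radon measure and that the extension is well-defined. Even-ness is secured by restricting to even test functions (equivalently, symmetrizing any candidate measure via $\pi \mapsto (\pi + \pi^{\vee})/2$, which does not increase the total variation). The density of $C^\infty_{even}(\S)$ inside $C_{even}(\S)$ (standard, e.g.\ via convolution with a spherical mollifier preserving the antipodal symmetry) then transfers the bound from smooth to merely continuous tests, and the automorphism property of $T^{-1}$ on $C^\infty_{even}(\S)$ closes the argument.
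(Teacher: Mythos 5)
Your proposal is correct and follows essentially the same route as the paper: the easy direction via $T(T^{-1}\varphi)=\varphi$ and Fubini, and the converse via extension of the functional $\varphi\mapsto\prodscal{T^{-1}\varphi,f}$ by density, symmetrization, Riesz representation, and the automorphism property of $T$ on $C^\infty_{even}(\S)$ to identify $f=h_\pi$. If anything, your handling of the infimum defining $\gamma_1$ (one inequality per direction, closed by $\gamma_1(f)\le\norm{\pi}_1\le C$) is slightly cleaner than the paper's, which writes $\gamma_1(f)=\norm{\pi}_1$ for an arbitrary representing measure.
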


The proof of Proposition \ref{prop:F1-space} is reported in section \ref{sec:proof_F1-space}.
Functions that satisfy equation \eqref{eq:norm_H1_finite} include all even functions in $C^{d+2}(\S)$ if $d$ is even and all even functions in $C^{d+3}(\S)$ if $d$ is odd \citep{weil1976centrally}. This is inline with existing results that show approximability by neural networks for functions whose regularity is proportional to the dimension $d$ (e.g. \citep{maiorov2000near}).

Given $f\in C(\S)$ even, the condition of  Proposition \ref{prop:F1-space}  is implied by the (weak) convergence (as $N\to\infty$) of the series 
$$
S_Nf = \sum_{k=0}^N \sigma_{2k}^{-1}f_{2k}
$$
to a finite signed measure $\pi$. In this case $f = h_\pi$. In particular, a stronger condition is convergence in $L^1(S)$. This is implied if it holds that 
\begin{equation}\label{eq:L1_H1_condition}
\sum_{k\geq 0\text{ even}} \abs*{\sigma_k}^{-1} \norm{f_k}_1 < \infty ~.
\end{equation}
Notice that, instead, the series converges in $L^2_S$ if and only if 
\begin{equation}\label{eq:L2_H2_condition}
\sum_{k\geq 0\text{ even}} \sigma_{k}^2 \norm{f_{k}}^2_2 <\infty ~.
\end{equation}
This is equivalent to asking that $f \in \mathcal{H}^2$, the RKHS given by the kernel function
$$
k : (\bx,\by)\in\S\times\S \mapsto \int_\S \abs*{\bx^T \bw}\abs*{\bw^T \by} \,dS(\bw)~.
$$
Since in this case $\mathcal{H}^2$ can be described as
$$
\mathcal{H}_2 \doteq \bra*{ h_\pi \st \text{$\pi$ is a signed even Radon measure with an $L^2_S$ density}  }~, 
$$
it is clear that $\mathcal{H}^1 \subset \mathcal{H}^2$. We refer to \citep{bach2017breaking} for more details about these statements. On the other hand, notice that the condition \eqref{eq:L1_H1_condition} is potentially much stronger than simply asking for $f \in \mathcal{H}^1$.

\begin{example}[Highly concentrated function]
Some computations show that 
\begin{equation}\label{eq:sigma_k_decay}
\abs*{\sigma_k}^{-1} \leq \Theta\parr*{ d^{3/4} k^2 \sqrt{N_k^d} } ~.
\end{equation} 
Using these observations it is then straightforward to prove the following.
\begin{proposition}
Let $\bra*{f^{(d)}}_{d}$ a sequence of even functions in $C(\S)$. Assume that there exist some constant $M,N >0$ constant such that
$$
\sqrt{N_k^d}\norm{f_k^{(d)}}_1 \leq O\parr*{ k^{M}d^N } \cdot \norm{f^{(d)}_k}_2 \quad 
\text{and} \quad 
\sum_{k=0}^\infty k^{M+2}\norm{f_k^{(d)}}_2 = O(d^N)~.
$$
 Then the sequence $\bra*{f^{(d)}}_{d\geq 2}$ is universally approximable by the space $\mathcal{F}_N^{\mathrm{abs},0}$.
\end{proposition}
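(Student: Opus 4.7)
The plan is to place each $f^{(d)}$ in $\mathcal{H}^1$ with $\gamma_1(f^{(d)}) = \mathrm{poly}(d)$, then invoke Theorem~\ref{theo:sampling_H1} to produce a shallow approximant whose number of units is $\mathrm{poly}(d,\epsilon^{-1})$. The entry point is the sufficient criterion \eqref{eq:L1_H1_condition}: if the series $\sum_{k\geq 0\text{ even}} |\sigma_k|^{-1}\|f_k^{(d)}\|_1$ converges, then $S_{\tilde{N}}f^{(d)}$ converges in $L^1(S)$ to an even signed density $\pi$ with $\|\pi\|_1$ at most that sum, so $f^{(d)} = h_\pi\in \mathcal{H}^1$ and $\gamma_1(f^{(d)}) \leq \|\pi\|_1$.

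The central computation combines the first hypothesis with the decay estimate \eqref{eq:sigma_k_decay}: for every even $k$,
\[
|\sigma_k|^{-1}\|f_k^{(d)}\|_1
\leq O\!\parr*{d^{3/4} k^2}\,\sqrt{N_k^d}\,\|f_k^{(d)}\|_1
\leq O\!\parr*{d^{3/4+N}\, k^{M+2}}\,\|f_k^{(d)}\|_2~.
\]
The crucial cancellation is between the $\sqrt{N_k^d}$ (which diverges exponentially in $k$) sitting inside the bound for $|\sigma_k|^{-1}$ and the matching $\sqrt{N_k^d}$ appearing on the left-hand side of hypothesis~1, which leaves only polynomial factors in $k$ and $d$. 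Summing and using hypothesis~2,
\[
\sum_{k\geq 0\text{ even}}|\sigma_k|^{-1}\|f_k^{(d)}\|_1
\leq O\!\parr*{d^{3/4+N}}\sum_{k=0}^{\infty} k^{M+2}\,\|f_k^{(d)}\|_2
= O\!\parr*{d^{3/4+2N}}~.
\]
By the criterion recalled above, this yields $f^{(d)}\in\mathcal{H}^1$ with $\gamma_1(f^{(d)}) = O(d^{3/4+2N})$.

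Applying Theorem~\ref{theo:sampling_H1}, for every width $\tilde{N}\geq 1$ there exists $g_{\tilde{N}}\in \mathcal{F}_{\tilde{N}}^{\mathrm{abs},0}$ with $\|f^{(d)} - g_{\tilde{N}}\|_\infty \leq c\,\gamma_1(f^{(d)})/\tilde{N}^{1/3}$. Choosing $\tilde{N} = \lceil C\, d^{9/4+6N}\,\epsilon^{-3}\rceil$ for a large enough constant $C$ drives the error below $\epsilon$, so $\tilde{N} = \mathrm{poly}(d,\epsilon^{-1})$. The weight control required by universal approximability is immediate: by definition of $\mathcal{F}_{\tilde{N}}^{\mathrm{abs},0}$ the inner weights satisfy $\bw_k\in\S$ and the outer coefficients produced by Theorem~\ref{theo:sampling_H1} (a Maurey-style sampling argument from the representing measure of total mass $\gamma_1(f^{(d)})$) have magnitude bounded by $\gamma_1(f^{(d)})/\tilde{N}$, which is again polynomial in $d,\epsilon^{-1}$.

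There is no real obstacle; the argument is an assembly of prior pieces. The only point needing care is that the criterion \eqref{eq:L1_H1_condition} must be used in its \emph{quantitative} form, tracking that the $L^1$-limit $\pi$ satisfies $\|\pi\|_1\leq \sum_k |\sigma_k|^{-1}\|f_k^{(d)}\|_1$ rather than merely asserting $f^{(d)}\in\mathcal{H}^1$; and that the power of $\tilde{N}^{1/3}$ in Theorem~\ref{theo:sampling_H1} gets absorbed into the exponents of the polynomial in the final bound.
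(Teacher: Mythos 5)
Your proposal is correct and follows essentially the same route as the paper: bound $\gamma_1(f^{(d)})$ by $\sum_{k\text{ even}}|\sigma_k|^{-1}\norm{f_k^{(d)}}_1$, cancel the $\sqrt{N_k^d}$ factors via hypothesis 1 and the estimate \eqref{eq:sigma_k_decay} to get $\gamma_1(f^{(d)}) = O(d^{3/4+2N})$, and conclude with Theorem \ref{theo:sampling_H1}. The extra care you take with the quantitative form of \eqref{eq:L1_H1_condition} and with the weight bounds is a welcome but inessential elaboration of the paper's argument.
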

\begin{proof}[Proof]
By Proposition \ref{prop:F1-space-Tinverse} and equation \eqref{eq:sigma_k_decay} above we get that
\begin{align*}
\gamma_1(f^{(d)}) & \leq \sum_{k\geq 0 \text{ even}} \abs*{\sigma_k}^{-1}\norm{f_k^{(d)}}_1 \leq \Theta(d^{N + 3/4})\sum_{k\geq 0 \text{ even}} k^{2+M} \norm{f_k^{(d)}}_2 \leq O(d^{3/4+2N})~.
\end{align*}
The application of Theorem \ref{theo:sampling_H1} concludes the proof.
\end{proof}

The proposition above requires essentially two conditions to hold. First, that the energy of the functions decreases fast enough (yet polynomially in $k$ and $d$). The second condition is that the Fourier components of the function are concentrated enough, that is they are polynomially close to the bound \eqref{eq:ell_2_1_bound}. 
We remark that this condition is  infact pretty strong; it requires the function $f$ to be band-limited. According to \citep{dai2016reverse}, it holds that
$$
\norm{f_k^{(d)}}_2 \leq C(d) k^{\frac{d-2}{4}}\norm{f_k^{(d)}}_1~,
$$
for some function $C(d)$.
Then $f^{(d)}$ would satisfy
$$
\sqrt{N^d_k}\leq \mathrm{poly}(k,d) \frac{\norm{f^{(d)}_k}_2}{\norm{f^{(d)}_k}_1} \leq \mathrm{poly}(k,d) k^{\frac{d-2}{4}} 
$$
Since $\sqrt{N_k^d} \geq c(d) k^{\frac{d-2}{2}} $ for some $c(d)$, this implies that $k^{\frac{d-2}{4}}\mathrm{poly}(k^{-1}) \leq H(d)$ for some function $H(d)$. It follows that $k$ must satisfy $k \leq K(d)$ for some $K(d)$. Although, the rate of the function $K(d)$ does not follow from \citep{dai2016reverse}; we conjecture that $K(d)$ behaves as a power of $d$.
\end{example}

\begin{example}[High energy zonal harmonics]
The properties discussed in this section indicate that high-energy only does not yield not-universal-approximability. As an `extreme' case, consider the case of a zonal harmonic $f(\bx) \doteq P_k^d(\bw^T \bx)$, for  $\bx,\bw\in\S$ where $\bw$  is fixed. Notice that $\norm{f}_\infty = 1$. It holds that
$$
\gamma_1(f) = \frac{\norm{f_k}_1}{\abs{\sigma_k}} \leq O(k^2d^{3/4})\sqrt{N_k^d} \norm{f_k}_1 \leq O(k^2d^{3/4})  \norm*{ \sqrt{N_k^d} f_k}_2 = O(k^2d^{3/4})~,
$$
which implies universal approximability by Theorem \ref{theo:sampling_H1}. 
Similarly, polynomial combinations of zonal harmonics can be well approximated, as expected.
\end{example}

\begin{remark}[Ridge functions]
For a single neuron network $f(\bx) = |\bw^T\bx|$, it holds $\norm{f}_\infty = 1$ and $\norm{f}_2 = d^{-1/2}$. The spherical components of $f$ are given by
$$
f_k(\bx) = N_k^d\parq*{ \parr*{T \parq*{P_k^d(\bx^T\cdot)}}(\bw) } = (\sigma_k N_k^d)  P_k^d(\bw^T\bx)~.
$$
In particular, it holds 
$$
1 = \gamma_1(f) = \norm*{\sum_{k \geq 0 \text{ even}} \sigma_k^{-1}f_k }_1
= \norm*{\sum_{k \geq 0 \text{ even}} N_k^d P_k^d(\bw^T\cdot) }_1~.
$$
Therefore, understanding how tight (or strong) condition \eqref{eq:L1_H1_condition} is highly correlated with understanding convergence of the series $\sum_{k \geq 0 \text{ even}} N_k^d \norm*{P_k^d(\bw^T\cdot) }_1$, or equivalently, computing $\norm*{P_k^d}_{\mu_d,1}$.
\end{remark}

\begin{remark}\label{remark:sh_ub_activations}
While the result of this section mainly concern approximation by homogeneous one-hidden-layer networks with the ReLU (or absolute value) activation, they can easily be extended to any other activation satisfying Assumption \ref{ass:activation}, under the same assumptions. Moreover, notice that, thanks to Theorem \ref{theo:sampling_H1}, universal approximation by $\mathcal{F}_N^{\mathrm{ReLU},0}$ is equivalent to universal approximation by $\mathcal{H}_1 \oplus H_1^d$.
\end{remark}

\vskip 0.2in
\bibliography{refs}

\appendix

\section{Proofs of depth-separation results}

\subsection{Proof of Theorem \ref{theo:lb}}
\label{sec:proof:theo:lb}

The proof of the lower bound follows the same strategy as \citep{eldan2016power}.
For sake of simplicity in the following we remove the dimension $d$ from the following notations: $\bw_d = \bw$ and $\bv_d = \bv$. In the following we always assume $d\geq 3$. Let $S\subseteq [d]$ a subset and let $\mathbf{I}_{S}$ be the truncated identity matrix defined as
\begin{align*}
    \mathbf{I}_S:= \sum_{s\in S} \mathbf{e}_{s} \mathbf{e}_{s}^\top \,. 
\end{align*}
Moreover, define the function $H_S(\mathbf{x})$ as 
\begin{equation}\label{eq:def_H}
    H_S(\mathbf{x}) \doteq \prod_{i: i\in S}\mathbf{1}_{x_i>0}\prod_{j: j\in [d]\backslash S}\mathbf{1}_{x_j\leq 0}~.
\end{equation}
Lastly, for a subset $S\subseteq [d],$ let $\mathbf{v}_S:=\bv+\mathbf{I}_S\bw$ and define the function $\sigma_{r,S}(\mathbf{x}) :=\sigma_r(\mathbf{v}_S^T\mathbf{x}) $. Therefore, the expression of $f_{r_d,\bw,\bv}$  can be rewritten as: 
\begin{equation}\label{eq:def_tilde_g}
\begin{split}
    {f_{r_d,\bw,\bv}}(\mathbf{x}) 
    = \sum_{S\subseteq [d]}  g_S(\mathbf{x})=  \sum_{S\subseteq [d]} H_S(\mathbf{x})\sigma_{r_d,S}(\mathbf{x})
\end{split}
 \end{equation}
where $g_S(\mathbf{x}):=H_S(\mathbf{x})\sigma_{r_d,S}(\mathbf{x})$. 
Let the space of $N$-units one-hidden-layer networks be
\begin{equation*}
\mathcal{F}_N = \bra*{ f_N: \bx\in\R^r\mapsto  \sum_{k=1}^N \sigma_k(\mathbf{a}_k^T\mathbf{x}) \st \ba_k \in \R^d,\, \sigma_k \text{ are 1-Lipschitz activations} }.    
\end{equation*}
Assume that
\begin{itemize}
\item[(A1)] it holds that $\tau_d\cdot r_d \geq \beta d^k$ for some constant $k\geq 1$;
\item[(A2)] it holds that $\eta > \log_2 \parr*{\norm{\psi}_1\sqrt{K/2}}$
\end{itemize}
Then, for large enough $d$, it holds
\begin{align}
\label{mainresres}
\inf_{f \in \mathcal{F}_N} \norm{f_{r_d,\bw,\bv} - f}_{\varphi}^2 \geq 1 - N \left( 2^{1-2\eta } K \| \psi \|_1^2 \right)^d  O(d \cdot \tau_d \cdot r_d )  ~,
\end{align}
where we denote
$$
\norm{g}_\varphi^2 \doteq \int_{\R^d}\abs*{g(\bx)}^2\varphi^2(\bx)\,d\bx
$$
for $g \in L^2_{\varphi^2}$. 
In particular, if $N \simeq \mathrm{poly}(d)$, then the error \eqref{mainresres} tends to $1$ as $d\to \infty$. 

To show equation \eqref{mainresres}, we proceed as follows.
Let $\mathcal{F}=\{ \widehat{f \varphi} \st f \in \mathcal{F}_1 \}$, and denote by $F := \widehat{\varphi \cdot f_{r_d,\bw,\bv}} = \hat{f}_{r_d,\bw,\bv} * \hat{\varphi}$.
Since $\hat{\varphi}$ has compact support in $[-K,K]^d$ and the Fourier transform of a one-unit shallow network $f(\mathbf{x}) = \sigma( \mathbf{x}^T \mathbf{a})$ has support in the line $\{ \bxi \st \bxi = \alpha \mathbf{a},\, \alpha \in \mathbb{R} \}$, 
it follows that any function in $\mathcal{F}$ is supported in a tube $T=\{ \bxi\st \bxi = \alpha \mathbf{a}+[-K,K]^d,\, \alpha \in \mathbb{R} \}$ of radius $K$. 
For each tube $T$ of radius $K$, we consider $\mathcal{T}_T=\{\phi \in L^2 \st \text{ supp}({\phi}) \subseteq T \}$ and
$$\kappa \doteq \sup_{T\text{ tube of radius }K}  \norm{ P_{\mathcal{T}_T}(F) }_2  ~,$$
where $P_{\mathcal{T}_T}(F) = \mathrm{argmin}_{h\in \mathcal{T}_T} \|h-F\|^2_2. $ We claim that 
\begin{equation}
\label{eq:kappa_result}
  \inf_{f \in \mathcal{F}_N} \| f_{r_d,\bw,\bv} - f \|_\varphi^2 \geq 1 - N \kappa^2~.  
\end{equation}
Indeed, given $f \in \mathcal{F}_N$, denote by $T_1, \dots T_N$ the associated $N$ tubes, 
and by $\mathcal{T}_{T_1,\dots T_N} = \bigoplus_{k\in[N]} \mathcal{T}_{T_k}$ the corresponding subspace spanned by $\mathcal{T}_{T_k}$, $k\in[N]$. Then, by using the isometry of the Fourier transform, we have that
\begin{align}
\inf_{f \in \mathcal{F}_N} \| f - f_{r_d,\bw,\bv} \|_\varphi^2 &= \inf_{f \in \mathcal{F}_N} \| \widehat{f\varphi} - F \|^2_2\nonumber\\
&\geq \inf_{T_1,\dots T_N} \inf_{h \in \mathcal{T}_{T_1,\dots T_N} }  \| h - F \|^2_2\nonumber \\
&= \inf_{T_1,\dots T_N}   \| P_{\mathcal{T}_{T_1,\dots T_N}} (F) - F \|^2_2\nonumber\\
&= \inf_{T_1,\dots T_N} ( \| F \|^2_2 - \| P_{\mathcal{T}_{T_1,\dots T_N}} (F) \|^2_2)~.\label{eq:inf_bd_fg_tmp}
\end{align}
Now, observe  that 
$\sup_{T_1,\dots,T_N}\|P_{\mathcal{T}_{T_1,\dots T_N}} (F)\|^2_2 \leq N\sup_{T}\|P_{\mathcal{T}_{T}} (F)\|^2_2$.  
Equation \eqref{eq:inf_bd_fg_tmp}  therefore becomes
\begin{eqnarray*}
\inf_{f \in \mathcal{F}_N} \| f -  f_{r_d,\bw,\bv} \|_\varphi^2&\geq& \| F \|^2_2 - N \sup_T \| P_{\mathcal{T}_T}(F) \|^2_2 ~,
\end{eqnarray*}
which proves \eqref{eq:kappa_result} by plugging in the definition of $\kappa$ and recalling that $\|F \|^2_2 = \| f_{r_d,\bw,\bv} \|_\varphi^2 = 1$ by Parseval. 
To establish (\ref{mainresres}), it is therefore sufficient to prove that 
\begin{equation}
\label{kappares}
    \kappa^2 \leq  \left(\| \psi \|_1^2 2^{1-2\eta} K \right)^d  O(d \cdot \tau_d \cdot r_d )~. 
\end{equation}
The rest of the proof will be devoted to establishing a sufficiently sharp upper bound for $\| P_{\mathcal{T}_T}(F) \|_2$. 
 Observe that 
 $P_{\mathcal{T}_T}(F)$ is simply obtained by setting to zero all frequencies of $F$ outside $T$. 
We start by computing an upper bound on $\abs{F(\bxi)}$. 
We claim the following.
\begin{lemma}
\label{lemmaboundF} It holds that
\begin{eqnarray}
\label{vuu0}
    |F(\bxi)| &\leq& \frac{\| \varphi \|_1}{2^d}\sum_{S\subseteq [d]} \prod_{j=1}^d \min\left( 1, \frac{2K}{ \pi (|\xi_j - \xi_{S,j}|-K)_+}\right)~.
\end{eqnarray}
\end{lemma}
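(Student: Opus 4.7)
The plan is to expand $\varphi \cdot f_{r_d,\bw,\bv}$ into a sum of $2^d$ separable pieces and then bound the Fourier transform of each piece coordinatewise.

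First, starting from the representation $f_{r_d,\bw,\bv}(\bx) = \sum_{S \subseteq [d]} H_S(\bx)\sigma_{r_d,S}(\bx)$ given in \eqref{eq:def_tilde_g}, I multiply through by $\varphi(\bx) = \prod_j \psi(x_j)$. Because $\sigma_{r_d,S}(\bx) = e^{2\pi i r_d \bv_S^T\bx}$ is a pure complex exponential, multiplying by it translates in the Fourier domain by $\bxi_S := r_d \bv_S$; combining this with the tensor-product structure of both $\varphi$ and $H_S$ yields
\begin{equation*}
F(\bxi) \;=\; \sum_{S\subseteq [d]} \prod_{j=1}^d \widehat{\psi\,\chi_{S,j}}(\xi_j - \xi_{S,j}),
\end{equation*}
where $\chi_{S,j} = \mathbbm{1}_{(0,\infty)}$ if $j\in S$ and $\chi_{S,j} = \mathbbm{1}_{(-\infty,0]}$ otherwise. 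The task therefore reduces to bounding the one-dimensional transforms $|\widehat{\psi\,\chi}(\eta)|$ for $\chi\in\{\mathbbm{1}_{>0}, \mathbbm{1}_{\leq 0}\}$.

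Next, I will establish two complementary bounds on $|\widehat{\psi\chi}(\eta)|$. The trivial $L^1$ bound gives $|\widehat{\psi\chi}(\eta)| \leq \|\psi\chi\|_1 = \|\psi\|_1/2$, using that $\psi$ is nonnegative and even (as in the canonical example of Remark \ref{remark:psi_condition}). For the decay bound valid when $|\eta|>K$, I decompose $\chi = (1\pm \mathrm{sign})/2$, so
\begin{equation*}
\widehat{\psi\chi}(\eta) \;=\; \tfrac{1}{2}\hat{\psi}(\eta) \,\pm\, \tfrac{1}{2}\bigl(\hat{\psi}\ast\widehat{\mathrm{sign}}\bigr)(\eta).
\end{equation*}
Since $\mathrm{supp}(\hat\psi)\subseteq[-K,K]$, the first term vanishes for $|\eta|>K$, and since $\widehat{\mathrm{sign}}(\eta)=1/(i\pi\eta)$ with the singularity $\zeta=\eta$ lying outside the integration region of the convolution, we obtain
\begin{equation*}
\bigl|(\hat\psi\ast\widehat{\mathrm{sign}})(\eta)\bigr| \;\leq\; \frac{1}{\pi}\int_{-K}^{K}\frac{|\hat\psi(\zeta)|}{|\eta-\zeta|}\,d\zeta \;\leq\; \frac{2K\,\|\hat\psi\|_\infty}{\pi(|\eta|-K)} \;\leq\; \frac{2K\,\|\psi\|_1}{\pi(|\eta|-K)},
\end{equation*}
using $\|\hat\psi\|_\infty\leq\|\psi\|_1$. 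Combining the two regimes yields $|\widehat{\psi\chi}(\eta)|\leq (\|\psi\|_1/2)\min(1,\,2K/[\pi(|\eta|-K)_+])$.

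Finally, inserting this coordinatewise estimate into the factorization of $F(\bxi)$ and using $\|\varphi\|_1=\|\psi\|_1^d$ gives the stated bound \eqref{vuu0}. The main technical step is the Hilbert-transform-style decay estimate, which is where the compact spectral support of $\psi$ enters essentially; all remaining steps are routine bookkeeping of the piecewise decomposition and the separability of $\varphi$ and $H_S$.
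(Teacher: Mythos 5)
Your proof is correct and follows essentially the same route as the paper's: the orthant decomposition of $\varphi\cdot f_{r_d,\bw,\bv}$ into $2^d$ separable pieces, the reduction to one-dimensional transforms $\widehat{\psi\chi}$ after the frequency shift by $\bxi_S=r_d\bv_S$, and the two coordinatewise bounds (the trivial $L^1$ bound, and the decay bound for $|\eta|>K$ exploiting the compact support of $\hat\psi$) are exactly the paper's argument. The only cosmetic difference is that the paper phrases the decay estimate as the Hilbert transform of $Q(u)=\mathbbm{1}\{u>0\}\psi(u)$ rather than as the convolution $\hat\psi\ast\widehat{\mathrm{sign}}$, which is the same computation.
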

Let $D(\bxi) \doteq \sum_S D_S(\bxi)$, with 
$D_S(\bxi) \doteq \prod_{j=1}^d \min\left(1, \frac{2K}{\pi(|\xi_j - \xi_{S,j}| - K)_+ } \right)$,
so that from Lemma \ref{lemmaboundF} we have 
\begin{equation}
\label{coppa}
|F(\bxi)| \leq 2^{-d} \| \varphi \|_1 D(\bxi) ~.   
\end{equation}
Recall that $\tau_d = \sup_{S\in[d]} \| \mathbf{v}_S \|_\infty$. 
Given $\bxi$ non-zero, 
we claim the following.
\begin{lemma} It holds that
\label{lemmatwo}
\begin{equation}
\label{col}
D(\bxi) \leq C_{K,\gamma} 2^{d(1 - \eta)} \min\left\{1, 2K (\pi (\| \bxi\|_\infty - r_d\tau_d - K)_+)^{-1} \right\}~,
\end{equation}
where $C_{K,\gamma} = 2\exp\parr*{\sqrt{\frac{8K}{\pi\gamma}}}$.
\end{lemma}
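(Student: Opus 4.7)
The plan is to establish the lemma via a key factorization of $D(\bxi)$ into a product over coordinates, followed by sharp coordinate-wise bounds exploiting the separation condition defining $\Omega_d$.

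First, I would observe that $\xi_{S,j} = r_d v_j + r_d w_j \mathbf{1}\{j\in S\}$ depends on $S$ only through the bit $\mathbf{1}\{j\in S\}$. Letting $\xi_{j,0} \doteq r_d v_j$, $\xi_{j,1} \doteq r_d(v_j + w_j)$, and $a_{j,b}(\xi_j) \doteq \min\!\parr*{1, \tfrac{2K}{\pi(|\xi_j - \xi_{j,b}|-K)_+}}$, the sum over $S \subseteq [d]$ distributes as a sum over independent bits $(b_1,\dots,b_d) \in \{0,1\}^d$, yielding
\begin{equation}
D(\bxi) \;=\; \prod_{j=1}^d A_j(\xi_j), \qquad A_j(\xi_j) \;\doteq\; a_{j,0}(\xi_j) + a_{j,1}(\xi_j).
\end{equation}
This reduces the task to bounding each factor $A_j$.

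Second, I bound $A_j$ in two ways. For $j \in \Omega_d$, the two centres $\xi_{j,0}, \xi_{j,1}$ are separated by $|\xi_{j,1}-\xi_{j,0}| = r_d|w_j| \geq \gamma d^2$. By the triangle inequality, for any $\xi_j$ at least one of $|\xi_j - \xi_{j,b}|$ exceeds $\gamma d^2/2$, so the corresponding $a_{j,b}(\xi_j)$ is at most $\tfrac{4K}{\pi(\gamma d^2 - 2K)_+}$, while the other term is trivially at most $1$; hence $A_j \leq 1 + \tfrac{4K}{\pi(\gamma d^2 - 2K)_+}$. For $j \notin \Omega_d$, I keep the trivial bound $A_j \leq 2$. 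Next, let $j^* \in [d]$ be an argmax so that $|\xi_{j^*}| = \|\bxi\|_\infty$. Since $|\xi_{S,j^*}| \leq r_d\|\mathbf{v}_S\|_\infty \leq r_d\tau_d$ for every $S$, the reverse triangle inequality gives $|\xi_{j^*} - \xi_{j^*,b}| \geq (\|\bxi\|_\infty - r_d\tau_d)_+$ for $b\in\{0,1\}$, so
\begin{equation}
A_{j^*}(\xi_{j^*}) \;\leq\; 2\min\!\parr*{1,\; \tfrac{2K}{\pi(\|\bxi\|_\infty - r_d\tau_d - K)_+}}.
\end{equation}

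Third, I combine these bounds across the three disjoint subsets $\{j^*\}$, $\Omega_d \setminus \{j^*\}$, and $[d]\setminus(\Omega_d\cup\{j^*\})$, using $|\Omega_d\setminus\{j^*\}| \leq d\eta_d$ and $|[d]\setminus(\Omega_d\cup\{j^*\})| \leq d(1-\eta_d)$:
\begin{equation}
D(\bxi) \;\leq\; 2\min\!\parr*{1,\,\tfrac{2K}{\pi(\|\bxi\|_\infty - r_d\tau_d - K)_+}} \parr*{1 + \tfrac{4K}{\pi(\gamma d^2 - 2K)_+}}^{d\eta_d} 2^{d(1-\eta_d)}.
\end{equation}
Since $\eta_d \geq \eta$, we have $2^{d(1-\eta_d)} \leq 2^{d(1-\eta)}$, so it remains to absorb the factor $2 \cdot (1 + \tfrac{4K}{\pi(\gamma d^2 - 2K)_+})^{d\eta_d}$ into the constant $C_{K,\gamma} = 2\exp\parr*{\sqrt{8K/(\pi\gamma)}}$.

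Fourth, this absorption is the one technical point. For $d \geq 2\sqrt{K/\gamma}$ one has $\gamma d^2 - 2K \geq \gamma d^2/2$, so the base is at most $1 + \tfrac{8K}{\pi\gamma d^2}$, and $(1+x)^n \leq e^{nx}$ gives a factor bounded by $\exp(8K/(\pi\gamma d))$; this in turn is at most $\exp(\sqrt{8K/(\pi\gamma)})$ as soon as $d\geq \sqrt{8K/(\pi\gamma)}$. For the finitely many smaller $d$ (bounded by a constant depending only on $K,\gamma$), one falls back on $A_j\leq 2$ everywhere, and the resulting $2^d$ is bounded by $\exp(\sqrt{8K/(\pi\gamma)})$ by construction of the threshold. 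Putting the two regimes together yields the claimed uniform constant $C_{K,\gamma}$. The main obstacle is precisely this bookkeeping around the constant: making the refined bound on $A_j$ meaningful requires $d$ to exceed a threshold depending on $K,\gamma$, and one must ensure the residual small-$d$ regime is consistent with the stated constant.
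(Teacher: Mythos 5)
Your proof is correct, and it takes a genuinely different route from the paper's. The paper never factorizes $D(\bxi)$: it works with the sum over subsets directly, picks the minimizing subset $S^*_\bxi$, relates $\mathsf{n}(\bxi-\bxi_S,\gamma d^2/2)$ to the Hamming distance $\mathsf{d}(S\cap\Omega_d,S^*_\bxi\cap\Omega_d)$ via a triangle inequality (which costs a factor $1/2$ in the exponent and hence a square root in the base), and then sums a binomial series $\sum_s \binom{|\Omega_d|}{s}\epsilon^{s/2}$; the second bound is obtained by peeling off the coordinate $q_\bxi$ achieving $\norm{\bxi}_\infty$ and rerunning the argument in dimension $d-1$. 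Your observation that $\xi_{S,j}$ depends on $S$ only through the bit $\mathbbm{1}\bra*{j\in S}$, so that $\sum_{S}\prod_j a_{j,\mathbbm{1}\{j\in S\}} = \prod_j(a_{j,0}+a_{j,1})$, collapses all of this combinatorics: the per-coordinate bound $A_j\leq 1+\tfrac{4K}{\pi(\gamma d^2-2K)_+}$ for $j\in\Omega_d$ plays the role of the paper's Hamming-distance lemma, is slightly sharper (no square root), and the peeling of $j^*$ becomes just replacing one factor of the product. Your bookkeeping is also sound: for $d\geq 2\sqrt{K/\gamma}$ one has $\gamma d^2-2K\geq\gamma d^2/2$, so $(1+\epsilon)^{d\eta_d}\leq e^{8K/(\pi\gamma d)}\leq e^{\sqrt{8K/(\pi\gamma)}}$ since $\sqrt{8/\pi}<2$, while for smaller $d$ the trivial $2^{d\eta_d}\leq 2^{2\sqrt{K/\gamma}}\leq 2e^{\sqrt{8K/(\pi\gamma)}}$ holds because $2\ln 2<\sqrt{8/\pi}$; the paper itself only invokes the lemma in the regime $d>2\sqrt{K/\gamma}$, so your small-$d$ case is extra care rather than a needed fix. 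The one thing each approach buys: yours is shorter and gives a marginally better base in the $\Omega_d$ product; the paper's pivot-plus-Hamming argument is the one that survives when the first layer is a general affine map $\bx\mapsto(\mathbf{U}\bx+\mathbf{b})_+$, where the summand no longer factorizes over coordinates.
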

Now, pick any arbitrary non-zero direction $\bnu$ such that $\| \bnu \|_\infty =1$. 
Let 
\begin{equation}
\label{eq:tubeinfty}
T = \{ \bxi\st  \inf_{\alpha \in \mathbb{R}} \| \bxi - \alpha \bnu \|_\infty \leq K \}    
\end{equation}
denote the tube of radius $K$ in the direction $\bnu$.
It holds that
\begin{eqnarray}
\label{eq:tubedecomp}
\int_T D(\bxi)^2 d\bxi &=& \underbrace{\int_{T \cap \left\{\| \bxi \|_\infty \leq 2 \tau_d r_d \right\}} D(\bxi)^2 d\bxi   }_{t_1} +  \underbrace{\int_{T \cap \left\{\| \bxi \|_\infty > 2 \tau_d r_d\right\} } D(\bxi)^2 d\bxi   }_{t_2} ~.
\end{eqnarray} 
In order to control the two terms $t_1$ and $t_2$, we use the following lemma to upper bound the measure of a $\ell_\infty$-cylinder.

\begin{lemma}
\label{lemmatubebound}
Let $T$ be an $\ell_\infty$-tube of radius $K$ as defined in \eqref{eq:tubeinfty}. If $\mu$ denotes the $d$-dimensional Lebesgue measure, 
then 
\begin{equation}
\label{eq:tubeb1}
    \mu\left( T \cap [-R,R]^d \right) \leq 8e^2  (d-1) (K+R) (2K)^{d-1}~.
\end{equation}
Moreover, if $g: \mathbb{R} \to \mathbb{R}$ is in $L^1(\mathbb{R})$ and  non-increasing, then  
\begin{equation}
\label{eq:tubeb2}
    \int_{T\cap \bra{\norm{\bxi}_\infty > R}} g( \| \bxi \|_\infty) \, d\bxi \leq 4 e^2 (d-1) (2K)^{d-1} \int_{R - K(2 + 3/(d-1))}^\infty g(u) du~,
\end{equation} 
as long as $R > K(2 + 3/(d-1))$. 
\end{lemma}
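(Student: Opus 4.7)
My plan is to exploit the $\ell_\infty$ normalisation $\|\bnu\|_{\infty}=1$ by fixing a coordinate $j^{*}$ with $|\nu_{j^{*}}|=1$ and slicing the tube $T$ at each level of $\xi_{j^{*}}$. The $(d{-}1)$-dimensional cross-section turns out to be an explicit Minkowski sum of a segment and a box, whose volume is given by a standard mixed-volume identity; this reduces part~(a) to a single one-dimensional integral. Part~(b) will then follow from part~(a) by a layer-cake argument.

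\textbf{Set-up.} After a permutation of coordinates and, if needed, replacing $\bnu$ by $-\bnu$---both of which leave $T$ invariant---I may assume $\nu_{1}=1$. Writing $\tilde{\bnu}\doteq(\nu_{2},\dots,\nu_{d})\in\R^{d-1}$, we have $\|\tilde{\bnu}\|_{1}\le d-1$ since $|\nu_{j}|\le 1$ for every $j$.

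\textbf{Proof of \eqref{eq:tubeb1}.} For every $\xi_{1}\in\R$ the slice $T(\xi_{1})\doteq\{\tilde{\bxi}\in\R^{d-1} : (\xi_{1},\tilde{\bxi})\in T\}$ coincides, on unrolling the definition of $T$, with the Minkowski sum
\[
T(\xi_{1}) \;=\; [\xi_{1}-K,\,\xi_{1}+K]\,\tilde{\bnu} \;+\; [-K,K]^{d-1}.
\]
The standard volume formula for the Minkowski sum of a box and a segment (a special case of Steiner's formula, provable by an easy Cavalieri induction) gives
\[
\mu(T(\xi_{1})) \;=\; (2K)^{d-1} \,+\, 2K\,\|\tilde{\bnu}\|_{1}\,(2K)^{d-2} \;\le\; d\,(2K)^{d-1}.
\]
Fubini then yields $\mu(T\cap[-R,R]^{d})\le\int_{-R}^{R}\mu(T(\xi_{1}))\,d\xi_{1}\le 2Rd\,(2K)^{d-1}$, and the inequalities $2d\le 8e^{2}(d-1)$ (valid for $d\ge 2$) together with $R\le R+K$ deliver \eqref{eq:tubeb1}.

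\textbf{Proof of \eqref{eq:tubeb2}.} Set $m(u)\doteq\mu(T\cap\{\|\bxi\|_{\infty}\le u\})$. The bound just established, applied with $R=u$, shows that $m$ is absolutely continuous with $m'(u)\le 2d\,(2K)^{d-1}$ almost everywhere. The Stieltjes layer-cake identity gives
\[
\int_{T\cap\{\|\bxi\|_{\infty}>R\}} g(\|\bxi\|_{\infty})\,d\bxi \;=\; \int_{R}^{\infty} g(u)\,dm(u) \;\le\; 2d\,(2K)^{d-1}\int_{R}^{\infty}g(u)\,du.
\]
Finally $2d\le 4e^{2}(d-1)$ for $d\ge 2$ and, since $g\ge 0$, $\int_{R}^{\infty}g\le\int_{R-K(2+3/(d-1))}^{\infty}g$, which together yield \eqref{eq:tubeb2}.

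\textbf{Expected main obstacle.} The only non-routine ingredient is the Minkowski-sum identity $\mu([-K,K]^{n}+[0,L]\bnu)=(2K)^{n}+L\|\bnu\|_{1}(2K)^{n-1}$; if one prefers not to appeal to mixed-volume theory, one proves it by induction on $n$ via Cavalieri in a coordinate where $\bnu$ has a nonzero component. All other steps are comparisons of absolute constants, and the loose factor $8e^{2}$ in the statement easily absorbs any slack.
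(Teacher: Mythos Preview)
Your proof of \eqref{eq:tubeb1} is correct and takes a genuinely different route from the paper. The paper covers $T\cap[-R,R]^d$ by $\ell_\infty$-boxes of radius $(1+\tfrac{1}{d-1})K$ centred at equally spaced points $js\,\bnu$ along the ray, optimises the spacing $s=K/(d-1)$, and reads off the bound as (number of boxes)$\times$(box volume); the factor $(1+\tfrac{1}{d-1})^d\le e^2$ in the statement is an artefact of this covering. Your slicing in the coordinate $\xi_1$ where $\nu_1=1$, combined with the zonotope identity $\mu([-K,K]^{n}+[0,L]\bw)=(2K)^n+L\|\bw\|_1(2K)^{n-1}$, is cleaner and in fact yields the sharper constant $2d$ in place of $8e^2(d-1)$.

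Your proof of \eqref{eq:tubeb2}, however, has a gap. From $m(u)\le 2du\,(2K)^{d-1}$ alone you \emph{cannot} conclude $m'(u)\le 2d\,(2K)^{d-1}$: a nondecreasing function bounded above by a linear function through the origin may still have arbitrarily large derivative on small sets. What your slicing actually controls is $\tfrac{d}{du}\,\mu\bigl(T\cap\{|\xi_1|\le u\}\bigr)=2\mu(T(u))\le 2d(2K)^{d-1}$; but $m(u)=\mu(T\cap[-u,u]^d)$ also grows through the other $2(d-1)$ faces of the cube, and those contributions are not accounted for.

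The fix is immediate from your own set-up. On $T$ one has $|\xi_1|\le\|\bxi\|_\infty\le|\xi_1|+2K$, since any admissible $\alpha$ satisfies $|\alpha-\xi_1|\le K$ and hence $|\xi_j|\le|\alpha||\nu_j|+K\le|\xi_1|+2K$. Therefore, for $g\ge 0$ non-increasing,
\[
\int_{T\cap\{\|\bxi\|_\infty>R\}}\! g(\|\bxi\|_\infty)\,d\bxi
\;\le\;\int_{T\cap\{|\xi_1|>R-2K\}}\! g(|\xi_1|)\,d\bxi
\;=\;\int_{|\xi_1|>R-2K}\! g(|\xi_1|)\,\mu(T(\xi_1))\,d\xi_1,
\]
and the uniform slice bound $\mu(T(\xi_1))\le d(2K)^{d-1}$ gives $2d(2K)^{d-1}\int_{R-2K}^\infty g$, which is stronger than \eqref{eq:tubeb2}. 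The paper instead reuses its box covering of the ray for part~(b) as well, bounding $g(\|\bxi\|_\infty)$ on each box by its value at the box's nearest-to-origin point and then comparing the resulting sum to an integral; your corrected argument is again shorter and loses less in the constants.
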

From \eqref{col} and \eqref{eq:tubeb1}, the first term of \eqref{eq:tubedecomp} can be bounded as 
\begin{align}
\label{mimi1}
t_1 & \leq 8 e^2 C_{K,\gamma}^2 2^{2d(1 - \eta) + (d-1)} K^{d-1} (d-1) (K + 2  \tau_d r_d) \nonumber \\
& \leq D_{K,\gamma}^{(1)} \cdot d \cdot \parr*{  \tau_d r_d} \parr*{ 2^{2(1-\eta)+1}K  }^d
\end{align}
for $D^{(1)}_{K,\gamma} = 16e^2 K^{-1} C_{K,\gamma}^2 $ and $d$ large enough, such that $2 \tau_d r_d \geq K$.
Similarly, using \eqref{eq:tubeb2}, the second term $t_2$ in turn can be bounded as
\begin{align}
\label{mimi2}
t_2 &\leq 8e^2\pi^{-2} C_{K,\gamma}^2 d \parr*{ 2^{2(1-\eta)+1}K  }^d \int_{2\tau_d r_d - K(2 + 3/(d-1))} \parr*{u - \tau_dr_d - K}^{-2}\,du  \nonumber \\
& = 8e^2\pi^{-2} K C_{K,\gamma}^2 d \parr*{ 2^{2(1-\eta)+1}K  }^d \parr*{ \tau_d r_d - 3K(1 + 1/(d-1))}^{-1}  \nonumber
\\
& \leq D^{(2)}_{K,\gamma} \cdot  d \cdot \parr*{ 2^{2(1-\eta)+1}K  }^d~,
\end{align}
for $D^{(2)}_{K,\gamma} = 16e^2\pi^{-2}C_{K,\gamma}^2$ and and $d$ large enough, such that $\tau_d r_d \geq 10K$.
Thus, collecting \eqref{mimi1} and \eqref{mimi2} and using \eqref{coppa}, we obtain 
\begin{align*}
\int_T \abs{F(\bxi)}^2 d\bxi & \leq \| \varphi\|^2_1 \cdot 2^{-2d}  (t_1 + t_2) \\
& \leq d \cdot \norm{\varphi}_1^2  \parr*{2^{1 - 2\eta} K }^d\parr*{D_{K,\gamma}^{(1)}\tau_d r_d + D_{K,\gamma}^{(2)} }\\
& \leq  D_{K,\gamma} \cdot d \cdot \norm{\varphi}_1^2  \parr*{2^{1 - 2\eta} K }^d \max(1, \tau_dr_d)~,
\end{align*}
where 
$$
D_{K,\gamma} \doteq D^{(1)}_{K,\gamma} + D^{(2)}_{K,\gamma} = 32 \exp\parr*{2 + \sqrt{\frac{8K}{\pi\gamma}}}\parr*{\pi^{-2} + K^{-1}}~.
$$
It follows that
$$
\| P_{\mathcal{T}_T}(F) \|^2_2 =  \int_T |F(\bxi)|^2 d\bxi   
     \leq D_{K,\gamma} \cdot (d \cdot \tau_d\cdot r_d) \cdot   \left( \| \psi \|_1^2\, 2^{1 -2\eta} K \right)^d ~,
$$
as long as $d \geq \parq*{\beta^{-1} \max(1, 10K)}^{1/k}$ (where $\beta$ and $k$ satisfy $\tau_dr_d \geq \beta d^k$). 
We have just established \eqref{kappares}, and this concludes the proof of the theorem. In the remaining part of this section we prove the auxiliary lemmas used above.

\begin{proof}[Proof of Lemma \ref{lemmaboundF}]
We start by computing $\hat{f}_{r_d,\bw,\bv}$. 
From the definition of $\sigma_r$, it follows that 
\begin{equation}
\label{eq:basicsigma_d}
\hat{\sigma}_{r,S}(\mathbf{\bxi}) = \delta( \bxi - r \mathbf{v}_S)~,
\end{equation}
which combined with the definition of $H$ yields 
\begin{equation*}
\hat{f}_{r_d,\bw,\bv}(\bxi) = \sum_{S\subseteq[d]} \parr*{\hat{H}_S * \hat{\sigma}_{r_d,S}}(\bxi) = \sum_{S\subseteq[d]} \hat{H}_S(\bxi - r_d \mathbf{v}_S)~.  
\end{equation*}
Let $\bxi_S \doteq r_d \mathbf{v}_S$. It holds that
\begin{align}
\label{eq:boundwhole0}
F(\bxi) &= \int_{\R^d} \hat{f}_{r_d,\bw,\bv}(\bnu) \hat{\varphi}(\bxi - \bnu) \,d\bnu 
= \sum_{S\subseteq[d]} \int_{\R^d} \hat{H}_S(\bnu - \bxi_S) \hat{\varphi}(\bxi - \bnu) \,d\bnu \nonumber \\
&= \sum_{S\subseteq[d]} \underbrace{\int_{\R^d} \hat{H}_S(\bnu ) \hat{\varphi}(\bxi - \bxi_S - \bnu) \,d\bnu}_{\doteq F_S(\bxi - \bxi_S)}~.
\end{align}
We can now bound each term $F_S$ separately. It holds that
\begin{equation}
\label{eq:boundwhole}
F_S(\bxi) = \int \hat{H}_S(\bnu ) \hat{\varphi}(\bxi - \bnu) d\bnu 
= \int H_S(\mathbf{\bx}) e^{2 i\pi \bxi^T \mathbf{x}} \varphi(\mathbf{x}) d\mathbf{x}~ 
=\prod_{j=1}^d F_j(\xi_j)
\end{equation}
where 
\begin{equation}
\label{eq:bubu}
F_j(t) = \int_\R \mathbbm{1}\bra*{ \epsilon_j x > 0} e^{2 i \pi t x} \psi(x) \,dx\,,~    
\end{equation}
with $\epsilon_j = \pm 1$. Assume without loss of generality that $\epsilon_j = 1$.
Observe that $F_j = \check{Q}$, where 
\begin{equation}
Q(u) = \mathbbm{1}\bra*{u > 0} \psi(u)~.    
\end{equation}
Since $\psi \in L^1(\mathbb{R})$ and its Fourier 
transform $\hat{\psi}$ has compact support in $[-K,K]$, it holds that
\begin{equation}
\label{eq:fourierboundswindow}
| \hat{\psi} (\tau) | \leq \| \psi \|_1~\text{ for }~\tau \in [-K,K] \quad\text{and}\quad  \hat{\psi} (\tau) = 0 ~\text{ for }~|\tau|>K~. 
\end{equation}
On the one hand, since $\psi$ is even, it holds, by directly bounding \eqref{eq:bubu}, that 
\begin{equation}
    | F_j(t)| \leq \frac{1}{2} \int_\R |\psi(u)| du = \frac{1}{2}\| \psi\|_1 \quad\text{for all }t~,
\end{equation}
and from \eqref{eq:fourierboundswindow} and the Hilbert transform of $Q$ we deduce on the other hand that 
\begin{equation*}
    | F_j(t) |  = \frac{1}{2\pi} \left| \int_{-K}^K \frac{\hat{\psi}(\tau)}{t-\tau} d\tau \right|\leq \frac{2K \| \psi\|_1}{(2\pi)(|t|-K)}\quad  \text{ for } |t| > K~,
\end{equation*}
so that it follows that
\begin{equation}
\label{eq:boundcoordinate}
    |F_j(t) | \leq \frac{\| \psi\|_1}{2}\min\left( 1, \frac{2K}{\pi (|t|-K)_+}\right)~.
\end{equation}
Thus, from equations \eqref{eq:boundwhole0}, \eqref{eq:boundwhole} and \eqref{eq:boundcoordinate} it follows that
\begin{eqnarray*}
|F(\bxi)| &\leq& \sum_{S\subseteq[d]} |F_S(\bxi - \bxi_S)|  \\
    &\leq& \frac{\| \varphi \|_1}{2^d}\sum_{S\subseteq[d]} \prod_{j=1}^d \min\left( 1, \frac{2K}{ \pi (|\xi_j - \xi_{S,j}|-K)_+}\right)~, 
\end{eqnarray*}
which proves Lemma \ref{lemmaboundF}. 
\end{proof}

\begin{proof}[Proof of Lemma \ref{lemmatwo}]
Let define for any $\bxi \in \mathbb{R}^d$ and $\lambda >0$ 
\begin{equation*}
\mathsf{n}( \bxi, \lambda) \doteq  \abs*{ \{j\in[d]\st |\xi_j| > \lambda \} }~. 
\end{equation*}
Recall that $\mathbf{v}_S = \bv + \bI_S\bw$
and $\bxi_S = r_d \mathbf{v}_S$.
Observe that $\bxi_{S} - \bxi_{S'} = r_d (\mathbf{I}_S - \mathbf{I}_{S'})\mathbf{w}$, so 
\begin{equation}\label{eq:xixiSS_pr}
|\xi_{S,j} - \xi_{S',j}| = \left \{ 
\begin{array}{cc}
r_d | w_j| & \text{ if } j \in (S \cup S')\setminus (S \cap S') \\
0 & \text{ otherwise }
\end{array} \right. ~.
\end{equation}
If $\mathsf{d}(S,S')$ denotes the Hamming distance between two subsets $S,S'$, then for all $S,S'$, the following holds.
\begin{lemma}\label{lem:hamm_n} It holds that
\begin{equation}
\label{kiki}
\mathsf{n}(\bxi_S - \bxi_{S'}, \gamma d^2) = \mathsf{d}(S \cap \Omega_d, S' \cap \Omega_d)~.
\end{equation}
\end{lemma}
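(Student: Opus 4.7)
The plan is to unpack the definition of $\mathsf{n}(\bxi_S - \bxi_{S'}, \gamma d^2)$ using the coordinate-wise formula \eqref{eq:xixiSS_pr} already derived, and then recognize the resulting index set as the symmetric difference of $S \cap \Omega_d$ and $S' \cap \Omega_d$.

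First, I would observe from \eqref{eq:xixiSS_pr} that the $j$-th coordinate of $\bxi_S - \bxi_{S'}$ is nonzero only when $j$ lies in the symmetric difference $S \triangle S' \doteq (S \cup S') \setminus (S \cap S')$, and in that case its absolute value is exactly $r_d |w_{d,j}|$. Consequently, the condition $|\xi_{S,j} - \xi_{S',j}| > \gamma d^2$ is equivalent to the conjunction of the two conditions $j \in S \triangle S'$ and $r_d |w_{d,j}| > \gamma d^2$. By the definition of $\Omega_d$, the second condition is equivalent to $j \in \Omega_d$ (up to the harmless strict/non-strict inequality distinction, which we may absorb by perturbing $\gamma$ by an infinitesimal amount, or equivalently by noting that the statement just counts the coordinates of interest). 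Therefore
\begin{equation*}
\mathsf{n}(\bxi_S - \bxi_{S'}, \gamma d^2) = |\, (S \triangle S') \cap \Omega_d\, |.
\end{equation*}

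The final step is the elementary set identity $(S \triangle S') \cap \Omega_d = (S \cap \Omega_d) \triangle (S' \cap \Omega_d)$, which holds because intersection distributes over symmetric difference (one-line check: a point lies in exactly one of $S, S'$ and also in $\Omega_d$ iff it lies in exactly one of $S \cap \Omega_d, S' \cap \Omega_d$). The cardinality of the right-hand side is, by definition, the Hamming distance $\mathsf{d}(S \cap \Omega_d, S' \cap \Omega_d)$ between the two subsets viewed as $\{0,1\}$-indicator vectors, which is exactly the claimed identity.

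There is no substantive obstacle here; the lemma is essentially a bookkeeping statement combining the coordinate formula \eqref{eq:xixiSS_pr} with the threshold defining $\Omega_d$. The only minor subtlety is the strict-vs-non-strict inequality in the definitions of $\mathsf{n}$ and $\Omega_d$, which is inconsequential for the downstream use of the lemma, since $\Omega_d$ will be combined with the separation estimate in Lemma \ref{lemmatwo} where only orders of magnitude matter.
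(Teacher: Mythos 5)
Your proof is correct and follows essentially the same route as the paper's: both arguments reduce the claim to identifying $\{j : |\xi_{S,j}-\xi_{S',j}| > \gamma d^2\}$ with $(S\,\triangle\, S')\cap\Omega_d = (S\cap\Omega_d)\,\triangle\,(S'\cap\Omega_d)$, the paper doing the set equality by element-chasing where you invoke distributivity of intersection over symmetric difference. Your explicit handling of the strict-versus-non-strict threshold (the paper's $\mathsf{n}$ uses $>$ while $\Omega_d$ uses $\geq$) is a point the paper's own proof silently glosses over, and your remark that it is harmless downstream is accurate.
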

This immediately implies that
\begin{equation}
\label{cucu}
\mathsf{n}\left( \bxi - \bxi_S, \frac{\gamma d^2}{2}\right) + \mathsf{n}\left( \bxi - \bxi_{S'}, \frac{\gamma d^2}{2}\right) \geq \mathsf{d}(S \cap \Omega, S' \cap \Omega) \quad \text{ for all } \bxi ~\text{ and }~ S \neq S'~. 
\end{equation}
Indeed, if that was not the case, applying the triangle inequality coordinate-wise would contradict equation \eqref{kiki}. 
The first upper bound is obtained by first noticing that, for $d > 2\sqrt{K / \gamma}$, it holds 
\begin{equation*}
D_S(\bxi) \leq \left( \pi ( {\gamma} d^2/2 - K) / (2K) \right)^{-\mathsf{n}(\bxi - \bxi_S, {\gamma} d^2/2) } \quad \text{ for all } S ~\text{ and }~ \xi~.    
\end{equation*}
Now, defining
$S^*_\bxi = \arg\min_{S\subseteq[d]} \mathsf{n}(\bxi - \bxi_S, {\gamma} d^2/2)$,
from \eqref{cucu} it follows  that 
\begin{equation*}
\mathsf{n}(\bxi - \bxi_S,  {\gamma}d^2/2) \geq \frac{\mathsf{d}(S \cap \Omega_d, S' \cap \Omega_d)}{2} \quad\text{ for all } S \neq S^*_\bxi 
\end{equation*}
and thus, for $d > 2\sqrt{K / \gamma}$, it holds
\begin{align}
\label{huihui}
D(\bxi) &= D_{S^*_\bxi}(\bxi) + \sum_{S \neq S^*_\bxi} D_S(\bxi) \nonumber \\
&\leq  D_{S^*_\bxi}(\bxi) + \sum_{s=1}^{|\Omega_d|} \sum_{S\st \mathsf{d}(S \cap \Omega_d,S^*_\bxi \cap \Omega_d)=s}
\left( \pi (  {\gamma}d^2/2 - K) / (2K) \right)^{-s/2 } \nonumber \\
&\leq  D_{S^*_\bxi}(\bxi) + 2^{d - |\Omega_d|}\sum_{s=1}^{|\Omega_d|} \binom{|\Omega_d|}{s}
\left( \pi (  {\gamma}d^2/2 - K) / (2K) \right)^{-s/2 } \nonumber \\
& \leq  1 + 2^{d - |\Omega_d|} \left(1 + \frac{1}{\sqrt{\pi (  {\gamma}d^2/2 - K)/ (2K)}} \right)^{|\Omega_d|}~ \nonumber \\
& \leq C_{K,\gamma} 2^{d(1 - \eta)} 
\end{align}
since $|\{S\st \mathsf{d}(S \cap \Omega_d,S^*_\bxi \cap \Omega_d)=s\}| \leq 2^{d - |\Omega_d|} \binom{|\Omega_d|}{s}$. The term $C_{K,\gamma}$ is a constant that depends only on $K$ and $\gamma$; in particular, we can choose $C_{K,\gamma} = 2 \exp\parr*{\sqrt{\frac{8K}{\pi\gamma}}}$.
The second upper bound is obtained using the above argument as follows. 
Let $q_\bxi =\arg\max_j |\xi_j|$. Since $\norm{\bxi_S}_\infty \leq r_d\tau_d$ for any $S\subseteq [d]$, it holds that
\begin{align}
\label{hehe}
    D(\bxi ) &\leq \sum_{S\subseteq[d]} \frac{2K}{\pi(| \xi_{q_\xi} - \xi_{S,q_\xi}| - K)_+} \cdot \prod_{j \neq q_\xi} \min\left(1, \frac{2K}{\pi(|\xi_j - \xi_{S,j}| - K)_+ } \right) \nonumber \\
    &\leq 2K (\pi (\| \bxi\|_\infty -\tau_d r_d - K)_+ )^{-1} \sum_{S\subseteq[d]} \prod_{j \neq q_\xi} \min\left(1, \frac{2K}{\pi(|\xi_j - \xi_{S,j}| - K)_+ } \right) \nonumber \\
    &\leq C_{K,\gamma}  2K (\pi (\| \bxi\|_\infty - \tau_d r_d - K)_+)^{-1} \cdot 2^{d(1 - \eta)}
\end{align}
by noticing that the argument leading to \eqref{huihui} can now be 
repeated for the $(d-1)$-dimensional vector $\check{\bxi}=(\xi_1,\dots, \xi_{q_\bxi-1}, \xi_{q_\bxi+1},\dots \xi_d)$, so that 
\begin{equation}
\mathsf{n}(\check{\bxi} - \check{\bxi}_S,  {\gamma}d^2/2) \geq \frac{\mathsf{d}((S \cap \Omega_d)\setminus \{ q_\bxi \}, (S' \cap \Omega_d ) \setminus  \{ q_\bxi \} )}{2} \quad\text{ for all } ~ S \neq S^*_\bxi 
\end{equation}
which proves \eqref{hehe} and concludes the proof of Lemma \ref{lemmatwo}.
\end{proof}

\begin{proof}[Proof of Lemma \ref{lem:hamm_n}]
In fact, it holds that the two sets $A_1:=\{j\in[d]\st |\xi_{S,j}-\xi_{S',j}|\geq \gamma d^2\}$ and $A_2:=  \{j\in[d]\st j \in (S\cap \Omega_d) \backslash (S'\cap \Omega_d)\}$ are equal. Let $j\in A_1$. Then $|\xi_{S,j}-\xi_{S',j}|>\gamma d^2$. Since this quantity is nonzero, equation \eqref{eq:xixiSS_pr} indicates that therefore $j\in S\backslash S'$ without loss of generality. Moreover, $|\xi_{S,j}-\xi_{S',j}| = r_d |w_j|$ which implies that $r_d |w_j|>\gamma d^2$ and $j\in\Omega_d$. We conclude that $j\in (S\cap \Omega_d)\backslash (S'\cap\Omega_d) $ which implies that $j\in A_2$. Now, let $j\in A_2$. Then,  without loss of generality, $j\in (S\cap \Omega_d) \backslash (S'\cap \Omega_d)$. Then, it holds $r |w_j|>\gamma d^2 $ since $j\in S\backslash S'$ according to \eqref{eq:xixiSS_pr} and $|\xi_{S,j}-\xi_{S',j}| =r_d |w_j|.$ Combining these two facts, it follows that $|\xi_{S,j}-\xi_{S',j}|>\gamma d^2$ which means that $j\in A_2.$ 
\end{proof}

\begin{proof}[Proof of Lemma \ref{lemmatubebound}]
Let 
\begin{align*}
T_R(\bnu) & = T(\bnu) \cap [-R,R]^d \\ & = \{ \bxi \st  \inf_{\alpha \in \mathbb{R}} \sup_{j\in [d]} |\xi_j- \alpha \nu_j |\leq K \text{ and } \| \bxi \|_\infty \leq R \}~.    
\end{align*} 
The aim is to upper bound the volume of $T_R(\bnu)$ for any $\bnu.$ 
Assume, without loss of generality, that $\| \bnu \|_\infty =1$. 
The cut-off tube $T_R(\bnu)$ can be covered with $\ell_{\infty}$-balls of radius $K'= \vartheta K$ centered along the ray defined by $\bnu$, that is 
\begin{equation}
\label{eq:tubecover}
T_R(\bnu)\subseteq \bigcup_{j=-\floor{(K + R)/s}}^{\floor{(K + R)/s}} \parr*{js\bnu + [-\vartheta K,\vartheta K]^d} ~.  
\end{equation}
Now, we optimize both the sampling rate $s \in (0,K)$ and the radius ratio $\vartheta \geq 1$ while satisfying \eqref{eq:tubecover}. 
Given $s$, let us first compute the smallest admissible $\vartheta$.
Any $\mathbf{x}\in T_R(\bnu)$ satisfies
$$\|\mathbf{x}-(j+y)s\bnu\|_{\infty}\leq K$$
for some $j \in \mathbb{N}$ and $\abs{y} < 1$. 
This implies that $\norm{\bx - js\bnu}_\infty \leq K + ys \leq K + s$. Therefore an admissible $\vartheta$ is given by the solution of $K + s = \vartheta K$, that is $\vartheta=1+sK^{-1}$.
Now, 
the volume of 
$$
S_R \doteq  \bigcup_{j=-\floor{(K + R)/s}}^{\floor{(K + R)/s}} \parr*{
js\bnu + 
\parq*{ -\left(1+\frac{s}{K}\right)K,\left(1+\frac{s}{K}\right)K}^d
} 
$$
is upper bounded by 
$$
l(s)\doteq 4 \frac{K+R}{s}\left(2(K+s)\right)^d~. 
$$ 
Minimizing over $s$ gives $s = \frac{K}{d-1}$.
Therefore, for all $\bnu\in \mathbb{R}^d$, it holds $$
T_R(\bnu)\leq (K+R)K^{d-1}(d-1)\left(1+\frac{1}{d-1}\right)^d \leq (K+R)(d-1)K^{d-1} e^2~,
$$ 
which proves \eqref{eq:tubeb1}. 
Equation \eqref{eq:tubeb2} is established analogously. Let $T_{>R}(\bnu) = T(\nu) \cap \bra*{\bxi \st \norm{\bxi}_\infty > R}$. Then we have that 
$$
T_{>R}(\bnu) \subseteq \bigcup_{j \geq \floor{\frac{R - K}{s}}} \parr*{ js\bnu + [-(K+s),(K+s)]^d }~,
$$
where we set $s = K/(d-1)$. Since $g$ is non-increasing, it follows that
\begin{align*}
\int_{T_{>R}(\bnu)} g(\norm{\bxi}_\infty) \,d\bxi & \leq \sum_{\abs{j} \geq \floor{\frac{R-K}{s}}} \int_{\norm{\bxi - js\bnu}_\infty \leq K+s} g(\norm{\bxi}_\infty)\,d\bxi \\
& \leq 2 (2(K+s))^d \sum_{j \geq \floor{\frac{R-K}{s}}}  g( js - (K+s) ) \\
& \leq 2 (2(K+s))^d \sum_{j \geq \floor{\frac{R-K}{s}}}  \frac{1}{s}\int_{(j-1)s - (K+s)}^{js - (K+s)} g(u)\,du \\
& \leq \frac{2(2(K+s))^d }{s} \int_{R-K - 2s - (K+s)}^\infty g(u)\,du \\
& \leq \frac{2e^2(d-1) (2K)^d }{K} \int_{R - K(2 + 3/(d-1))}^\infty g(u)\,du~.
\end{align*}
This establishes \eqref{eq:tubeb2} and concludes the proof.
\end{proof}

\subsection{Proof of Theorem \ref{theo:depth-sep}}\label{sec:proof:theo:depth-sep}

The proof consists in approximating the activation $\sigma_r$ using Assumption 1.2 on $\sigma$. Since $\sigma_r$ is $(2\pi r)$-Lipschitz, we obtain that there exists, for any $r,Q>0$, $\alpha_k,\beta_k\in\R$ such that over the interval $[-Q, Q]$ it holds
$$
\sup_{|t| \leq Q} \left| \sigma_r(t) - \sum_{k=1}^N \alpha_k \sigma(t - \beta_k) \right| \leq \frac{2 Q r }{N}
$$
as well as 
$$\left| \sum_{k=1}^N \alpha_k \sigma(t - \beta_k) \right| \leq 1 + 2Qr/N \quad\text{for}~t \in \mathbb{R}~.
$$
Let $f_N \in \mathcal{F}_N^\sigma$ be defined as 
$$
f_N(\bx) = \sum_{k=1}^N \alpha_k \sigma\parr*{ r_d \parr*{\bv_d^T\bx + \bw_d^T\bx_+} - \beta_k}
$$
Now, let $\gamma_d = \norm{\bv_d}_1+ \|\mathbf{w_d} \|_1$ and 
$\tilde{Q_d} = \frac{Q_d}{\gamma_d}$, 
so that by definition when $\| \mathbf{x}\|_\infty \leq \tilde{Q_d}$ it holds that
$$| \bv^T_d\bx + \bw^T_d\bx_+ |  \leq Q_d~.$$
The approximation error can be decomposed as follows:
\begin{multline*}
    \int_{\R^d} (f_{r_d,\bw_d,\bv_d}(\mathbf{x}) - f_N(\mathbf{x}))^2 \varphi(\mathbf{x})^2 \,d\mathbf{x} = \\ =  \int_{\| \mathbf {x} \|_\infty \leq \tilde{Q}_d} (f_{r_d,\bw_d,\bv_d}(\mathbf{x}) - f_N(\mathbf{x}))^2 \varphi(\mathbf{x})^2 \,d\mathbf{x} + \int_{\| \mathbf{x} \|_\infty > \tilde{Q}_d} (f_{r_d,\bw_d,\bv_d}(\mathbf{x}) - f_N(\mathbf{x}))^2 \varphi(\mathbf{x})^2 \, d\mathbf{x} \\
    \leq  \frac{4 Q_d^2 r_d^2}{N^2} \| \varphi \cdot \mathbbm{1}_{B^d_{\tilde{Q}_d,\infty}} \|^2_2 + 4\parr*{1 + \frac{Q_dr_d}{N}}^2 ( \| \varphi\|^2_2 - \| \varphi \cdot \mathbbm{1}_{B^d_{\tilde{Q}_d,\infty}} \|^2_2 ) \\
     \leq  \frac{4 \tilde{Q}_d^2 \gamma_d^2 r_d^2}{N^2} \| \varphi \|^2_2 + 4\parr*{1 + \frac{Q_dr_d}{N}}^2  \left(1 - ( 1 - \alpha \,\tilde{Q}_d^{-1})^d\right) \\
    \leq \| \varphi \|^2_2 \left( \frac{4 \tilde{Q}_d^2 \gamma^2_d  r^2_d}{N^2} + 16\alpha  d \tilde{Q}^{-1}_d  \right) ~,
\end{multline*}
since $|\psi(x)|^2 \leq \alpha |x|^{-2} /2$ for some $\alpha > 0$, as long as $\tilde{Q}_d > \alpha$ and $N > r_d \tilde{Q_d}$. 
Optimizing this upper bound with respect to $\tilde{Q_d}$ gives 
$$
\tilde{Q}_d = \left(2\alpha d\frac{N^2}{r^2_d \gamma^2_d} \right)^{1/3},
$$
which results in 
\begin{equation*}
    \| f_{r,\bw,\bv} - f \|_\varphi^2  \lesssim \parr*{\frac{d\gamma_dr_d}{N}}^{2/3} ~,
\end{equation*}
as long as $N > \alpha r_d\gamma_d$. This concludes the proof.

\section{Proofs of \texorpdfstring{$\mathrm{poly}(d)$}{poly(d)} upper bounds}

\subsection{Proof of Lemma \ref{lemma:change_activation}}

We show this for the case $L(f^{(d)}) = 2$, but the proof it is analogous for the other cases. The function $f^{(d)}$ has the form
$$
f^{(d)}(\bx) = \bgamma^T_d \brho_2 \parr*{ \bW_d \brho_1 \parr*{ \bU_d \bx } }
$$
where $\brho_1^{(d)},\brho_2^{(d)}$ are component-wise activations satisfying Assumption \ref{ass:activation}, and $\bgamma_d \in \R^{q_d}$, $\bW \in \R^{q_d\times p_d}$, $\bU \in \R^{p_d\times d}$, with
$$
p_d, q_d, \norm{\bgamma}_\infty, \norm{\bW}_{F,\infty}, \norm{\bU}_{F,\infty} \leq \mathrm{poly}(d)~.
$$
Thanks to Assumption \ref{ass:activation}.2, there exists $\bA\in\R^{N p_d \times d}$, $\bB\in\R^{p_d\times Np_d}$, $\bc\in \mathbb{R}^{Np_d}$ such that 
$$
\sup_{\bx \in K} \abs*{\bgamma^T \brho_2 \parr*{ \bW \brho_1 \parr*{ \bU \bx } } - \bgamma^T \brho_2 \parr*{ \bW \bB \, \bsigma \parr*{ \bA \bx + \bc } }} \leq \frac{\epsilon}{2} 
$$
and 
$$
N, \norm{\bc}_\infty, \norm{\bB}_{F,\infty}, \norm{\bA}_{F,\infty} \leq \epsilon^{-1} \cdot \mathrm{poly}(d)~.
$$
Let $K_1 = \bra*{ \bB\bsigma ( \bA \bx + \bc ) \st \bx \in K }$; it holds $\mathrm{diam}(K_1) \leq \mathrm{poly}(d)$. Similarly as before, we get that there exists $\bD\in\R^{M q_d \times p_d}$, $\bE\in\R^{q_d\times Mq_d}$, $\bfun\in \mathbb{R}^{Mq_d}$ such that 
$$
\sup_{\by \in K_1} \abs*{\bgamma^T \brho_2 \parr*{ \bW \by } - \bgamma^T \bE \,\bsigma \parr*{ \bD \by + \bfun}} \leq \frac{\epsilon}{2} 
$$
and 
$$
M, \norm{\bfun}_\infty, \norm{\bE}_{F,\infty}, \norm{\bD}_{F,\infty} \leq \epsilon^{-1} \cdot \mathrm{poly}(d)~.
$$
By calling $\tilde{\bgamma} = \bE^T\bgamma$, $\tilde{\bW} = \bD\bW\bB$ and $\tilde{\bU} = \bU\bA$, we get that
$$
g^\sigma(\bx) \doteq \tilde{\bgamma}^T\bsigma(\tilde{\bW} \bsigma(\tilde{\bU}\bx + \bc) +\bfun )
$$
satisfies the statement of the theorem.

\subsection{Preliminary lemmas}

The first lemma is a known results in approximation theory.

\begin{lemma}{(Jackson's Theorem, Theorem 1.4 in \citep{rivlin1981introduction})}\label{lemma:jackson}
Let $f:[a,b]\to\R$ with modulus of continuity $\omega$. Then there exists a polynomial $p_n(t) = \sum_{k=0}^n p_kt^k$, $p_k\in\R$, such that
$$
\sup_{t\in[-r,r]}\abs*{f(t) -p_n(t)} \leq 6\,\omega\parr*{\frac{b-a}{2n}}~.
$$
\end{lemma}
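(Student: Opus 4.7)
The plan is to prove the classical Jackson theorem via the standard chain: reduce to the periodic/trigonometric setting, convolve with a carefully chosen kernel, then transfer back to algebraic polynomials. First, by an affine change of variables $t = \frac{a+b}{2} + \frac{b-a}{2}s$, it suffices to prove the bound for a continuous $f:[-1,1]\to\R$ with modulus of continuity $\omega_f$, in which case the approximation error on the right becomes $6\,\omega_f(1/n)$ (the Lipschitz factor in $\omega$ absorbs the rescaling constant $\frac{b-a}{2}$). Setting $g(\theta) \doteq f(\cos\theta)$ produces a continuous, even, $2\pi$-periodic function on $\R$ whose modulus of continuity $\omega_g$ satisfies $\omega_g(\delta) \leq \omega_f(\delta)$ (since $|\cos\theta_1 - \cos\theta_2| \leq |\theta_1 - \theta_2|$).

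Next I would introduce a Jackson-type summation kernel $K_n(\theta)$ which is a nonnegative trigonometric polynomial of degree at most $n$, normalised so that $\int_{-\pi}^{\pi} K_n(\theta)\,d\theta = 1$, and whose first absolute moment satisfies $\int_{-\pi}^{\pi} |\theta| K_n(\theta)\,d\theta \leq C/n$ for a universal constant $C$. The classical choice is (a suitable normalisation of) $\bigl(\sin(n\theta/2)/\sin(\theta/2)\bigr)^{4}$, for which the kernel's degree and the $C$ in the first-moment bound can both be controlled explicitly. Define the trigonometric polynomial $T_n(\theta) \doteq (g * K_n)(\theta) = \int_{-\pi}^{\pi} g(\theta - \phi) K_n(\phi)\,d\phi$, which has degree at most $n$. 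Using the subadditivity estimate $\omega_g(\lambda \delta) \leq (1+\lambda)\omega_g(\delta)$ valid for any modulus of continuity,
\begin{equation*}
|g(\theta) - T_n(\theta)| \leq \int_{-\pi}^{\pi} \omega_g(|\phi|)\,K_n(\phi)\,d\phi \leq \omega_g(1/n)\int_{-\pi}^{\pi}(1 + n|\phi|)K_n(\phi)\,d\phi \leq (1+C)\,\omega_g(1/n).
\end{equation*}

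Finally, since $g$ is even and $K_n$ can be taken even (the kernel above is), $T_n$ is even, and an even trigonometric polynomial of degree $n$ is a polynomial in $\cos\theta$ of degree $n$. Substituting $t = \cos\theta$ yields an algebraic polynomial $p_n(t) = \sum_{k=0}^n p_k t^k$ with
\begin{equation*}
\sup_{t \in [-1,1]} |f(t) - p_n(t)| = \sup_{\theta} |g(\theta) - T_n(\theta)| \leq (1+C)\,\omega_f(1/n),
\end{equation*}
and undoing the affine change of variables gives the stated bound on $[a,b]$ with the scaled argument $(b-a)/(2n)$ of $\omega$. The main obstacle is quantitative: one must choose the kernel (the fourth-power Jackson kernel with the appropriate normalisation, and perhaps a small integer degree adjustment) and estimate its first moment sharply enough to bring the final constant down to $6$, which is essentially the Korneichuk--Jackson sharp-constant calculation; this is purely computational once the kernel is fixed, so it is not conceptually difficult, but it is where care with the normalising factor $\lambda_n$ is needed.
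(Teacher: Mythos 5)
Your proposal is correct: the paper does not prove this lemma but simply cites Rivlin (Theorem 1.4), and your argument is precisely the classical proof given there — transfer to the circle via $t=\cos\theta$, convolve with the normalised fourth-power Jackson kernel, use $\omega(\lambda\delta)\leq(1+\lambda)\omega(\delta)$ together with the kernel's first-moment bound, and return to algebraic polynomials via Chebyshev. The only bookkeeping you rightly flag is that $\bigl(\sin(m\theta/2)/\sin(\theta/2)\bigr)^4$ has degree $2m-2$, so $m$ must be chosen as roughly $n/2$, and the sharp first-moment estimate is what delivers the constant $6$; both are handled exactly as you describe in the cited reference.
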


The next lemma yields a worst approximation rate but allows us to control the coefficients of the polynomial. It is a small modification of Lemma 4 in \citep{safran2019depth}.
\begin{lemma}\label{lemma:safran}
Let $f:[-r,r] \to \R$ $(1,\alpha)$-Holder. Then for any $\epsilon > 0$ there exists a polynomial $p_n(t) = \sum_{k=0}^n r_kt^k$, $r_k\in \R$, of degree $n = \ceil*{\frac{4^{\frac{1}{\alpha}}r^\alpha}{\epsilon^{1 + \frac{2}{\alpha}}}}$ such that
$$
\sup_{t \in [-r,r]} \abs*{f(t) - p_n(t)} \leq \epsilon~.
$$
Moreover, $p_n$ can be chosen such that $\abs*{r_k} \leq 2^nr^{\alpha -k}$, $k \in [n]$, and $\abs*{r_0} \leq r^\alpha + \abs*{f(0)}$.
\end{lemma}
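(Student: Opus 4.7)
The approach is to reduce to an approximation problem on $[-1,1]$ via rescaling, then apply a classical polynomial approximation theorem, and finally control the monomial coefficients through the Chebyshev expansion.

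First, I would set $g(s) \doteq f(rs)$ for $s\in[-1,1]$, so that $g$ is $(r^\alpha,\alpha)$-Hölder, i.e.\ $|g(s)-g(s')|\leq r^\alpha|s-s'|^\alpha$. Any polynomial $q_n$ approximating $g$ on $[-1,1]$ then yields a polynomial $p_n(t)\doteq q_n(t/r)$ approximating $f$ on $[-r,r]$ with the same sup-norm error: if $q_n(s)=\sum_{k=0}^n c_k s^k$ then $p_n(t)=\sum_{k=0}^n c_k r^{-k} t^k$, so $r_k = c_k r^{-k}$. Thus both the approximation bound and the coefficient bound of the statement can be read off from suitable bounds on $\|g-q_n\|_{[-1,1],\infty}$ and on $|c_k|$.

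Next, I would take $q_n$ to be (a variant of) the polynomial of best uniform approximation of degree $n$ on $[-1,1]$. By Lemma \ref{lemma:jackson} applied with modulus of continuity $\omega(\delta)=r^\alpha\delta^\alpha$, the error obeys $\|g-q_n\|_{[-1,1],\infty}\leq 6 r^\alpha (2n)^{-\alpha}$; the prescribed degree is then more than enough to drive this below $\epsilon$. To control the monomial coefficients of $q_n$, I would expand $q_n=\sum_{j=0}^n a_j T_j$ in the Chebyshev basis, use the classical bound $|a_j|\leq 2\|q_n\|_{[-1,1],\infty}$ coming from the integral formula for the $a_j$'s, and then invoke the fact that each $T_j$ has monomial coefficients of magnitude at most $2^{j-1}$. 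This gives $|c_k|\lesssim 2^n\|q_n\|_{[-1,1],\infty}$. Since $\|q_n\|_\infty\leq \|g\|_\infty+\epsilon\leq |f(0)|+r^\alpha+\epsilon$, dividing by $r^k$ yields the bound $|r_k|\leq 2^n r^{\alpha-k}$ after absorbing constants into the $2^n$ factor (the degree being sufficiently large). The bound $|r_0|\leq r^\alpha+|f(0)|$ follows from $r_0=p_n(0)$ combined with $|p_n(0)-f(0)|\leq\epsilon\leq r^\alpha$ (valid in the regime of interest).

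The main obstacle is calibrating the degree to match the specific form $n=\lceil 4^{1/\alpha} r^\alpha/\epsilon^{1+2/\alpha}\rceil$: a straightforward application of Jackson suggests a much smaller degree suffices for approximation alone. The discrepancy must come from balancing the approximation rate against the coefficient bound, most likely via a preliminary smoothing step --- convolve $f$ with a bump of width $\delta$, so that $\|f-f_\delta\|_\infty\lesssim\delta^\alpha$ while $f_\delta$ has controllable higher derivatives of order $\delta^{\alpha-k}$, then polynomially approximate $f_\delta$ by a Taylor-type construction whose coefficients are naturally expressed in the $2^n r^{\alpha-k}$ form. Choosing $\delta$ as a power of $\epsilon$ so that both the smoothing error and the Taylor remainder are $O(\epsilon)$ should force the degree to the stated expression, but the careful bookkeeping of these competing scales is the delicate part of the argument.
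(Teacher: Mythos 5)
Your proposal takes a genuinely different route from the paper, but it is incomplete precisely where the lemma's quantitative content lies, and by your own admission. The paper does not use Jackson's theorem plus Chebyshev coefficient control at all: it applies the Bernstein operator $g_n(t)=\sum_{i=0}^n g(i/n)b_{n,i}(t)$ to $g(t)=f(r(2t-1))$ on $[0,1]$. The Bernstein rate for an $\alpha$-H\"older function is only $O(n^{-\alpha/2})$ --- the split over $\abs{i/n-t}<\epsilon'$ versus $\geq\epsilon'$ gives error $\epsilon'^{\alpha}+r^\alpha/(2n\epsilon'^2)$, and optimizing $\epsilon'$ is exactly what produces $n=\ceil*{4^{1/\alpha}r^\alpha\epsilon^{-1-2/\alpha}}$. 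So there is no smoothing step and no ``balance between approximation rate and coefficient size''; the large degree is forced by the slow Bernstein rate alone. What Bernstein buys is the coefficient bound for free: writing $p_n(t)=(2r)^{-n}\sum_i\binom{n}{i}g(i/n)(t+r)^i(t-r)^{n-i}$ and dominating by $r^{\alpha-n}(t+r)^n$ gives $\abs{r_k}\leq\binom{n}{k}r^{\alpha-k}\leq 2^n r^{\alpha-k}$ with the \emph{same} $n$, which is the whole point of the lemma.

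Your Chebyshev argument does not deliver this as stated. First, the individual monomial coefficients of $T_j$ are not bounded by $2^{j-1}$ (e.g.\ $T_5=16x^5-20x^3+5x$ and $T_6=32x^6-48x^4+18x^2-1$); the quantity you actually need, $\sum_{j\leq m}\abs*{[x^k]T_j}$, grows like $(1+\sqrt{2})^m>2^m$. Second, the step ``absorb the constants into $2^n$'' is not available unless you decouple the degree $m$ of the Jackson approximant from the $n$ in the coefficient bound and verify $m\log_2(1+\sqrt{2})+\log_2(m+1)+O(1)\leq n$; since Jackson requires $m\gtrsim r\epsilon^{-1/\alpha}$ while $n\sim r^\alpha\epsilon^{-1-2/\alpha}$, the ratio $m/n\sim r^{1-\alpha}\epsilon^{1+1/\alpha}$ is not uniformly small (take $\alpha<1$, $r$ large, $\epsilon$ moderate), so this is not a formality. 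Your closing paragraph flags the calibration as ``the delicate part'' and then guesses at a mollification construction that is not needed and is not what closes the gap; as written, the argument establishes neither the stated degree nor the stated coefficient bounds.
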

\begin{proof}[Proof]
Notice that we can assume $f(0) = 0$ without loss of generality. Define $g(t) = f(r(2t-1))$ for $t\in[0,1]$ and notice that $g$ is $((2r)^\alpha,\alpha)$-Holder. Also, define the $n$ Bernstein polynomial $b_{n,i}$, $i \in [0,n]$, as
$$
b_{n,i}(t) = \binom{n}{i}t^i(1-t)^{n-i}
$$
for $t \in [0,1]$.
Notice that they form a partition of unity. We define 
$$
g_n(t) = \sum_{i=0}^n g\parr*{\frac{i}{n}}b_{n,i}(t)~.
$$
We have that
\begin{align*}
\abs*{g_n(t) - g(t)} & \leq \sum_{i=0}^n b_{n,i}(t) \abs*{ g(t) - g\parr*{\frac{i}{n}}} \\ 
& = \sum_{i \st \abs*{\frac{i}{n} - t} < \epsilon } b_{n,i}(t) \abs*{ g(t) - g\parr*{\frac{i}{n}}} + \sum_{i \st \abs*{\frac{i}{n} - t} \geq \epsilon } b_{n,i}(t) \abs*{ g(t) - g\parr*{\frac{i}{n}}} \\
& \leq \epsilon^{\alpha} + 2r^{\alpha} \sum_{i \st \abs*{\frac{i}{n} - t} \geq \epsilon } b_{n,i}(t) \leq \epsilon^{\alpha} +  \frac{r^{\alpha}}{2n\epsilon^2}~.
\end{align*}
In particular $\frac{r^{\alpha}}{2n\epsilon^2} \leq \epsilon^\alpha$ if
$$
n \geq \frac{r^\alpha}{2\epsilon^{2+\alpha}}~.
$$
If we define $p_n(t) = g_n\parr*{\frac{t}{2r}+ \frac{1}{2}}$, then we have that
$$
\sup_{x\in[-r,r]}\abs*{f(t) - p_n(t)} \leq \epsilon
$$
if
$$
n \geq \frac{4^{\frac{1}{\alpha}}r^\alpha}{\epsilon^{1 + \frac{2}{\alpha}}}~.
$$
Finally, we want to upper bound the coefficients of $p_n$. Notice that we have
\begin{align*}
p_{n}(t) & = (2r)^{-n}\sum_{i=0}^n \binom{n}{i}g\parr*{\frac{i}{n}}(t+r)^i(t-r)^{n-i} ~.
\end{align*}
It follows that the coefficients of $p_n$ can be bounded by those of 
\begin{align*}
(2r)^{-n}\sum_{i=0}^n \binom{n}{i}\abs*{g\parr*{\frac{i}{n}}} (t+r)^n \leq r^{\alpha -n}(t+r)^n ~.
\end{align*}
Let $r_k$ the $k$-th coefficients of $r^{\alpha -n}(t+r)^n $. Then
$$
r_k = r^{\alpha -n}\binom{n}{k}r^{n-k} \leq 2^n r^{\alpha - k}~.
$$
This concludes the proof.
\end{proof}

\subsection{Approximation by shallow Fourier neural networks}

We start by reporting a known result.

\begin{lemma}\label{lemma:burkill}
Let $g:[-\pi,\pi]\to \R$  $2\pi$-periodic with modulus of continuity $\omega$. Then there exists a trigonometric polynomial $q_n(t) = \sum_{k=-n}^n b_ke^{ikt}$, $b_k\in\C$, with real values (i.e. $q_n(t) \in\R$ for all $t\in[-\pi,\pi]$), such that
$$
\sup_{t\in[-\pi,\pi]}\abs*{g(t) - q_n(t)} \leq \frac{2}{\pi}\omega\parr*{\frac{2}{n}}\parq*{2 + \omega(\pi) - \log\omega\parr*{\frac{2}{n}}}~.
$$
Moreover, it holds that
$$
\abs*{b_k} \leq \frac{1}{2\pi}\int_{-\pi}^\pi \abs*{g(t)}\,dt~.
$$
\end{lemma}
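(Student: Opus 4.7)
The plan is to take $q_n$ to be the $n$-th Fourier partial sum of $g$:
\[
q_n(t) = \sum_{k=-n}^{n} \hat{g}(k)\, e^{ikt}, \qquad \hat{g}(k) = \frac{1}{2\pi}\int_{-\pi}^{\pi} g(s)\, e^{-iks}\,ds.
\]
With this choice the coefficient bound $|b_k| = |\hat g(k)| \leq \frac{1}{2\pi}\int_{-\pi}^{\pi}|g(t)|\,dt$ is immediate from the triangle inequality applied inside the integral. Real-valuedness of $q_n$ follows because $g$ is real, which forces $\hat g(-k) = \overline{\hat g(k)}$, so the sum collapses into a real trigonometric polynomial.

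For the uniform error estimate I would use the Dirichlet kernel representation
\[
q_n(t) - g(t) = \frac{1}{2\pi}\int_{-\pi}^{\pi}\bigl(g(t-s)-g(t)\bigr)\,D_n(s)\,ds,
\]
where $D_n(s) = \sum_{k=-n}^{n} e^{iks}$ and the subtraction of $g(t)$ inside the integrand is justified by $\frac{1}{2\pi}\int_{-\pi}^{\pi} D_n(s)\,ds = 1$. Applying $|g(t-s)-g(t)| \leq \omega(|s|)$ reduces the problem to bounding the scalar integral $\frac{1}{2\pi}\int_{-\pi}^{\pi}\omega(|s|)\,|D_n(s)|\,ds$, independent of $t$. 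I would then split at $|s| = 2/n$: on the inner region use the trivial bound $|D_n(s)| \leq 2n+1$ together with $\omega(|s|) \leq \omega(2/n)$ to produce a bounded multiple of $\omega(2/n)$; on the outer region use $|D_n(s)| \leq 1/|\sin(s/2)| \leq \pi/|s|$ combined with the subadditivity estimate $\omega(|s|) \leq \omega(2/n)(1 + n|s|/2)$.

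The delicate part is to extract the $-\log\omega(2/n)$ factor instead of a plain $\log n$. This can be done by a second split inside the outer region at a radius proportional to $\omega(2/n)$: past that radius I would instead use the cruder bound $\omega(|s|) \leq \omega(\pi)$ so that the $(n|s|/2) \cdot (1/s)$ term integrates to a constant multiple of $\omega(\pi)$, while between $2/n$ and that radius the term $(1)\cdot(1/s)$ produces the logarithmic factor $\log(1/\omega(2/n))$. The constant $2/\pi$ then emerges from combining $\int_\delta^\pi (1/|\sin(s/2)|)\,ds = 2\log\cot(\delta/4)$ with the overall $1/(2\pi)$ normalization. The main obstacle is purely bookkeeping: tracking the constants so that the three contributions telescope exactly into $\frac{2}{\pi}\omega(2/n)[2 + \omega(\pi) - \log\omega(2/n)]$ rather than into a weaker variant of the same form.
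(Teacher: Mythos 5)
Your handling of the coefficient bound and of real-valuedness is fine and matches the paper, but the core of your argument --- the uniform error estimate via the Dirichlet kernel --- cannot work, and the paper avoids it for exactly this reason: it takes $q_n$ to be the \emph{Fej\'er (Ces\`aro) mean} $\sum_{k}\frac{n-|k|}{n}\hat g_k e^{ikt}$ rather than the partial sum, and invokes Burkill's Theorem~18 for the error bound. The fatal step in your plan is the reduction to the scalar integral $\frac{1}{2\pi}\int_{-\pi}^{\pi}\omega(|s|)\,|D_n(s)|\,ds$. Taking the absolute value of $D_n$ destroys the cancellation that makes $S_ng\to g$ possible, and the resulting majorant does not tend to zero. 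Concretely, on $2/n\le |s|\le \pi$ one has $|D_n(s)|\asymp |\sin((n+\tfrac12)s)|/|s|$, whose local average in $s$ is $\asymp 1/|s|$; pairing this with the only available growth bound $\omega(|s|)\le \omega(2/n)(1+n|s|/2)$ gives an integrand of size $\asymp \omega(2/n)\cdot n$, constant in $s$, over an interval of length $\Theta(1)$. For a Lipschitz $g$ this contribution is $\Theta(L)$ --- a constant, not $O(L\log n/n)$ --- and no choice of secondary split radius helps: if you switch to $\omega(|s|)\le\omega(\pi)$ past a radius $r$, the tail $\int_r^\pi \omega(\pi)\,ds/|s| = \omega(\pi)\log(\pi/r)$ carries no factor of $\omega(2/n)$ at all unless $r$ is bounded below independently of $n$, in which case the middle region already contributes $\Theta(1)$.

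What makes the target bound achievable is precisely what the Fej\'er kernel has and the Dirichlet kernel lacks: nonnegativity and the quadratic decay $K_n(s)\le \pi^2/(ns^2)$. With $\delta=2/n$, the region $\delta\le|s|\le r$ then contributes $\lesssim \frac{\omega(\delta)}{n\delta}\log(r/\delta)$, and the tail $|s|\ge r$ contributes $\lesssim \omega(\pi)/(nr)$; choosing $r\propto \delta/\omega(\delta)$ turns the logarithm into $-\log\omega(2/n)$ and the tail into $O(\omega(\pi)\,\omega(2/n))$, which is exactly the shape of the stated estimate. Your second split is therefore the right idea, but applied to the wrong kernel. Note finally that replacing $\hat g_k$ by $\frac{n-|k|}{n}\hat g_k$ only shrinks the coefficients, so the bound $|b_k|\le\frac{1}{2\pi}\int|g|$ and the real-valuedness argument carry over unchanged to the Fej\'er sum.
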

\begin{proof}[Proof]
The polinomyal $q_n$ is given by the Fejer sum of the Fourier series of $g$, that is 
$$
q_n(t) = \frac{1}{n}\sum_{j = 0}^{n-1} \sum_{k = -j}^j \hat{g}_k e^{ikt} = \sum_{k = -(n-1)}^{n-1} \frac{n -\abs{k}}{n}\hat{g}_k e^{ikt}
$$
where
$$
\hat{g}_k = \frac{1}{2\pi}\int_{-\pi}^{\pi} g(t) e^{-ikt}\,dt~.
$$
The proof of the upper bound can be found in \citep{burkill1959lectures}, Theorem 18. Finally, notice that $q_n$ is real-valued since
$$
\hat{g}_k e^{ikt} + \hat{g}_{-k} e^{-ikt} = 2\mathrm{Re}\parr*{\hat{g}_k e^{ikt}}
$$
because $\hat{g}_{-k} = \overline{\hat{g}_k}$ since $g$ takes values in $\R$.
\end{proof}
The above result immediately implies a convergence rate for univariate approximation by shallow Fourier networks (that is, with activation $\sigma_1(t) = e^{2\pi i t}$).

\begin{lemma}\label{lemma:approx_fourier_1d}
Let $f:[-r,r] \to \R$ be $L$-Lipschitz. Then there exists a real-valued Fourier shallow  network $q_n(t) = \sum_{k=-n}^n b_ke^{iw_kt}$, $b_k \in \C$, $w_k\in\R$, such that
$$
\sup_{x\in[-r,r]}\abs*{f(x) - q_n(x)} \leq 3\parr*{1 + 2L^2r^2}\frac{\log n}{n}
$$
for any $n \geq 2$. Moreover $q_n$ can be chosen such that $\abs*{w_k} \leq\frac{\pi \abs*{k}}{r}$  and $\abs*{b_k} \leq \norm{f}_\infty$ for any $k\in[-n,n]$.
\end{lemma}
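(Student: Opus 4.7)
The plan is to deduce Lemma~\ref{lemma:approx_fourier_1d} from Lemma~\ref{lemma:burkill} via a reflective periodic extension and a rescaling of $f$.

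First, I would extend $f$ to a continuous, $4r$-periodic function $\tilde{f}:\R\to\R$ by even reflection across $\pm r$: set $\tilde{f}(x)=f(x)$ for $x\in[-r,r]$, $\tilde{f}(x)=f(2r-x)$ for $x\in[r,2r]$, $\tilde{f}(x)=f(-2r-x)$ for $x\in[-2r,-r]$, and extend $4r$-periodically to $\R$. Continuity at $\pm r$ and at $\pm 2r$ is immediate (both sides agree on $f(\pm r)$ and $f(0)$ respectively), and reflection preserves the Lipschitz constant, so $\tilde{f}$ is $L$-Lipschitz on $\R$. Define $g(s)=\tilde{f}(2rs/\pi)$ for $s\in\R$; this is $2\pi$-periodic with Lipschitz constant $L'=2Lr/\pi$, so its modulus of continuity satisfies $\omega_g(\delta)\leq (2Lr/\pi)\,\delta$.

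Next, I would apply Lemma~\ref{lemma:burkill} to $g$: this yields a real-valued trigonometric polynomial $\tilde q_n(s)=\sum_{k=-n}^n b_k e^{iks}$ with
$$
\sup_{s\in[-\pi,\pi]}\abs*{g(s)-\tilde q_n(s)}\leq \frac{2}{\pi}\omega_g\parr*{\tfrac{2}{n}}\parq*{2+\omega_g(\pi)-\log\omega_g\parr*{\tfrac{2}{n}}}\leq \frac{8Lr}{\pi^2 n}\parq*{2+2Lr+\log\parr*{\tfrac{\pi n}{4Lr}}},
$$
and Fejer coefficients satisfying $|b_k|\leq \tfrac{1}{2\pi}\int_{-\pi}^\pi|g(t)|\,dt \leq \norm{g}_\infty = \norm{f}_\infty$. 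Setting $q_n(x)=\tilde q_n(\pi x/(2r))=\sum_{k=-n}^n b_k e^{iw_k x}$ with $w_k=k\pi/(2r)$ gives a real-valued shallow Fourier network with $|w_k|\leq \pi|k|/r$, and since the change of variables $s=\pi x/(2r)$ sends $[-r,r]$ into $[-\pi/2,\pi/2]\subset[-\pi,\pi]$ while preserving the value of $g$ and $\tilde q_n$ pointwise, the uniform error bound transfers directly.

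The only remaining work is to bound the right-hand side above by $3(1+2L^2r^2)\log n/n$ for $n\geq 2$. This is a routine constant-matching exercise: AM-GM gives $2Lr\leq 1+L^2r^2$, hence $8Lr\leq 4(1+L^2r^2)\leq 2(1+2L^2r^2)$, which handles the prefactor; for the bracket one splits the case $Lr\geq \pi/(4e)$, where $\log(\pi/(4Lr))\leq 1$ and the bracket is at most $3+2Lr+\log n$, from the case $Lr<\pi/(4e)$, where one uses the boundedness of $u\mapsto u\log(1/u)$ on $(0,1]$ to absorb the term $\tfrac{8Lr}{\pi^2 n}\log(\pi/(4Lr))$ into a constant multiple of $1/n\leq \log n/n$. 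Combining these bounds with $\log n\geq \log 2$ for $n\geq 2$ consolidates everything into $3(1+2L^2r^2)\log n/n$. Apart from this bookkeeping, the proof is mechanical; the main conceptual step is choosing the reflective extension with period $4r$, which simultaneously achieves periodicity (required by Lemma~\ref{lemma:burkill}), preserves the Lipschitz constant, and yields frequencies $w_k=k\pi/(2r)$ that meet the bound $|w_k|\leq \pi|k|/r$.
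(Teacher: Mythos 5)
Your proof is correct and follows essentially the same route as the paper's: both reduce to Lemma \ref{lemma:burkill} by periodizing $f$ and rescaling to $[-\pi,\pi]$, the only difference being the choice of extension (you use an even reflection of period $4r$, the paper continues $f$ by linear ramps of slope $\pm L$ to a period $2r + c/L \leq 4r$ with $c = f(-r)-f(r)$); both extensions preserve the Lipschitz constant and the sup-norm, and both land the frequencies within $\pi\abs{k}/r$. One small slip in your constant-matching sketch: the chain $8Lr \leq 4(1+L^2r^2) \leq 2(1+2L^2r^2)$ is false in its second step ($4+4a^2 \not\leq 2+4a^2$). This does not sink the argument — a direct check shows that $\frac{8Lr}{\pi^2 n}\bigl[2+2Lr+\log n+\log\frac{\pi}{4Lr}\bigr] \leq 3(1+2L^2r^2)\frac{\log n}{n}$ holds for all $n\geq 2$ with room to spare (the worst case $n=2$ gives a ratio below $0.6$), using $\frac{8}{\pi^2}Lr\leq \frac{4}{\pi^2}(1+L^2r^2)$ and the bound $\sup_{a>0} a\log\frac{\pi}{4a}=\frac{\pi}{4e}$ as you indicate — but the displayed inequality should be repaired before this is written out.
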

\begin{proof}[Proof]
Assume, w.l.o.g., that $f(r) \leq f(-r)$ (otherwise we can consider $f(-x)$ in place of $f(x)$).
First, we want to transform $f$ into a $2$-pi periodic
function on $[-\pi,\pi]$. To do this we consider $\tilde{g}$ defined as 
$$
\tilde{g}(x) = \begin{cases}
L(x+r) + f(-r) & \text{if } x\in \parq*{-r - \frac{c}{2L}, -r} \\
f(x) & \text{if } x\in [-r,r] \\
L(x-r) + f(r) & \text{if } x\in \parq*{r,r + \frac{c}{2L}}
\end{cases}
$$
where $c = f(-r) - f(r)$. Notice that $\tilde{g}$ is $L$-Lipschitz and $2\parr*{r +\frac{c}{2L}}$-periodic. Finally, let $g:[-\pi,\pi]\to\R$ defined as  
$$
g(x) = \tilde{g}\parr*{\frac{2Lr + c}{2L\pi}x}~.
$$
We have that $g$ is $2\pi$-periodic and $\ell$-Lipschitz for
$$
\ell = \frac{2Lr + c}{2\pi}\leq \frac{2Lr}{\pi}~.
$$
Therefore, we can apply Lemma \ref{lemma:burkill} to $g$. This gives us a (real-valued) trigonometric polynomial $r_n(t) =\sum_{=-n}^n b_ke^{ikt}$ such that
\begin{align*}
\sup_{x\in[-\pi,\pi]}\abs*{g(x) - r_n(x)} & \leq \frac{4\ell}{\pi n}\parq*{2 + \ell \pi - \log\frac{2\ell}{n}} \\
& \leq 3\parr*{1 + 2L^2r^2}\frac{\log n}{n}
\end{align*}
for $n \geq 2$. Since
$$
\sup_{x\in[-r,r]}\abs*{f(x) - r_n\parr*{\frac{L}{\ell}x}} \leq \sup_{x\in\parq*{-r-\frac{c}{2L},r+\frac{c}{2L}}}\abs*{\tilde{g}(x) - r_n\parr*{\frac{L}{\ell}x}} = \sup_{x\in[-\pi,\pi]}\abs*{g(x) - r_n(x)}
$$
the thesis follows.
\end{proof}
To conclude we make some remarks about shallow Fourier networks. Note that a generic shallow Fourier network $f_N$ with $N$ units can be represented as
\begin{equation}\label{eq:fourier_nets}
f(\bx) = \sum_{k=1}^N u_ke^{i\bw_k^T\bx}~.
\end{equation}
Indeed we have that
$$
\sum_{k=1}^N u_ke^{i(\bw_k^T\bx +b_k) } + b = \sum_{k=1}^N \parr*{u_ke^{ib_k}}e^{i\bw_k^T\bx  } + b\cdot e^{i\bzero^T\bx}
$$
for any $b$, $b_k \in\C$. Let $\mathcal{F}_N^f$ be the space of networks as in equation \eqref{eq:fourier_nets}. Notice that a universal approximation theorem holds for shallow Fourier networks as well. This is because the universal approximation theorem holds for shallow networks with activation $\sigma(t) = \cos(t)$ and since $\cos(t) = \parr*{e^{it} + e^{-it}} / 2$, the thesis follows. Finally, the following lemma will be used in the proof of Theorem \ref{theo:approx_shallow}.

\begin{lemma}\label{lemma:fnets_prod}
If $f$ is a (real-valued) shallow Fourier neural network, then so is $f^k$, for $k$ non-negative integer. Moreover, if $f$ has $n$ units, then the number of units of $f^k$ is upper bounded by 
$$
\binom{n+k-1}{k}~.
$$
\end{lemma}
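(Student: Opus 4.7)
The plan is to expand $f^k$ explicitly via the multinomial theorem and count the distinct frequency terms that arise. Write $f$ in standard form
\[
f(\bx) = \sum_{j=1}^n u_j\, e^{i \bw_j^T \bx}, \qquad u_j \in \C,\; \bw_j \in \R^d,
\]
so that by the multinomial theorem
\[
f(\bx)^k = \sum_{\substack{\balpha \in \mathbb{N}^n \\ \alpha_1 + \cdots + \alpha_n = k}} \binom{k}{\alpha_1,\ldots,\alpha_n}\, u_1^{\alpha_1}\cdots u_n^{\alpha_n}\, e^{i\left(\sum_{j=1}^n \alpha_j \bw_j\right)^T \bx}.
\]
Each summand has the form $c_{\balpha} e^{i \bv_{\balpha}^T \bx}$ with $c_\balpha \in \C$ and $\bv_\balpha = \sum_j \alpha_j \bw_j \in \R^d$, hence is a single Fourier unit. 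Therefore $f^k \in \mathcal{F}_M^f$ where $M$ is the number of multi-indices $\balpha \in \mathbb{N}^n$ with $|\balpha|=k$. By a standard stars-and-bars count this gives $M \leq \binom{n+k-1}{k}$ (some terms may combine further when distinct multi-indices produce the same $\bv_\balpha$, which only decreases the unit count).

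For the real-valued claim, if $f$ takes values in $\R$ then so does $f^k$ pointwise; in particular the expansion above, while written with complex coefficients $c_\balpha$, sums to a real function, which is what ``real-valued shallow Fourier network'' means in the paper's convention.

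There is no real obstacle: the argument is a direct combinatorial expansion. The only minor point to mention is that we use the loose upper bound $\binom{n+k-1}{k}$ on the number of multi-indices and do not attempt to refine it when different $\balpha$ happen to yield coinciding frequencies $\bv_\balpha$; the statement only asks for an upper bound, so this is sufficient.
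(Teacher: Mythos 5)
Your proof is correct and follows essentially the same route as the paper's: expand $f^k$ by the multinomial theorem, observe each term is a single Fourier unit with frequency $\sum_j \alpha_j \bw_j$, and bound the number of units by the stars-and-bars count of multi-indices of weight $k$, with the real-valuedness of $f^k$ following trivially from that of $f$. Nothing further is needed.
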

\begin{proof}[Proof]
Let $f(\bx) = \sum_{j=1}^n u_je^{i\bw_j^T\bx}$ be a shallow Fourier neural network. Then, by the multinomial formula, we have that
\begin{align*}
f^k(\bx) & = \parr*{\sum_{j=1}^n u_je^{i\bw_j^T\bx}}^k = \sum_{p_1+\cdots+p_n = k}\binom{k}{p_1,\dots,p_n} \prod_{j=1}^n\parr*{u_j^{p_j}\parr*{e^{i\bw_j^T\bx}}^{p_j}} \\
& = \sum_{p_1+\cdots+p_n = k}\binom{k}{p_1,\dots,p_n} \parr*{\prod_{j=1}^n u_j^{p_j} } e^{i\parr*{\sum_{j=1}^n p_j\bw_j}^T\bx}~.
\end{align*}
Clearly, if $f$ is real-valued, so is $f^k$. Finally notice that by the formula above, the number of units of $f^k$ is upper bounded by $\abs*{\bra*{(p_1,\dots,p_n)\st p_1+\cdots+p_n = k}}$.
\end{proof}

\subsection{\texorpdfstring{$\mathrm{poly}(d)$}{poly(d)} upper bounds for two-hidden-layers networks}\label{app:poly(d)ub-3layers}

Consider a two-hidden-layers neural network $f$ defined as
$$
f: \bx\in\R^d \mapsto \bgamma^T \bg\parr*{\bW^T \bh\parr*{\bU^T\bx}} \in \C  ~,  
$$
where $\bh:\R^p \to \R^p$ and $\bg:\R^o \to \R^o$ are, respectively, component-wise $1$-Lipschitz and $(1,\alpha)$-Holder activation functions, and $\bU 
\in\R^{d\times p}$, $\bW 
\in \R^{p\times o}$, $\bgamma\in\C^o$. We wish to  approximate $f$ with a one-hidden-layer neural network with a given activation $\sigma$ satisfying Assumption \ref{ass:activation}.2, for some constant $\nu_\sigma >0$. We start by proving a result for approximation by shallow Fourier networks at a  $\mathrm{poly}(d)$ rate.  

\begin{proposition}\label{prop:app_fourier_nn}
Let $K\subset \R^d$ be a compact set. There exist $f_N \in \mathcal{F}_N^f$ such that
\begin{equation}\label{eq:inf_fourier}
\norm*{f - f_N^f}_{K,\infty} \leq \epsilon 
\end{equation}
with
$$
f_N^f(\bx) = \sum_{\nu=1}^N b_\nu e^{i\bv_\nu^T\bx}~,
$$
for 
$$
N = (2np +1)^m
$$
with
$$
n  = \ceil*{ \frac{9\cdot 4^{\frac{1}{\alpha}}\norm{\bgamma}_1^2\norm{\bW}_\infty^2(1+2C^2)^2}{\epsilon^{\frac{2}{\alpha}}} }
\quad 
\text{and}
\quad
m = \ceil*{  \frac{2\cdot 16^{\frac{1}{\alpha}}}{\epsilon^{1 + \frac{2}{\alpha}}}\norm{\bgamma}_1^{\frac{1}{\alpha}}\parr*{\parr*{\frac{\epsilon}{2\norm{\bgamma}_1}}^{\frac{1}{\alpha}} + M}^{\alpha} }~,
$$
where we denoted
$$
C = \sup_{x\in K}\norm{\bU^T\bx}_\infty \quad \text{and} \quad M = \sup_{x\in K} \norm*{\bW^Th\parr*{\bU^T\bx}}_\infty ~.
$$
Moreover $f_N^f$ can be chosen such that it holds
\begin{equation}
\label{eq:fourier_app_coeffs}
\sup_{\bx\in K}\abs*{\bv_\nu^T\bx} \leq  \pi m n
\quad 
\text{and}
\quad
\abs*{b_\nu} \leq 
2\norm{\bgamma}_1\parq*{1 + \parr*{\parr*{\frac{\epsilon}{2\norm{\bgamma}_1}}^{\frac{1}{\alpha}} + M}^\alpha}\parr*{4 n p H \norm{\bW}_{F,\infty}}^m
\end{equation}
where $H = \sup_{\bx \in [-C,C]^d}\norm{h(\bx)}_\infty$.
\end{proposition}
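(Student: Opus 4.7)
The plan is to build the shallow Fourier approximant in two stages matching the two hidden layers of $f$: approximate each inner $1$-Lipschitz activation $h_k$ by a univariate trigonometric polynomial (turning $\bW^T \bh(\bU^T\cdot)$ into a shallow Fourier network); approximate each outer $(1,\alpha)$-Holder activation $g_l$ by an ordinary polynomial; and compose the two using the fact that the set of shallow Fourier networks is closed under pointwise products (Lemma \ref{lemma:fnets_prod}).

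For the inner stage I apply Lemma \ref{lemma:approx_fourier_1d} to each $h_k$ on $[-C,C]$, yielding a real-valued trigonometric polynomial $q_{n,k}$ of width $2n+1$ with error at most $3(1+2C^2)(\log n)/n$, whose coefficients are bounded by $H$ and whose frequencies $w_{k,j}$ satisfy $|w_{k,j}|\leq \pi n/C$. Composing with $\bu_k^T\bx$ and assembling via $\bW$ gives, for each $l\in[o]$, the shallow Fourier network $\tilde y_l(\bx) = \sum_{k} W_{kl}\,q_{n,k}(\bu_k^T\bx)$ with at most $p(2n+1)$ distinct frequency vectors (shared across $l$), each of phase at most $\pi n$ on $K$, and with
\begin{equation*}
\|y_l - \tilde y_l\|_{K,\infty} \leq 3(1+2C^2)\|\bW\|_\infty\,(\log n)/n, \qquad y_l \doteq [\bW^T\bh(\bU^T\cdot)]_l.
\end{equation*}
Setting $\delta \doteq (\epsilon/(2\|\bgamma\|_1))^{1/\alpha}$ and using $(\log n)/n \leq 1/\sqrt{n}$ for $n$ sufficiently large, the choice of $n$ in the statement guarantees that this inner error does not exceed $\delta$.

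For the outer stage I apply Lemma \ref{lemma:safran} to each $g_l$ on $[-(M+\delta),M+\delta]$ (which contains the range of $\tilde y_l$ on $K$ by construction), with error target $\epsilon/(2\|\bgamma\|_1)$, yielding a polynomial $p_{m,l}$ of degree $m$ as in the statement, whose coefficients satisfy $|r_{l,j}|\leq 2^m (M+\delta)^{\alpha-j}$ and $|r_{l,0}|\leq (M+\delta)^\alpha + |g_l(0)|$. Substituting $\tilde y_l$ into $p_{m,l}$ and expanding each power $\tilde y_l^j$ via Lemma \ref{lemma:fnets_prod} yields the shallow Fourier network $f_N^f = \sum_l \gamma_l\, p_{m,l}(\tilde y_l)$. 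By triangle inequality and the Holder property of $g_l$,
\begin{equation*}
\|f - f_N^f\|_{K,\infty} \leq \sum_l |\gamma_l|\bigl(\|y_l - \tilde y_l\|_{K,\infty}^\alpha + \epsilon/(2\|\bgamma\|_1)\bigr) \leq \|\bgamma\|_1 \delta^\alpha + \epsilon/2 \leq \epsilon,
\end{equation*}
which establishes \eqref{eq:inf_fourier}.

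The remaining task is to verify the width count and the coefficient bounds \eqref{eq:fourier_app_coeffs}. Each frequency in $\tilde y_l^j$ is a sum of at most $j\leq m$ inner frequencies drawn from a pool of at most $2np+1$ vectors, so the number of distinct frequencies across all $j\leq m$ and all $l$ is at most $(2np+1)^m$, and each such frequency has phase at most $\pi m n$ on $K$. The principal bookkeeping obstacle is the amplitude bound: the $\ell^1$-mass of $\tilde y_l$ is at most $p(2n+1)H\|\bW\|_\infty$, so by the multinomial identity each coefficient of $\tilde y_l^j$ is at most $(p(2n+1)H\|\bW\|_\infty)^j$; multiplying by $|r_{l,j}|\leq 2^m(M+\delta)^{\alpha-j}$, summing over $j\in\{0,\dots,m\}$, absorbing the $2^m$ factor into the base to obtain the factor $4npH\|\bW\|_\infty$, and finally summing over $l$ weighted by $|\gamma_l|$ reproduces the stated bound. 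The care needed is to ensure the dominant contribution across the sum over $j$ is captured by the $j=m$ term (via the telescoping $\sum_j (M+\delta)^{\alpha-j}A^j \lesssim [(M+\delta)^\alpha + 1]\cdot \max(A, 1)^m$) so that a single factor of $[1 + (M+\delta)^\alpha]$ remains in front.
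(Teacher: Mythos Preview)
Your proposal is correct and follows essentially the same two-stage approach as the paper: trigonometric approximation of the inner activations via Lemma~\ref{lemma:approx_fourier_1d}, polynomial approximation of the outer activations via Lemma~\ref{lemma:safran} on the enlarged interval $[-(M+\delta),M+\delta]$, and then expansion of the composite via Lemma~\ref{lemma:fnets_prod} to count frequencies and bound coefficients. One small slip: in your amplitude bookkeeping you write $\|\bW\|_\infty$ where the relevant bound on each individual coefficient of $\tilde y_l$ is $\|\bW\|_{F,\infty}\cdot H$ (entrywise max, not row-sum), which is what yields the stated factor $(4npH\|\bW\|_{F,\infty})^m$; with that correction the argument matches the paper's.
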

\begin{proof}[Proof]
Let $q_n^j$ given by Lemma \ref{lemma:approx_fourier_1d} to approximate $h_j$ over $[-C,C]$ and
$$
q_k^{(n)}(\bx) = \sum_{j=1}^p w_{k,j}q_n^j(\bu_j^T \bx )
$$
for $k \in [o]$.
We have that
\begin{align*}
\abs*{q^{(n)}_k(\bx) - \bw_k^Th\parr*{\bU^T\bx}} & \leq \sum_{j=1}^p\abs*{w_{k,j}}\abs*{q_n^j(\bu_j^T\bx) - h_j(\bu_j^T\bx)} \\
& \leq   3\norm{\bW}_\infty\parr*{1 + 2C^2}\frac{\log n}{n} \doteq \norm{\bW}_\infty(1+2C^2)\epsilon_n
\end{align*}
for $\bx \in K$. It holds that $q_k^{(n)}$ is a real-valued shallow Fourier network with $(2n-1)p$ terms and first layers weights given by $\frac{\pi k}{C}\bu_j$ for $k \in [-(n-1),n-1]$. Moreover, it holds that
$$
\abs*{q_k^{(n)}(\bx)} \leq \abs*{q_k^{(n)}(\bx) - \bw_k^Th\parr*{\bU^T\bx}} + \abs*{\bw_k^Th\parr*{\bU^T\bx}} \leq \norm{\bW}_\infty(1+2C^2)\epsilon_n + M \doteq L ~.
$$
Let $p_m^k(t) = \sum_{h=0}^m \beta_h^k t^h$ given by Corollary 3 to approximate $g_k$ over  the interval $[-L,L]$ and $\epsilon_m$ the relative error. Let then
$$
f_{n,m}(\bx) = \sum_{k=1}^o \gamma_k p^k_m(q_k^n(\bx)) ~.
$$
It holds that
\begin{align*}
\abs*{f(\bx) - f_{n,m}(\bx)} & \leq
\sum_{k=1}^o \abs*{\gamma_k}\abs*{g_k(\bw_k^Th(\bU^T\bx)) - p^k_m(q_k^{(n)}(\bx))}
\\
& \leq \sum_{k=1}^o \abs*{\gamma_k}\abs*{g_k(\bw_k^Th(\bU^T\bx)) - g_k(q_k^n(\bx))} + \sum_{k=1}^o \abs*{\gamma_k}\abs*{g_k(q_k^{(n)}(\bx)) - p^k_m(q_k^{(n)}(\bx))}
\\
& \leq \norm{\bgamma}_1\sup_{k \in [o]}\abs*{\bw_k^Th(\bU^T\bx) - q_k^{(n)}(\bx)}^{\alpha} + \norm{\bgamma}_1\epsilon_m \\
& \leq \norm{\bgamma}_1\norm{\bW}_\infty^{\alpha}(1+2C^2)^{\alpha}\epsilon_n^{\alpha} + \norm*{\bgamma}_1\epsilon_m ~.
\end{align*}
It holds that
$$
\norm{\bgamma}_1\norm{\bW}_\infty^\alpha(1 + 2C^2)^\alpha\epsilon_n^\alpha \leq \frac{\epsilon}{2}
$$
as long as
\begin{equation}\label{eq:n_eps_w}
n \geq  \frac{9 \cdot 4^{\frac{1}{\alpha}}\norm{\bgamma}_1^2\norm{\bW}_\infty^2\parr*{1 + 2C^2}^2}{\epsilon^{\frac{2}{\alpha}}} ~.
\end{equation}
Similarly
$$
\norm*{\bgamma}_1\epsilon_m \leq \frac{\epsilon}{2}
$$
as long as 
$$
m\geq L \parr*{\frac{12\norm*{\bgamma}_1}{\epsilon}}^{\frac{1}{\alpha}} = \parr*{\frac{12\norm*{\bgamma}_1}{\epsilon}}^{\frac{1}{\alpha}} \parq*{\norm{\bW}_\infty(1+2C^2)\epsilon_n + M}~.
$$
Moreover, by Lemma \ref{lemma:safran}, $p_m^k(t) = \sum_{h=0}^m \beta_h^k t^h$ can be chosen with
$$
m \geq \frac{2\cdot16^{\frac{1}{\alpha}}}{\epsilon^{1+\frac{2}{\alpha}}}\norm*{\bgamma}_1^{\frac{1}{\alpha}}L^\alpha = \frac{2\cdot16^{\frac{1}{\alpha}}}{\epsilon^{1+\frac{2}{\alpha}}}\norm*{\bgamma}_1^{\frac{1}{\alpha}}\parq*{\norm{\bW}_\infty(1+2C^2)\epsilon_n + M}^{\alpha}
$$
such that its coefficients $\beta_h^k$, $k \in [m]$, are bounded by
\begin{align*}
\abs*{\beta_k} & \leq \max\bra*{2^m L^{\alpha-k} , L^\alpha + \abs*{g(0)}} \leq 2^m (1 + L^\alpha) + \abs*{g(0)} \\
& = 2^m \parr*{ 1 + \parq*{\norm{\bW}_\infty(1+2C^2)\epsilon_n + M}^{\alpha} } +\abs*{g(0)}~.
\end{align*}
Notice that we can assume $g(0) = 0$ without loss of generality. Therefore
\begin{equation}
\sup_{x\in K}\abs*{f(\bx) - f_{n,m}(\bx)} \leq \epsilon
\label{eq:error_holds}
\end{equation}
as long as \eqref{eq:n_eps_w} holds and
\begin{equation}\label{eq:m_eps_w}
m\geq \parr*{\frac{12\norm*{\bgamma}_1}{\epsilon}}^{\frac{1}{\alpha}} \parq*{\parr*{\frac{\epsilon}{2\norm{\bgamma}_1}}^{\frac{1}{\alpha}} + M} = 6^{\frac{1}{\alpha}}\parr*{1  + M\parr*{\frac{2\norm{\bgamma}_1}{\epsilon}}^{\frac{1}{\alpha}}}~.
\end{equation}
If we further assume that 
\begin{equation}\label{eq:m_eps_w_control}
m \geq \frac{2\cdot16^\frac{1}{\alpha}}{\epsilon^{1 + \frac{2}{\alpha}}}\norm*{\bgamma}_1^{\frac{1}{\alpha}}\parq*{ \parr*{\frac{\epsilon}{2\norm{\bgamma}_1}}^{\frac{1}{\alpha}} + M}^\alpha
\end{equation}
we can also assume that
$$
\abs*{\beta_h^k} \leq 2^{1+ \frac{2\cdot16^\frac{1}{\alpha}}{\epsilon^{1 + \frac{2}{\alpha}}}\norm{\bgamma}_1^{\frac{1}{\alpha}}\parq*{\parr*{\frac{\epsilon}{2\norm{\bgamma}_1}}^{\frac{1}{\alpha}} + M}^\alpha} \parr*{ 1+ \parq*{\parr*{\frac{\epsilon}{2\norm{\bgamma}_1}}^{\frac{1}{\alpha}} + M}^\alpha} 
$$
for $k \in [m]$.
Finally, notice that, by Lemma \ref{lemma:fnets_prod}, $f_{n,m}$ is a shallow Fourier neural network with number of units upper bounded by
\begin{align*}
N &= \sum_{k=0}^m\binom{(2n-1)p + k - 1}{k} = \binom{(2n-1)p + m}{m} \\
& = \frac{1}{m!}((2n-1)p + k +m)\cdots ((2n-1)p +1) \\
& \leq ((2n-1)p + 1)^m~.
\end{align*}
Therefore, it holds that
$$
\inf_{f_N \in\mathcal{F}_N^f}\sup_{\bx\in K} \abs{f(\bx) - f_N(\bx)} \leq \epsilon 
$$
as long as 
$$
N \geq \parr*{2n p + 1}^{m}
$$
with $n$ and $m$ given by \eqref{eq:n_eps_w} and \eqref{eq:m_eps_w} respectively.
Finally, notice that the first layer weights of $f_{n,m}$ are given by 
$$
\sum_{j=1}^p\sum_{k=-(n-1)}^{n-1}  s_{k,j}\frac{\pi k}{C} u_j 
$$
over all non-negative integers $s_{k,j}$ such that
$\sum_{j=1}^p\sum_{k=-(n-1)}^{n-1} s_{k,j} \leq m$. Therefore, if 
$$
f_{n,m}(\bx) = \sum_{\nu=1}^N b_\nu e^{i\bv_\nu^T\bx}~,
$$ 
then
$$
\abs*{\bv_\nu^T \bx} \leq m \frac{\pi (n-1)}{C}\max_{j \in [p]}\abs{\bu_j^T\bx} \leq m n \pi ~.
$$
On the other hand, the coefficients $b_k$ have the form
$$
b_\nu = \binom{h}{s} \sum_{k = 1}^o \gamma_k \beta_h^k \parr*{w_{k,j}(q_n^j)_l}^{s_{l,j}}
$$
for all non-negative integers $s = \parr*{s_{l,j}}_{l,j}$ such that
$\sum_{j=1}^p\sum_{l=-(n-1)}^{n-1} s_{l,j} = h \leq m$, where $(q_n^j)_l$ denotes the $l$-th coefficients of $q_n^j$. By Lemma \ref{lemma:burkill}, we know that
$$
\abs*{(q_n^j)_l} \leq \sup_{t \in [-C,C]}\abs*{h_j(t)}~.
$$
Therefore
\begin{align*}
\abs*{b_\nu} & \leq 
\parr*{(2n-1)p}^h \sup_{t\in[-C,C]}\abs*{h_j(t)}^{s_{l,j}} \sum_{k=1}^o \abs*{\gamma_k}\abs{\beta_h^k}\abs*{w_{k,j}}^{s_{l,j}}
\\
& \leq \parq*{(2n-1)p\, H \, \norm{\bW}_{F,\infty}}^m\norm{\bgamma}_1 \norm{\bbeta}_{F,\infty}~.
\end{align*}
This concludes the proof.
\end{proof}
We can now conclude with a detailed version of Theorem \ref{theo:approx_shallow}.

\begin{theorem}\label{theo:poly(d)_details}
Let $K$ be a compact set and $$
C = \sup_{\bx\in K}\norm{\bU^T\bx}_\infty \,, \quad M = \sup_{\bx\in K} \norm*{\bW^T\bh\parr*{\bU^T\bx}}_\infty \quad \text{and} \quad H = \sup_{\bx \in [-C,C]^d}\norm{\bh(\bx)}_\infty~.
$$
It holds that
\begin{equation}\label{eq:inf_generic}
\inf_{f_N^\sigma \in\mathcal{F}_N^\sigma} \norm*{f(\bx) - f_N^\sigma(\bx)}_{K,\infty} \leq \epsilon 
\end{equation}
for some
$$
N \leq \frac{16 \pi \nu_\sigma}{\epsilon}  \norm{\bgamma}_1 m n (4np + 1)^{2m} \parr*{H\norm{\bW}_{F,\infty}}^m \parq*{1 + \parr*{\parr*{\frac{\epsilon}{2\norm{\bgamma}_1}}^{\frac{1}{\alpha}} + M}^\alpha}~,
$$
where
$$
n = \frac{9\cdot 4^{\frac{1}{\alpha}}\norm{\bgamma}_1^2\norm{\bW}_\infty^2(1+2C^2)^2}{\epsilon^{\frac{2}{\alpha}}}
\quad 
\text{and}
\quad
m = \frac{2\cdot 16^{\frac{1}{\alpha}}}{\epsilon^{1 + \frac{2}{\alpha}}}\norm{\bgamma}_1^{\frac{1}{\alpha}}\parr*{\parr*{\frac{\epsilon}{2\norm{\bgamma}_1}}^{\frac{1}{\alpha}} + M}^{\alpha}~.
$$
Moreover, it is possible to choose $f^\sigma_N$ attaining \eqref{eq:inf_generic} with $m_\infty\parr*{f^\sigma_N}$ satisfying a bound similar to the one on $N$, for example $m_\infty\parr*{f^\sigma_N} \leq (1+N^2)$. 
\end{theorem}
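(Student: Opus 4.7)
The strategy is to chain two approximations: first approximate $f$ by a shallow Fourier network via Proposition \ref{prop:app_fourier_nn}, then approximate each complex exponential unit $e^{i\bv_\nu^T\bx}$ by a one-hidden-layer $\sigma$-network via Assumption \ref{ass:activation}.2. Concretely, I would fix an error budget $\epsilon/2$ for each step. The first step yields $f^f_{N_1}(\bx)=\sum_{\nu=1}^{N_1}b_\nu e^{i\bv_\nu^T\bx}$ with $\norm{f-f^f_{N_1}}_{K,\infty}\le \epsilon/2$, where $N_1\le(2np+1)^m$, $\sup_{\bx\in K}|\bv_\nu^T\bx|\le \pi m n$, and each $|b_\nu|$ is controlled by \eqref{eq:fourier_app_coeffs}. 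The second step replaces each $e^{i\bv_\nu^T\bx}=\cos(\bv_\nu^T\bx)+i\sin(\bv_\nu^T\bx)$ by a real-valued $\sigma$-network.

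For this second step, fix $R\doteq \pi m n$ and construct $1$-Lipschitz extensions $\tilde c,\tilde s:\R\to\R$ that agree with $\cos,\sin$ on $[-R,R]$ and are constant outside $[-R-1,R+1]$ (one can extend linearly with slope $\cos'(\pm R),\sin'(\pm R)$ for a short distance, then level off). Applying Assumption \ref{ass:activation}.2 to $\tilde c$ and $\tilde s$ with target error $\delta$ produces $g_\nu^c,g_\nu^s\in\mathcal F^{\sigma}_{N_\nu}$ with
\begin{equation}
\norm{\tilde c-g_\nu^c}_\infty,\ \norm{\tilde s-g_\nu^s}_\infty\le\delta,\qquad N_\nu+m_\infty(g_\nu^c),\ N_\nu+m_\infty(g_\nu^s)\le \nu_\sigma(R+1)\delta^{-1}.
\end{equation}
Then $h_\nu(\bx)\doteq g_\nu^c(\bv_\nu^T\bx)+i g_\nu^s(\bv_\nu^T\bx)$ satisfies $\sup_{\bx\in K}|e^{i\bv_\nu^T\bx}-h_\nu(\bx)|\le 2\delta$, and since $h_\nu$ is obtained by composing the one-dimensional $\sigma$-network with the linear map $\bx\mapsto \bv_\nu^T\bx$, it is itself a one-hidden-layer $\sigma$-network on $\R^d$ with the same number of units.

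Now set $f_N^\sigma(\bx)\doteq\sum_{\nu=1}^{N_1} b_\nu h_\nu(\bx)$, which is a single-hidden-layer $\sigma$-network of size $N\le 2\sum_\nu N_\nu\le 2N_1\cdot \nu_\sigma(R+1)\delta^{-1}$. Choosing $\delta\doteq \epsilon/(4\norm{b}_1)$ gives
\begin{equation}
\norm{f^f_{N_1}-f_N^\sigma}_{K,\infty}\le \sum_\nu|b_\nu|\cdot 2\delta\le \frac{\epsilon}{2},
\end{equation}
so together with the first step, $\norm{f-f_N^\sigma}_{K,\infty}\le\epsilon$. Plugging in the bound $\norm{b}_1\le N_1\max_\nu|b_\nu|$ and the estimates on $N_1$, $\max_\nu|b_\nu|$, $m$, $n$ from Proposition \ref{prop:app_fourier_nn} yields the stated bound on $N$. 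The weight bound $m_\infty(f_N^\sigma)\le 1+N^2$ follows because the inner weights of $h_\nu$ are products of the one-dimensional $\sigma$-network weights (bounded via Assumption \ref{ass:activation}.2 by the same quantity $\nu_\sigma(R+1)\delta^{-1}$) with the vectors $\bv_\nu$ (whose entries are bounded by $\pi m n/\min_j |x_j|$ on $K$, but what actually matters is that each coordinate of the resulting inner weight is polynomial in the same factors as $N$), while the outer weights absorb $b_\nu$ and are likewise polynomially controlled; a crude majorization by $1+N^2$ is then straightforward.

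The main obstacles are bookkeeping and the Lipschitz extension: one must verify that the modified $\cos,\sin$ satisfy the hypotheses of Assumption \ref{ass:activation}.2 with $L=1$ and $R$ only slightly enlarged, and one must carefully track how $\norm{b}_1$ multiplies into the per-unit error budget $\delta$ so that the resulting $N$ matches the declared bound. All other computations (the substitution $\bx\mapsto \bv_\nu^T\bx$ preserving the $\sigma$-network structure, and the summation over $\nu$ remaining a single hidden layer) are essentially bookkeeping.
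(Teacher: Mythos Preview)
Your proposal is correct and follows essentially the same route as the paper: both apply Proposition \ref{prop:app_fourier_nn} with budget $\epsilon/2$, then split each $e^{i\bv_\nu^T\bx}$ into $\cos$ and $\sin$ parts and approximate these one by one via Assumption \ref{ass:activation}.2 with per-unit error $\epsilon/(4NB)$ (your $\delta=\epsilon/(4\|b\|_1)$ is the same choice since $\|b\|_1\le N_1 B$), summing up to get the final $\sigma$-network. Your explicit Lipschitz extension of $\cos,\sin$ to functions constant outside $[-R-1,R+1]$ is a detail the paper leaves implicit but is indeed required by the hypothesis of Assumption \ref{ass:activation}.2; the only loose end is your discussion of the inner-weight bound (the quantity $\pi m n/\min_j|x_j|$ is not the right object), but as you note this is inessential since the $\bv_\nu$ are explicit integer combinations of the rows $\bu_j$ from Proposition \ref{prop:app_fourier_nn} and hence polynomially controlled.
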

\begin{proof}[Proof]
Let $f_N$ given by Proposition \ref{prop:app_fourier_nn} such that 
$$
\sup_{\bx\in K}\abs*{f(\bx) - f_N(\bx) }\leq \frac{\epsilon}{2}~.
$$
We know that
$$
f_N(\bx) = \sum_{k=1}^N b_ke^{i\bv_k^T\bx} = f_N^c(\bx) + i f_N^s(\bx)
$$
where
$$
f_N^c(\bx) = \sum_{k=1}^N b_k\cos\parr{\bv_k^T \bx} \quad \text{and} \quad f_N^s(\bx) = \sum_{k=1}^N b_k\sin\parr{\bv_k^T \bx}
$$ 
and $\abs*{b_k} \leq B$ and $\abs*{\bv_k^T\bx} \leq V$ for $\bx\in K$, where $B$ and $V$ are given by \eqref{eq:fourier_app_coeffs}. Using the assumption on $\sigma$, we know that, for each $k\in[N]$, there exist shallow networks $f_k^c$ and $f_k^s$ with activation $\sigma$ and number of units
$$
n \leq c_\sigma\frac{4VBN}{\epsilon}
$$
such that
$$
\sup_{\bx\in K}\abs*{f_k^c(\bx) - \cos(\bv_k^T\bx)} \leq \frac{\epsilon}{4NB} \quad \text{and} \quad \sup_{\bx\in K}\abs*{f_k^s(\bx) - \sin(\bv_k^T\bx)} \leq \frac{\epsilon}{4NB}~.
$$
Letting $f_\NN(\bx) = \sum_{k=1}^N b_k f_k^c(\bx) + i \sum_{k=1}^N b_k f_k^s(\bx) $ it holds that
\begin{align*}
\sup_{\bx\in K}\abs*{ f_\NN(\bx) - f_N(\bx) } & \leq \sup_{\bx\in K}\abs*{\sum_{k=1}^Nb_k\parr*{f^c_k(\bx) - \cos(\bw_k^T\bx)}} + \sup_{\bx\in K}\abs*{\sum_{k=1}^Nb_k\parr*{f^s_k(\bx) - \sin(\bw_k^T\bx)}} \\
& \leq \sum_{k=1}^N\abs*{b_k}\sup_{\bx\in K}\abs*{f^c_k(\bx) - \cos(\bw_k^T\bx)} + \sum_{k=1}^N\abs*{b_k}\sup_{\bx\in K}\abs*{f^s_k(\bx) - \sin(\bw_k^T\bx)} \\
& \leq NB\frac{\epsilon}{4NB} + NB\frac{\epsilon}{4NB} = \frac{\epsilon}{2}
\end{align*}
which implies that
$$
\sup_{\bx\in K}\abs*{ f_\NN(\bx) - f(\bx) } \leq \epsilon~.
$$
Moreover notice that we can assume that all second layer weights of $f_\NN$ are real; indeed, if this is not the case, one can replace them by the real part, and upper bound above can only decrease. Finally, we have that the number of units of $f_\NN$ is given by
$$
\NN \leq  \frac{8c_\sigma}{\epsilon} \cdot V \cdot B \cdot N~.
$$
Applying Proposition \ref{prop:app_fourier_nn} concludes the proof.
\end{proof}

\subsection{Proofs of special cases}\label{proofs:special_cases}

\subsubsection{Radial functions}

Let $f(\bx) = \varphi(\norm{\bx})$ with $\varphi$ $1$-Lipschitz. Then it holds that $f(\bx) = g(\bone^T \bh(\bx))$ where $g(t) = \varphi(\sqrt{t})$ and $\bh:\R^d \to \R^d$ is defined as $h_i(\bx) = x_i^2$. Clearly, $\sup_{\bx\in B_{1,2}^d}\norm{\bx}_ \infty = 1$, $\sup_{\bx\in B^d_{1,2}}\abs*{\bone^T\bh(\bx)} = \sup_{\bx\in B^d_{1,2}} \norm{\bx}^2 = 1$ and $\sup_{\bx \in [-1,1]^d}\norm{\bh(\bx)}_\infty = \sup_{x\in[-1,1]}\abs*{x}^2 = 1$. Moreover, $\norm{\bone}_1 = d$ and $g$ is $(1,1/2)$-Holder. Then, by applying Theorem \ref{theo:poly(d)_details}, we get the following.
\begin{corollary}[Radial functions]
It holds that
$$
\inf_{f_N^\sigma \in\mathcal{F}_N^\sigma} \norm{f^\sigma_N - f}_{ B_{1,2}^d,\infty} \leq \epsilon
$$
for some
$$
N \leq \nu_\sigma \alpha \cdot d^2\cdot \frac{\parr{4 + \epsilon }^2}{\epsilon^{10}} \parr*{ \alpha \frac{d^3}{\epsilon^4} + 1}^{ \frac{\alpha}{\epsilon^5}\parr*{2 + \epsilon} }
$$
where $\alpha>0$ is a numerical constant.
\end{corollary}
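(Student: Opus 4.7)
The plan is to cast $f^{(d)}$ into the two-hidden-layer form required by Theorem \ref{theo:poly(d)_details} and then substitute the resulting parameters. As indicated in the preamble to the corollary, I would write $f^{(d)}(\bx) = g(\bone^T \bh(\bx))$, where $g(t) = \varphi_d(\sqrt{t})$ and $\bh:\R^d \to \R^d$ has components $h_i(\bx) = x_i^2$. In the notation of Theorem \ref{theo:poly(d)_details}, this corresponds to $\bU = \bI_d$, $\bW = \bone$ (viewed as an element of $\R^{d \times 1}$), $\bgamma = 1 \in \R$, $p_d = d$, $o_d = 1$, with outer activation $g$ and inner activations $h_i$.

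The key regularity input is the H\"older exponent of $g$. Since $\varphi_d$ is $1$-Lipschitz and $\abs{\sqrt{t} - \sqrt{s}} \leq \sqrt{\abs{t-s}}$ for $t,s \geq 0$, we obtain $\abs{g(t) - g(s)} \leq \abs{t-s}^{1/2}$, so $g$ is $(1,1/2)$-H\"older and hence $\alpha = 1/2$. The inner map $x \mapsto x^2$ is only $2$-Lipschitz on $[-1,1]$ rather than $1$-Lipschitz; I would handle this minor mismatch by rescaling, replacing $\bh$ with $\bh/2$ and $\bW$ with $2\,\bone$. This leaves the target function unchanged and only inflates the numerical constants appearing in the theorem.

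Next, I would compute the constants that enter Theorem \ref{theo:poly(d)_details} on $K_d = B_{1,2}^d$: one has $C = \sup_{\bx \in K_d}\norm{\bU^T\bx}_\infty = 1$, $M = \sup_{\bx \in K_d} \abs{\bone^T \bh(\bx)} = \sup_{\bx \in K_d} \norm{\bx}_2^2 = 1$, $H = \sup_{\bx \in [-1,1]^d} \norm{\bh(\bx)}_\infty = 1$, and $\norm{\bgamma}_1, \norm{\bW}_\infty, \norm{\bW}_{F,\infty}$ all of order one. Plugging $\alpha = 1/2$ and these bounds into the formulas of the theorem yields $n = \Theta(\epsilon^{-4})$ and $m = \Theta(\epsilon^{-5})$, both independent of $d$, and $np = \Theta(d\,\epsilon^{-4})$.

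Finally, I would substitute into the master bound $N \lesssim \frac{16\pi \nu_\sigma}{\epsilon}\,\norm{\bgamma}_1\, m n\, (4np+1)^{2m}(H\norm{\bW}_{F,\infty})^m\bigl[1+\bigl((\epsilon/(2\norm{\bgamma}_1))^{1/\alpha}+M\bigr)^{\alpha}\bigr]$. The polynomial prefactor collects into $\Theta(d^2(4+\epsilon)^2\epsilon^{-10})$ after folding in the ``$1 + (\cdots)^{1/2}$'' term, and the exponential factor takes the form $\bigl(\alpha\,d^3\epsilon^{-4} + 1\bigr)^{\alpha(2+\epsilon)/\epsilon^5}$ once minor constants are absorbed into the numerical $\alpha$. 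This is precisely the stated bound. The argument is essentially a routine bookkeeping step; the only points requiring genuine care are the $(1,1/2)$-H\"older verification for $g$ and the innocuous rescaling needed to meet the $1$-Lipschitz hypothesis on the inner layer, so no real obstacle arises.
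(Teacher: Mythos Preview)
Your approach is exactly the paper's: write $f(\bx)=g(\bone^T\bh(\bx))$ with $g(t)=\varphi_d(\sqrt t)$ and $h_i(x)=x_i^2$, check that $g$ is $(1,1/2)$-H\"older, read off $C=M=H=1$, and plug into Theorem~\ref{theo:poly(d)_details}.

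There is, however, a bookkeeping slip that makes your intermediate computation inconsistent with your conclusion. You assert that $\norm{\bW}_\infty$ is of order one, but in the paper's convention $\norm{\bW}_\infty$ is the $\ell_\infty$ operator norm (equivalently $\max_k\norm{\bw_k}_1$; see the step $\sum_j|w_{k,j}|\cdot(\dots)\le\norm{\bW}_\infty(\dots)$ in the proof of Proposition~\ref{prop:app_fourier_nn}). For $\bW=\bone\in\R^{d\times 1}$ this gives $\norm{\bW}_\infty=\norm{\bone}_1=d$, which is exactly the point the paper highlights before the corollary. With the correct value one gets $n=\Theta(d^2\epsilon^{-4})$, hence $np=\Theta(d^3\epsilon^{-4})$ and a prefactor containing $mn=\Theta(d^2\epsilon^{-9})$; these are precisely what produce the $d^2$ in front and the $d^3$ inside the base of the stated bound. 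Your own values $n=\Theta(\epsilon^{-4})$ and $np=\Theta(d\epsilon^{-4})$ would instead yield a base proportional to $d$ and no $d^2$ prefactor, so the final formula you write down does not actually follow from the numbers you computed. Once you correct $\norm{\bW}_\infty=d$, everything lines up and the derivation is complete.
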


\subsubsection{Shallow approximation of piece-wise oscillatory functions}

Consider $f_{\bw,\bU} : \bx\in\R^d \mapsto e^{i\bw^T\parr*{\bU\bx}_+}$ for some $\bw\in\R^p$, $\bU \in \R^{p\times d}$. Then Theorem \ref{theo:poly(d)_details} implies the following. 

\begin{corollary}[Approximation of \eqref{eq:defig} by shallow networks]
It holds that
$$
\inf_{f_N^\sigma\in\mathcal{F}_N^\sigma} \norm*{f_{\bw,\bU} - f_N^\sigma}_{B_{r,p}^d, \infty} \leq \epsilon
$$
for some
$$
N \leq \frac{\nu_\sigma \beta}{\epsilon^6} \cdot \parr*{2 + \epsilon + 2r\norm{\bw}_1\norm{\bU}_{p,\infty}}^2 \cdot 
\parq*{r \norm{\bw}_\infty \norm{\bU}_{p,\infty}
\parr*{\frac{4p\beta}{\epsilon^2}+1}^{2}}^{\frac{\alpha}{\epsilon^2}\parr*{\epsilon + 2r \norm{\bw}_1\norm{\bU}_{p,\infty}}}
$$
where $\beta = \alpha\norm{\bw}_1^2 \cdot \parr*{1 + 2r^2 \norm{\bU}_{p,\infty}^2}^2$
and $\alpha$ is a numerical constant.
\end{corollary}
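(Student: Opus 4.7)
The strategy is to apply Theorem~\ref{theo:poly(d)_details} directly, after casting $f_{\bw,\bU}$ in the two-hidden-layer form $\bgamma^T \bg(\bW^T \bh(\bU \bx))$ required by that theorem. Since the theorem is stated with real-valued $\bg$, I split the complex exponential $e^{it}=\cos t+i\sin t$ into its real and imaginary parts. Concretely, take $\bh(\bt)=\bt_+$ (component-wise ReLU, hence $1$-Lipschitz), $\bW\in\R^{p\times 2}$ whose two columns are both equal to $\bw$, $\bg(s_1,s_2)=(\cos s_1,\sin s_2)$ (component-wise $(1,1)$-H\"older, so the theorem is applied with H\"older exponent $\alpha=1$), and $\bgamma=(1,i)\in\C^2$. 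Then $\bgamma^T \bg\bigl(\bW^T \bh(\bU\bx)\bigr) = \cos\bigl(\bw^T(\bU\bx)_+\bigr)+i\sin\bigl(\bw^T(\bU\bx)_+\bigr) = f_{\bw,\bU}(\bx)$, with $\|\bgamma\|_1=2$; this representation is exact and introduces no approximation error.

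The next step is to evaluate the domain-dependent quantities $C$, $M$, $H$ and the matrix-norm ingredients of Theorem~\ref{theo:poly(d)_details} on $K=B_{r,p}^d$. By the definition of the $(p,\infty)$-operator norm, $C=\sup_{\bx\in K}\|\bU\bx\|_\infty\leq r\|\bU\|_{p,\infty}$, and since $\bh(\bx)=\bx_+$ is nonnegative and bounded entrywise by $|\bx|$, we get $H=\sup_{\bx\in[-C,C]^d}\|\bx_+\|_\infty=C\leq r\|\bU\|_{p,\infty}$. H\"older's inequality $|\bw^T\by_+|\leq\|\bw\|_1\|\by\|_\infty$ yields $M\leq r\|\bw\|_1\|\bU\|_{p,\infty}$. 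The matrix-norm quantities reduce to $\|\bW\|_\infty=\|\bw\|_1$ (the maximum $\ell^1$-norm of a column of $\bW$, which is the object that actually enters the proof of Theorem~\ref{theo:poly(d)_details}) and $\|\bW\|_{F,\infty}=\|\bw\|_\infty$.

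Plugging $\alpha=1$, $\|\bgamma\|_1=2$ and the above estimates into the explicit bound of Theorem~\ref{theo:poly(d)_details} is then a direct substitution. The degree of the inner trigonometric approximation collapses to $n\lesssim \|\bw\|_1^2(1+2r^2\|\bU\|_{p,\infty}^2)^2/\epsilon^2$, i.e.\ $n=\beta/\epsilon^2$ after absorbing the numerical constant into $\alpha$; this reproduces the factor $4np+1=4p\beta/\epsilon^2+1$ inside the bracketed term. The degree $m$ of the outer polynomial approximation becomes linear in $M$ and, after absorbing constants, yields the exponent $(\alpha/\epsilon^2)(\epsilon+2r\|\bw\|_1\|\bU\|_{p,\infty})$ appearing in the corollary. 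Collecting the powers of $\epsilon$ coming from $mn$ with the $1/\epsilon$ prefactor of the theorem produces the leading $1/\epsilon^6$ factor, and bounding the residual term $1+(\epsilon/(2\|\bgamma\|_1)+M)$ by $2+\epsilon+2r\|\bw\|_1\|\bU\|_{p,\infty}$ (squaring to absorb additional polynomial contributions from $mn$) produces the remaining $(2+\epsilon+2r\|\bw\|_1\|\bU\|_{p,\infty})^2$ factor.

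The main obstacle is simply the bookkeeping: numerous numerical constants must be consolidated into the single constant $\alpha$ of the corollary, and one must keep in mind that $\alpha$ plays a dual role as the H\"older exponent (fixed equal to $1$ here) and as the generic numerical constant in the final statement. The one conceptual subtlety is that $\bg$ is required to be real-valued in Theorem~\ref{theo:poly(d)_details}, which is handled cleanly by the $\cos/\sin$ split described in the first paragraph; everything else is elementary operator-norm and H\"older-inequality reasoning.
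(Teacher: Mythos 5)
Your proposal is correct and is exactly the paper's (implicit) proof: the paper derives this corollary by direct substitution into Theorem~\ref{theo:poly(d)_details}, and your instantiation ($\cos/\sin$ split with $\bgamma=(1,i)$, $\bh=(\cdot)_+$, duplicated columns $\bw$, $\alpha=1$) together with the operator-norm estimates $C,H\leq r\norm{\bU}_{p,\infty}$ and $M\leq r\norm{\bw}_1\norm{\bU}_{p,\infty}$ is the intended reduction. The only loose end is the final bookkeeping of the exponent, where the theorem's $m\simeq\epsilon^{-3}(\epsilon+4M)$ does not literally match the corollary's $\epsilon^{-2}(\epsilon+2M)$ in its power of $\epsilon$ — a discrepancy that appears to live in the paper's own statement rather than in your argument.
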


\subsubsection{Approximation bounds under the Gaussian metric}\label{app:proof_agussian_ub}

For sake of simplicity in this section we consider approximation bounds for the function of interest
$$
f_{\bw,\bU} :\bx\in\R^d \mapsto e^{i\bw^T(\bU\bx)_+}
$$
for some $\bw \in\R^p$, $\bU = [\bu_1|\cdots|\bu_p]^T\in\R^{p\times d}$. Notice that the following results can be naturally extended to any three-layer network target. We are interested in upper bounding the error
$$
\inf_{f_N\in\mathcal{F}_N^f} \parr*{ \E\abs*{f_{\bw,\bU}(\bX) - f_N(\bX)}^2 }^{\frac{1}{2}}
$$
where the expectation is taken over $\bX \sim N(\bzero,\sigma^2\bI)$. For sake of simplicity of notation, we denote
$$
\norm{f - g}_{\sigma,2} \doteq \parr*{ \E\abs*{f(\bX)-g(\bX)}^2 }^{\frac{1}{2}}~.
$$
It is a well known fact that Gaussian vectors concentrates in a ball of radius $\sqrt{d}$. We recall a quantitative version of this fact in the following.
\begin{lemma}\label{lemma:delta_error}
Let $\bX \sim  N(\bzero,\sigma^2\bI)$ a $d$-dimensional Gaussian vector. Then it holds that
$$
P\bra*{\norm{\bX}_2 \geq \sigma \sqrt{d} + t} \leq e^{-\frac{t^2}{2\sigma^2}}~.
$$
\end{lemma}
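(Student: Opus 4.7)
The plan is to deduce the bound from the standard Gaussian concentration inequality for Lipschitz functions, combined with a simple Jensen bound on the expected norm. First I would observe that the Euclidean norm $f(\bx) = \norm{\bx}_2$ is $1$-Lipschitz by the reverse triangle inequality, so writing $\bX = \sigma \bZ$ with $\bZ \sim N(\bzero,\bI)$, the map $\bz \mapsto \sigma\norm{\bz}_2$ is $\sigma$-Lipschitz as a function of a standard Gaussian vector.

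Next, I would invoke the Borell--Tsirelson--Sudakov--Ibragimov concentration inequality: for any $L$-Lipschitz function $h:\R^d \to \R$ and any $\bZ \sim N(\bzero,\bI)$,
\begin{equation}
\Prb{h(\bZ) \geq \E[h(\bZ)] + s} \leq \exp\parr*{-\frac{s^2}{2L^2}}~.
\end{equation}
Applied with $h(\bz) = \sigma\norm{\bz}_2$, $L = \sigma$ and $s = t$, this yields
\begin{equation}
\Prb{\norm{\bX}_2 \geq \E\norm{\bX}_2 + t} \leq \exp\parr*{-\frac{t^2}{2\sigma^2}}~.
\end{equation}

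Finally, I would bound the expected norm by Jensen's inequality applied to the concave square root:
\begin{equation}
\E\norm{\bX}_2 \leq \sqrt{\E\norm{\bX}_2^2} = \sqrt{\sigma^2 d} = \sigma\sqrt{d}~.
\end{equation}
Combining the last two displays via the monotonicity $\bra{\norm{\bX}_2 \geq \sigma\sqrt{d}+t} \subseteq \bra{\norm{\bX}_2 \geq \E\norm{\bX}_2 + t}$ gives the claimed bound.

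There is no real obstacle here: the statement is a textbook consequence of Gaussian isoperimetry, and the only care needed is tracking the scaling factor $\sigma$ correctly (so that the Lipschitz constant of $\sigma\norm{\cdot}_2$ with respect to a standard Gaussian is $\sigma$, producing the $2\sigma^2$ in the exponent) and noting that the inequality $\E\norm{\bX}_2 \leq \sigma\sqrt{d}$ goes in the correct direction to preserve the upper tail bound.
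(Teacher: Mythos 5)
Your proof is correct, and it is the standard argument: the paper itself states this lemma without proof, merely recalling it as a well-known quantitative form of Gaussian norm concentration, and the Borell--TIS inequality for the $\sigma$-Lipschitz map $\bz\mapsto\sigma\norm{\bz}_2$ combined with the Jensen bound $\E\norm{\bX}_2\leq\sigma\sqrt{d}$ is exactly the intended derivation. The scaling of the Lipschitz constant and the direction of the mean bound are both handled correctly.
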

Thanks to Proposition \ref{prop:app_fourier_nn}, the following holds.
\begin{lemma}\label{lemma:gaussian_ub_fourier_nets}
Let $r >0$. Then it holds that
\begin{equation}\label{eq:delta_error}
\inf_{f_N\in\mathcal{F}_N^f}\norm*{f_N - f_{\bw,\bU}}_{B_{r,2}^d,\infty} \leq \delta
\end{equation}
as long as 
$$
N \geq (2np +1)^m
$$
where 
$$
n = \frac{36}{\delta^2}\norm{\bw}_1^2\parr*{1+r^2\norm{\bU}_{2,\infty}^2}^2 \quad \text{and} \quad m \geq \frac{16}{\delta^3}\parr*{\delta + 2r \norm{\bw}_1\norm{\bU}_{2,\infty}}~.
$$
Moreover, under the same assumption, we can also assume that the function $f_N$ that satisfies \eqref{eq:delta_error} also satisfies
$$
\norm{f_N}_\infty \leq N\parr*{2 + \delta + 2r\norm{\bw}_1\norm{\bU}_{2,\infty}}\parr*{4npr\norm{\bw}_\infty \norm{\bU}_{2,\infty}}^m~.
$$
\end{lemma}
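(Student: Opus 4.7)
The plan is to apply Proposition \ref{prop:app_fourier_nn} directly, taking $K = B_{r,2}^d$ and fitting $f_{\bw,\bU}$ into the framework of equation \eqref{eq:3_layer_net_for_poly(d)} with inner activation $\bh = (\cdot)_+$ (component-wise ReLU, which is $1$-Lipschitz), intermediate matrix $\bW = \bw$ (so $o = 1$), outer activation $\bg(t) = e^{it}$ (which is $1$-Lipschitz and hence $(1,\alpha)$-Holder with $\alpha = 1$), and $\bgamma = 1$.

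Most of the work will then be in evaluating the three constants $C$, $M$, $H$ appearing in the proposition. By Cauchy--Schwarz and the definition of the $(2,\infty)$-operator norm, I would bound $C = \sup_{\bx \in K}\norm{\bU\bx}_\infty \leq r \norm{\bU}_{2,\infty}$. Because the ReLU is a contraction vanishing at the origin, $\norm{(\bU\bx)_+}_\infty \leq \norm{\bU\bx}_\infty$, which via H\"older's inequality gives $M \leq \norm{\bw}_1 \norm{(\bU\bx)_+}_\infty \leq r \norm{\bw}_1 \norm{\bU}_{2,\infty}$, and similarly $H \leq C = r\norm{\bU}_{2,\infty}$. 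Also $\norm{\bW}_{F,\infty} = \norm{\bw}_\infty$ and $\norm{\bgamma}_1 = 1$. Plugging $\alpha = 1$ and these constants into the proposition's bounds for $n$ and $m$ yields the stated quantities, absorbing universal constants as needed. The conclusion on $\norm{f_N}_\infty$ then follows from the simple estimate $\norm{f_N}_\infty \leq N \cdot \max_\nu |b_\nu|$ together with the coefficient bound \eqref{eq:fourier_app_coeffs} applied with the same constants.

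The only mildly subtle point is that Proposition \ref{prop:app_fourier_nn} is stated for real-valued outer activations $\bg$, whereas $e^{it}$ takes values in $\C$. I would handle this by writing $e^{it} = \cos t + i \sin t$, applying the proposition twice (once to $\cos$ and once to $\sin$, both $1$-Lipschitz) to produce two real-valued shallow Fourier approximants, and combining them into a single complex-valued shallow Fourier network at a multiplicative cost of at most a factor of $2$ in $N$, which is absorbed into the stated constants. Beyond this bookkeeping, I do not anticipate any real obstacle: the lemma is essentially a direct specialization of Proposition \ref{prop:app_fourier_nn} to the piece-wise oscillatory target of interest.
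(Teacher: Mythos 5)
Your proposal is correct and follows exactly the paper's route: the paper gives no proof of this lemma beyond the remark that it is a direct consequence of Proposition \ref{prop:app_fourier_nn}, and your instantiation ($\bh$ the ReLU, $\bW=\bw$, $\bg(t)=e^{it}$ with $\alpha=1$, $C\leq r\norm{\bU}_{2,\infty}$, $M\leq r\norm{\bw}_1\norm{\bU}_{2,\infty}$, $H\leq r\norm{\bU}_{2,\infty}$) reproduces the stated $n$, $m$, and coefficient bounds up to the paper's own looseness with absolute constants. Your handling of the complex-valued outer activation via $\cos$ and $\sin$ is a legitimate fix for a detail the paper glosses over.
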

Thanks to these two lemmas, the following proposition follows.

\begin{proposition}\label{prop:ub_gaussian}
Let $\sigma= d^{-1/2}$ and assume that $\norm{\bU}_{2,\infty} \leq 1$. Then it holds
\begin{equation}\label{eq:gauss_norm_eps}
\inf_{f_N \in\mathcal{F}_N^f}\norm{f_N-f_{\bw,\bU}}_{\sigma,2} \leq \epsilon
\end{equation}
as long as 
$$
N \geq\parq*{K p \parr*{1 +\frac{1}{\epsilon^s}}\parr*{1 + \norm{\bw}_1^s}}^{K \parr*{1 + \parr*{\frac{\log p}{d}}^s} \parr*{1 + \frac{1}{\epsilon^s}}\parr*{1 + \norm{\bw}_1^s}}
$$
where $K >0$ and $s \geq 1$ are some numerical constant.
\end{proposition}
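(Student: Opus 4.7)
The plan is to decompose the Gaussian $L^2$ norm into a ball-truncated part plus a tail part, invoke the uniform approximation on the ball from Lemma~\ref{lemma:gaussian_ub_fourier_nets}, and kill the tail by Gaussian concentration (Lemma~\ref{lemma:delta_error}). Concretely, for any $r \geq 1$ and any candidate $f_N \in \mathcal{F}_N^f$, I start from
$$
\norm{f_N - f_{\bw,\bU}}_{\sigma,2}^2 \,\leq\, \norm{f_N - f_{\bw,\bU}}_{B_{r,2}^d,\infty}^2 \,+\, (\norm{f_N}_\infty + 1)^2 \cdot P\bra*{\norm{\bX}_2 > r}~,
$$
which uses $\abs{f_{\bw,\bU}} \equiv 1$. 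I will set $r = 1 + t$ with $t>0$ to be chosen, apply Lemma~\ref{lemma:gaussian_ub_fourier_nets} with $\delta = \epsilon/2$ to ensure the first term is $\leq \epsilon^2/4$, and apply Lemma~\ref{lemma:delta_error} with $\sigma = d^{-1/2}$ to get $P\bra*{\norm{\bX}_2 > 1+t} \leq e^{-dt^2/2}$.

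The second step is to choose $t$ so that $(B+1)^2 e^{-dt^2/2} \leq \epsilon^2/4$ where $B = \norm{f_N}_\infty$. From Lemma~\ref{lemma:gaussian_ub_fourier_nets}, with $\delta = \epsilon/2$ and $r = 1+t$,
$$
m \lesssim \frac{1 + r\norm{\bw}_1}{\epsilon^3}\,,\quad
n \lesssim \frac{\norm{\bw}_1^2(1 + r^2)^2}{\epsilon^2}\,,\quad
\log B \lesssim m\,\log\!\parr*{\frac{n p r\norm{\bw}_1}{\epsilon}}~,
$$
so the balance condition reduces to the implicit inequality
$$
d\, t^2 \,\gtrsim\, \frac{1 + r\norm{\bw}_1}{\epsilon^3}\,\log\!\parr*{\frac{p\,r\,\norm{\bw}_1}{\epsilon}} + \log\!\parr*{\frac{1}{\epsilon}}~.
$$

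The hard part will be solving this transcendental inequality for $t$ and tracking how the choice propagates through to the claimed bound on $N$. I will split into two regimes: \emph{(i)} when $d$ is large relative to the other parameters, $t \leq 1$ suffices and I may take $t \asymp \sqrt{\norm{\bw}_1 \log(p\norm{\bw}_1/\epsilon)/(d\epsilon^3)}$, so $r$ stays bounded; \emph{(ii)} when $d$ is small, necessarily $t > 1$, and the implicit inequality forces $t$ of order $\norm{\bw}_1 \log(p/\epsilon)/(d\epsilon^3)$ (up to lower-order log factors). In either regime a direct check gives
$$
1 + r\norm{\bw}_1 \,=\, O\!\parr*{\parr*{1 + \norm{\bw}_1^s}\parr*{1 + (\log p/d)^s}\parr*{1 + 1/\epsilon^s}}
$$
for some absolute $s \geq 1$, and the same bound transfers to $m$.

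The remainder is routine bookkeeping: using $\log N \leq m \log(2np+1)$, substituting the bounds on $m$ and $n$, and collecting the resulting powers of $\norm{\bw}_1$, $1/\epsilon$ and $\log p / d$ into the three factors on the right-hand side of the claim, enlarging the universal constant $K$ and the exponent $s$ (any $s \geq 6$ suffices) as needed to absorb the $\log(pr\norm{\bw}_1/\epsilon)$ factor and all numerical constants into the stated form. The only genuine difficulty is the implicit coupling between $r$ and $B$ in the balance step above; once that is resolved the final bound
$$N \,\leq\, \parq*{K p \parr*{1 + 1/\epsilon^s}\parr*{1 + \norm{\bw}_1^s}}^{K(1 + (\log p/d)^s)(1 + 1/\epsilon^s)(1 + \norm{\bw}_1^s)}$$
drops out by substitution.
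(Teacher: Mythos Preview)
Your proposal is correct and follows essentially the same approach as the paper: decompose into a ball part plus a Gaussian tail, invoke Lemma~\ref{lemma:gaussian_ub_fourier_nets} with $\delta=\epsilon/2$ on the ball, apply Lemma~\ref{lemma:delta_error} for the tail, and then solve the implicit balance condition for $t$. The only cosmetic difference is in how the transcendental inequality for $t$ is handled: you split into the regimes $t\le 1$ and $t>1$, whereas the paper rewrites the condition as $\tfrac{d t^2}{2}-\alpha-\beta t^{1/2}-\gamma t-\eta t^{3/2}\ge 0$ for explicit constants $\alpha,\beta,\gamma,\eta$ depending on $\epsilon,\norm{\bw}_1,p$ and concludes that $t\ge 1+4((\alpha+\beta+\gamma+\eta)/d)^2$ suffices; both resolutions lead to the same final form for $N$.
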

\begin{proof}[Proof]
Let $c = \norm{\bw}_1$. 
First, notice that $\norm{f_{\bw,\bU}}_\infty = 1$. Let $\chi_r(\bx) = \mathbbm{1}\bra*{\norm{\bx}_2 \leq r}$ and $f_N$ given by Lemma \ref{lemma:gaussian_ub_fourier_nets} for a certain $\delta>0$. Then it holds that
\begin{align*}
\norm{f_N - f_{\bw,\bU}}_{\sigma,2} & \leq \norm{(f_N-f_{\bw,\bU})(1 - \chi_r)}_{\sigma,2} + \norm{(f_N-f_{\bw,\bU})\chi_r}_{\sigma,2} \\
& \leq \norm*{f_N - f_{\bw,\bU}}_{B_{r,2}^d,\infty} + P\parr*{\norm{\bx}_2 > r}(\norm{f_N}_\infty + \norm{f_{\bw,\bU}}_\infty)~.
\end{align*}
If $r = 1 + t$ for $t>0$, it follows
\begin{align*}
\norm{f_N - f_{\bw,\bU}}_{2,\sigma} & \leq \delta + e^{-\frac{dt^2}{2}}(1 + \norm{f_N}_\infty)    
\end{align*}
as long as 
$$
N \geq \parr*{\frac{72 p }{\delta^2}c^2\parr*{1 + r^2}^2 + 1}^{\frac{1}{\delta^3}\parr*{\delta + 2 rc}}~.
$$
Moreover, one can assume
\begin{align*}
\norm{f_N}_\infty & \leq (2+\delta + 2 r c) \parr*{\frac{72 p }{\delta^2}c^2\parr*{1 + r^2}^2 + 1}^{\frac{16}{\delta^3}\parr*{\delta + 2 r c}}
\parr*{ 
144\frac{pr}{\delta^2}c^3\parr*{1 +  r^2}^2
}^{\frac{16}{\delta^3}\parr*{\delta + 2r}} \\
& \leq (2 + \delta + 2r\omega)\parr*{144 \frac{p}{\delta^2}\omega^3r(1+r^2)^2 +1}^{\frac{32}{\delta^3}\parr*{\delta + 2r\omega}}
\end{align*}
where $\omega = \max\parr*{1,c}$.
Let $ \delta = \frac{\epsilon}{2}$. If $t\geq 1$, it holds that
\begin{align*}
\norm{f_N}_\infty & \leq (4\omega + \epsilon + 2\omega t)\parr*{
576\frac{p}{\epsilon^2}\omega^3(1+t)\parr*{1+(1+t)^2}^2 + 1
}^{\frac{256}{\epsilon^3}(\epsilon + 2\omega + 2\omega t)} \\
& \leq K(\epsilon + \omega + \omega t)\parr*{
K\frac{p}{\epsilon^2}\omega^2 t^5 + 1
}^{\frac{K}{\epsilon^3}(\epsilon + \omega + \omega t)} ~.
\end{align*}
In the equation above above and in the following, $K$ denotes a (large enough) numerical constant. 
Therefore
\begin{equation}\label{eq:guass_decay_part}
e^{-\frac{dt^2}{2}}\parr*{ 1 + \norm{f_N}_\infty} \leq \frac{\epsilon}{2}
\end{equation}
as long as 
$$
\frac{dt^2}{2} - \log\parr*{1 + K(\epsilon + \omega + \omega t)\parr*{
K\frac{p}{\epsilon^2}\omega^2 t^5 + 1
}^{\frac{K}{\epsilon^3}(\epsilon + \omega + \omega t)}} + \log \frac{\epsilon}{2} \geq 0~.
$$
Since $\log(1+Cs^\alpha) \leq \log(1 + C) + \alpha\log(s)$ if $s\geq 1$, $C>0$ and $\alpha >0$, the above is implied by 
$$
\frac{dt^2}{2} - \log\parr*{1 + K(\epsilon + \omega + \omega t)} -{\frac{K}{\epsilon^3}(\epsilon + \omega + \omega t)}\log\parr*{
K\frac{p}{\epsilon^2}\omega^2 t^5 + 1
} + \log \frac{\epsilon}{2} \geq 0~.
$$
Since 
$$
\log\parr*{1 + K(\epsilon + \omega + \omega t)} \leq K(\epsilon + \omega + \omega t)
$$
and 
$$
\log\parr*{
K\frac{p}{\epsilon^2}\omega^2 t^5 + 1
} \leq \log\parr*{1 + K \frac{p\omega^2}{\epsilon^2}} + 5\log t \leq \log\parr*{1 + K \frac{p\omega^2}{\epsilon^2}} + 5\sqrt{t}
$$
equation \eqref{eq:guass_decay_part} holds if
$$
\frac{dt^2}{2} - \alpha - \beta t^{1/2} - \gamma t - \eta t^{3/2} \geq 0 
$$
where 
\begin{align*}
\alpha & = K ( \epsilon + \omega ) + \frac{K}{\epsilon^3}(\epsilon + \omega)\log\parr*{1 + K \frac{p\omega^2}{\epsilon^2}} - \log\frac{\epsilon}{2} > 0 ~,\\
\beta & = \frac{K}{\epsilon^3}(\epsilon + \omega) > 0~,\\
\gamma & =  K \omega t + \frac{K}{\epsilon^3}\omega \log\parr*{1 + K \frac{p\omega^2}{\epsilon^2}}  > 0 ~,\\
\eta & = \frac{K}{\epsilon^3}\omega t > 0~.
\end{align*}
It follows that eq. \eqref{eq:guass_decay_part} holds if 
$$
t \geq 1 + 4\parr*{\frac{\alpha + \beta +\gamma + \eta}{d}}^2~.
$$
It follows that the error bound \eqref{eq:gauss_norm_eps} holds as long as
\begin{align*}
N & \geq  \parr*{\frac{K p }{\epsilon^2}(1+c)^2\parr*{1 + 4\parr*{\frac{\alpha + \beta +\gamma + \eta}{d}}^2}^4 + 1}^{\frac{K}{\epsilon^3}\parr*{\epsilon + c\parr*{1 + \parr*{\frac{\alpha + \beta +\gamma + \eta}{d}}^2}}} ~.
\end{align*}
The thesis follows.
\end{proof}

\subsection{Extension to generic \texorpdfstring{$L$}{L}-layers networks}\label{app:multi-layer-ub-polyd}

The results presented in the previous section can be generalized to hold for approximating generic multi-layer neural networks. In this section we present an analogous result to Theorem \ref{theo:approx_shallow} for this more general case. 
Consider a multi-layer neural network $f$ defined as
$$
f : \bx \in \R^d \to x^{(L)}(\bx) \in \C
$$
where $x^{(L)}$ is defined by recursion by $\bx^{(0)}(\bx) = \bx$,
\begin{align*}
\bx^{(k)}(\bx) = \bsigma^{(k)}(\bA^{(k)} \bx^{(k-1)}(\bx) ) ~ \text{for } k \in [L] \quad\text{and }\quad
x^{(L+1)}(\bx)  = \parq*{ \ba^{(L+1)} }^T \bx^{(L)}(\bx)~,
\end{align*}
where $\bA^{(k)} = \parq{\ba_{1}^{(k)} |\cdots | \ba_{d_k}^{(k)}}^T \in \R^{d_{k}\times d_{k-1}}$ for $k \in [L]$ (with $d_0 = d$), 
$\ba^{(L+1)} \in \mathbb{C}^{d_L}$ and $\bsigma^{(k)} : \R^{d_{k}} \to \R^{d_{k}}$ are $\frac{1}{6}$-Lipschitz component-wise activation functions  and verify $\bsigma^{k}(\bzero) = \bzero$ for $k \in [L]$. In the following we also assume that  $\norm{\bA^{(k)}}_\infty \leq 1$ for $k \in [L]$ and $\norm{\ba_{L+1}}_1 \leq 1$. Note that these assumption can easily be relaxed, but we adopt them here for sake of simplicity. 

\begin{proposition}\label{prop:multi-layer-ub-polyd}
Let $f$ as above. It holds that
\begin{equation}\label{eq:error_multi_layer}
\inf_{f_N \in \mathcal{F}_N^f}\norm*{f - f_N}_{B_{1,\infty}^d,\infty} \leq \epsilon    
\end{equation}
as long as 
$$
N \geq \parr*{2^L C \parr*{1 + \frac{1}{\epsilon^2}} d_1 }^{C L \parr*{ 1 + \frac{1}{\epsilon}}^{L-1}}
$$
where $C$ is a numerical constant.
\end{proposition}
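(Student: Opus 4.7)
I would argue by induction on the depth $L$, the inductive hypothesis being that each component $x^{(k)}_j$ of the $k$-th hidden layer can be approximated uniformly on $B_{1,\infty}^d$ by a shallow Fourier network $\tilde{x}^{(k)}_j$ of size at most $N_k$ and with coefficients controlled (so that one can sum $d_k$ such networks without blowing up). Under the assumptions $\|\bA^{(k)}\|_\infty\le 1$, $\sigma^{(k)}(0)=0$, and $\mathrm{Lip}(\sigma^{(k)})\le 1/6$, one gets $\|\bx^{(k)}(\bx)\|_\infty\le 6^{-k}$ on $B_{1,\infty}^d$, which pins down a constant-sized interval on which the next activation must be approximated.

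\textbf{Base case $k=1$.} Each scalar function $\bx\mapsto \sigma^{(1)}(\ba^{(1)T}_j\bx)$ is a $(1/6)$-Lipschitz univariate function of $\ba^{(1)T}_j\bx\in[-1,1]$. Applying Lemma~\ref{lemma:approx_fourier_1d} with target error $\delta_1$ yields a real trigonometric polynomial $q_{n_1}$ of degree $n_1 = O(\delta_1^{-1}\log\delta_1^{-1})$, hence a shallow Fourier network $\tilde{x}^{(1)}_j(\bx)=q_{n_1}(\ba^{(1)T}_j\bx)$ of size $2n_1+1$ with coefficients bounded by $\|\sigma^{(1)}\|_\infty = O(1)$.

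\textbf{Inductive step $k\to k+1$.} Suppose each $x^{(k)}_j$ is approximated by a Fourier network $\tilde{x}^{(k)}_j$ of size $N_k$ within $\delta_k$ in sup-norm. Since linear combinations of Fourier networks remain Fourier networks, $(\bA^{(k+1)}\tilde{\bx}^{(k)})_j$ is a Fourier network of size $\le d_k N_k$, bounded by $\|\bA^{(k+1)}\|_\infty(6^{-k}+\delta_k)=O(1)$. Apply Lemma~\ref{lemma:safran} with $\alpha=1$ to get a polynomial $p_{m_{k+1}}$ of degree $m_{k+1}=O(\delta^{-1})$ approximating $\sigma^{(k+1)}$ to accuracy $\delta$ on this bounded interval, and substitute: by Lemma~\ref{lemma:fnets_prod}, $p_{m_{k+1}}\bigl((\bA^{(k+1)}\tilde{\bx}^{(k)})_j\bigr)$ is again a Fourier network of size
\[
N_{k+1} \le \binom{d_k N_k + m_{k+1}}{m_{k+1}} \le (d_k N_k + 1)^{m_{k+1}}.
\]
The error at level $k{+}1$ satisfies $\delta_{k+1}\le \tfrac{1}{6}\delta_k + \delta$; choosing $\delta \asymp \epsilon/L$ and iterating gives a geometric sum with $\delta_L = O(\epsilon)$. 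Finally, $f_N = \sum_j a^{(L+1)}_j\tilde{x}^{(L)}_j$ yields a Fourier network of size at most $d_L N_L$ approximating $f$ to within $\epsilon$.

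\textbf{Size bookkeeping.} Unrolling $N_{k+1}\le (d_k N_k + 1)^{m_{k+1}}$ with $m_{k+1}=O(1+1/\epsilon)$ and $N_1 = O(1/\epsilon^2)$ (absorbing the log factor) gives
\[
N_L \le \bigl(d_1 N_1\bigr)^{\prod_{k=2}^L m_k}\cdot (\text{lower-order factors}) \le \bigl(Cd_1(1+\epsilon^{-2})\bigr)^{C(1+\epsilon^{-1})^{L-1}},
\]
with an extra factor $2^L$ absorbed by summing the $d_k\le d_1\cdot 2^{k-1}$-type overhead at each composition. This matches the stated bound after adjusting the leading constants.

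\textbf{Expected difficulty.} The main obstacle is \emph{not} the error propagation (which is benign thanks to the $1/6$-Lipschitz hypothesis making the cascade contractive), but rather the coefficient and size bookkeeping through the tower of compositions: one must carry through the induction not just the $\ell^\infty$ bound on $\tilde{x}^{(k)}_j$ but also explicit bounds on the Fourier coefficients so that the polynomial substitution via Lemma~\ref{lemma:fnets_prod} does not inflate the effective norm used in the next application of Lemma~\ref{lemma:safran}. Getting the exponent to collapse to $(1+1/\epsilon)^{L-1}$ rather than something larger requires using the Lipschitz (rather than generic H\"older) regime at every interior level.
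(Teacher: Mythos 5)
Your overall route is the same as the paper's: replace the first layer by a trigonometric (shallow Fourier) approximation, replace each subsequent activation by a polynomial on the bounded interval guaranteed by the $\tfrac16$-Lipschitz/$\|\bA^{(k)}\|_\infty\le 1$ hypotheses, and collapse the resulting polynomial-of-Fourier-network into a single shallow Fourier network via the multinomial expansion (Lemma~\ref{lemma:fnets_prod}), with the size recursion $N_{k+1}\le (d_kN_k+1)^{m_{k+1}}$. The paper organizes this slightly differently — it first builds the whole polynomial surrogate for layers $2,\dots,L$ (Lemma~\ref{lemma:poly_upper_bound_multi_layer}) and only then composes with the layer-one Fourier approximation — but that is cosmetic.

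There is, however, one concrete defect in your inductive step: you invoke Lemma~\ref{lemma:safran} with $\alpha=1$ and assert a polynomial degree $m_{k+1}=O(\delta^{-1})$. That lemma gives degree $n=\ceil*{4^{1/\alpha}r^\alpha\,\epsilon^{-(1+2/\alpha)}}$, i.e.\ $\Theta(\delta^{-3})$ in the Lipschitz case $\alpha=1$ — the price paid for controlling the coefficients via the Bernstein construction. Feeding $m_{k+1}=\Theta(\delta^{-3})$ into your own recursion yields an exponent of order $(1+\epsilon^{-3})^{L-1}$, which does not match the claimed $(1+\epsilon^{-1})^{L-1}$. The correct tool here is Jackson's theorem (Lemma~\ref{lemma:jackson}), which for a $\tfrac16$-Lipschitz activation on an $O(1)$ interval gives error $1/N$ with degree $N$, hence $m_{k+1}=O(L/\epsilon)$; this is exactly what the paper does, and coefficient control is not needed since the proposition makes no claim about $m_\infty(f_N)$. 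Your closing remark correctly identifies that the linear-in-$\delta^{-1}$ rate is what makes the exponent collapse, but misattributes where it comes from: the Lipschitz-versus-H\"older distinction inside Lemma~\ref{lemma:safran} does not rescue the rate — switching lemmas does. With that substitution the rest of your error propagation and bookkeeping goes through.
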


Before proving the above proposition, we prove two preliminary lemmas.

\begin{lemma}\label{lemma:Fourier_nets_prod_multi}
Let $\mathcal{W} = \bra*{\bw_\ell}_{\ell \in [K]} \subset \R^d$ and $\bh:\R^d \to \R^p$ such that $h_j$ is a shallow Fourier neural networks with first layer weights given by $\mathcal{W}$, for all $j \in [p]$. Consider $\bq: \R^p \to \R^m$ of the form
$$
\bq(\bx) = \bB \bsigma(\bx)
$$
where $\bsigma:\R^p \to \R^p$ is a component-wise polynomial activation function of degree at most $D$ and $\bB \in \C^{m\times p}$.
Then there exists $\mathcal{V} \subset \R^d $ finite such that $\bfun \doteq \bq \circ \bh$ is such that $f_j$ is a Fourier neural nets with first layer weights given by $\mathcal{V}$ for each $j \in [p]$ and such that 
$$
\abs*{\mathcal{V}} \leq (2K)^D~.
$$
\end{lemma}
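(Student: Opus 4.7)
The plan is to expand the composition $\bq \circ \bh$ explicitly using the multinomial theorem and to track which frequencies can appear in the result. Since each coordinate $h_j$ is a shallow Fourier network with first-layer weights drawn from $\mathcal{W}$, I would write $h_j(\bx) = \sum_{\ell=1}^K c_{j,\ell}\, e^{i \bw_\ell^T \bx}$ for some coefficients $c_{j,\ell} \in \C$. Each activation $\sigma_{j'}$ is a polynomial of degree at most $D$, so $\sigma_{j'}(h_{j'}(\bx)) = \sum_{k=0}^{D} a_{j',k}\, h_{j'}(\bx)^k$ for scalars $a_{j',k}$.

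Next, I would apply the multinomial theorem to each power: $h_{j'}(\bx)^k$ expands as a finite sum of terms of the form $C_p \exp\!\bigl(i \bigl(\sum_{\ell=1}^K p_\ell \bw_\ell\bigr)^T \bx\bigr)$, indexed by tuples $p = (p_1, \dots, p_K) \in \N^K$ with $\sum_\ell p_\ell = k$. Therefore the frequencies appearing in $\sigma_{j'}(h_{j'})$ all lie in the set $\mathcal{V} \doteq \bigl\{\sum_{\ell=1}^K p_\ell \bw_\ell : p \in \N^K,\ \sum_\ell p_\ell \leq D\bigr\}$. Since $\bfun = \bB\,\bsigma(\bh)$, each component $f_j = \sum_{j'} B_{j,j'}\, \sigma_{j'}(h_{j'})$ is a complex linear combination of such functions, hence itself a shallow Fourier network whose first-layer weights lie in the common set $\mathcal{V}$.

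To close the argument, I would bound $|\mathcal{V}|$ by the number of admissible multi-indices: $|\mathcal{V}| \leq |\{p \in \N^K : \sum_\ell p_\ell \leq D\}| = \binom{K+D}{D} = \prod_{i=1}^D (K+i)/i$. For $K \geq 1$, each factor satisfies $(K+i)/i \leq 2K$, since this is equivalent to $K \leq i(2K-1)$, which holds for all $i \geq 1$. Hence $|\mathcal{V}| \leq (2K)^D$, as claimed.

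I do not anticipate a substantial obstacle: the argument is essentially bookkeeping organized around the multinomial expansion. The only minor subtlety is that the map $p \mapsto \sum_\ell p_\ell \bw_\ell$ need not be injective—collisions among multi-indices can only decrease $|\mathcal{V}|$, which works in our favor and does not affect the stated bound.
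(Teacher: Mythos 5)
Your proof is correct and follows essentially the same route as the paper's: expand each $\sigma_{j'}(h_{j'})$ via the multinomial theorem, observe that all frequencies lie in the set of weighted sums $\sum_\ell p_\ell \bw_\ell$ with $\sum_\ell p_\ell \leq D$, and bound the count by $\binom{K+D}{D}\leq (2K)^D$ (the paper factors the binomial as $(K+1)^D\leq(2K)^D$ rather than bounding each factor $(K+i)/i$ by $2K$, but this is immaterial). No gaps.
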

\begin{proof}[Proof]
The functions $f_j$ have the form
$$
f_j(\bx) = \sum_{k=1}^p b_{jk} \sum_{l = 0}^D \alpha_{k,l}(h_k(\bx))^l = \sum_{k=1}^p b_{jk} \sum_{l = 0}^D \alpha_{k,l}\parr*{ \sum_{\nu = 1}^K \beta_{k,\nu} e^{i \bw_\nu^T \bx} }^l ~.
$$
By Lemma \ref{lemma:fnets_prod}, we see that each $f_j$ is a Fourier neural network with the same set of first layer weights of size at most
\begin{align*}
\sum_{l=0}^D\binom{K + l  - 1}{l}   & = \binom{K + D}{D}   \leq (K+1)^D \leq (2K)^D~.
\end{align*}
This concludes the proof.
\end{proof}

\begin{lemma}\label{lemma:poly_upper_bound_multi_layer}
Consider the same assumption as Proposition \ref{prop:multi-layer-ub-polyd}. Then, there exists a polynomial $$
f_{N_1,\dots,N_L}:\bx\in \R^d \to y^{(L+1)}(\bx) \in \C
$$ 
given by the recursion $\by^{(0)}(\bx) = \bx$, 
\begin{align*}
\by^{(k)}(\bx) &= \bp_{N_k}^k(\bA^{(k)}\by^{(k-1)}(\bx)) \quad \text{for } k \in [L] \\
y^{(L+1)}(\bx) & = \parq*{ \ba^{(L+1)}}^T \by^{(L)}(\bx)
\end{align*}
where $\bp_{N_k}^k$ are component-wise polynomial activation functions of degree $N_k$, such that
\begin{equation}\label{eq:poly_multi_layer_lemma}
\norm{ f - f_{N_1,\dots,N_L} }_{B_{1,\infty}^d,\infty} \leq \epsilon
\end{equation}
as long as $N_k \geq \frac{L}{\epsilon} + (L-1)$ for $k\in [L]$. In particular, $f$ is a polynomial of degree $\prod_{k=1}^L N_k$.
\end{lemma}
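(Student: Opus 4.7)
The plan is to approximate each activation $\bsigma^{(k)}$ by a component-wise polynomial $\bp^k_{N_k}$ of degree $N_k$ using Jackson's theorem (Lemma \ref{lemma:jackson}), and then to propagate the errors through the depth of the network via a layer-by-layer induction.

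First I will establish a priori size bounds on the pre-activations. Since $\|\bA^{(k)}\|_\infty \leq 1$, since $\bsigma^{(k)}$ is $\tfrac{1}{6}$-Lipschitz with $\bsigma^{(k)}(\bzero)=\bzero$, and since $\bx \in B_{1,\infty}^d$, a simple induction yields $\|\bx^{(k)}(\bx)\|_\infty \leq 1$ for every $k \in [L]$ and every $\bx \in B_{1,\infty}^d$. Define
\[
e_k \doteq \sup_{\bx \in B_{1,\infty}^d}\|\by^{(k)}(\bx)-\bx^{(k)}(\bx)\|_{\infty}, \qquad R_k \doteq 1 + e_{k-1}, \qquad e_0 = 0.
\]
Then $\|\bA^{(k)}\by^{(k-1)}\|_\infty \leq \|\bx^{(k-1)}\|_\infty + e_{k-1} \leq R_k$. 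Because each $\sigma_i^{(k)}$ has modulus of continuity $\omega(h) = h/6$, Jackson's theorem produces polynomials $p^k_{N_k,i}$ of degree $N_k$ satisfying $\sup_{|t|\leq R_k}|p^k_{N_k,i}(t) - \sigma_i^{(k)}(t)| \leq R_k/N_k$. Combining the splitting
\[
\by^{(k)} - \bx^{(k)} = \bigl(\bp^k_{N_k}-\bsigma^{(k)}\bigr)(\bA^{(k)}\by^{(k-1)}) + \bsigma^{(k)}(\bA^{(k)}\by^{(k-1)}) - \bsigma^{(k)}(\bA^{(k)}\bx^{(k-1)})
\]
with the $\ell^\infty$-contractivity of $\bA^{(k)}$ and the (weaker) Lipschitz bound for $\bsigma^{(k)}$ with constant $1$ yields the recursion
\[
e_k \;\leq\; \frac{R_k}{N_k} + e_{k-1} \;=\; \frac{1 + e_{k-1}}{N_k} + e_{k-1}.
\]

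Next I will prove by induction on $k$ that $e_k \leq (k/L)\,\epsilon$ whenever $N_k \geq L/\epsilon + (L-1)$. Assuming $e_{k-1} \leq \epsilon(k-1)/L$, the desired inequality $e_k \leq \epsilon k/L$ reduces to $(1+e_{k-1})/N_k \leq \epsilon/L$, i.e.\ $N_k \geq L(1+e_{k-1})/\epsilon = L/\epsilon + (k-1)$, which is implied by the hypothesis. Taking $k=L$ yields $e_L \leq \epsilon$, and then by $\|\ba^{(L+1)}\|_1 \leq 1$,
\[
|f(\bx) - f_{N_1,\ldots,N_L}(\bx)| \;\leq\; \|\ba^{(L+1)}\|_1\,e_L \;\leq\; \epsilon,
\]
which is the claimed estimate \eqref{eq:poly_multi_layer_lemma}. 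The total degree of $f_{N_1,\ldots,N_L}$ as a polynomial in $\bx$ is $\prod_{k=1}^L N_k$, coming directly from the composition of degree-$N_k$ polynomial activations with linear maps.

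The main bookkeeping point is the choice of the linear-in-$k$ error budget $e_k \leq (k/L)\,\epsilon$: with this ansatz the inductive step produces the per-layer requirement $N_k \geq L/\epsilon + (k-1)$, which is uniformly implied by the stated hypothesis $N_k \geq L/\epsilon + (L-1)$. Once this budget is fixed, the construction reduces to standard univariate polynomial approximation applied independently to each layer, and no further analytic machinery is required.
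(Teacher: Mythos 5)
Your proof is correct and follows essentially the same strategy as the paper's: approximate each activation by a Jackson polynomial on an interval inflated by the accumulated error, allocate an $\epsilon/L$ error budget per layer, and observe that the per-layer requirement $N_k \geq L/\epsilon + (k-1)$ is uniformly implied by the stated hypothesis. The only cosmetic difference is that you run a forward induction on the layer index $k$ tracking $e_k$, whereas the paper inducts on the depth $L$ with $\delta = \tfrac{L-1}{L}\epsilon$; the two bookkeeping schemes are equivalent.
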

\begin{proof}[Proof]
We can show this by induction over $L$. First, consider the case $L=1$. By Lemma \ref{lemma:jackson}, for each $j \in [d_1]$, there exist polynomials $p_{N,j}:\R\to\R$ of degree $N$ which verify
$$
\abs*{p_{N,j}((\ba^{(1)}_i)^T \bx) - \sigma^{(1)}_j((\ba^{(1)}_j)^T \bx)} \leq \frac{1}{N}
$$
since $\abs*{(\ba^{(1)}_i)^T \bx} \leq 1$ by assumption. Since $\norm{\ba^{(2)}}_1 \leq 1$, it follows that
$$
\abs*{(\ba^{(2)})^T \bp_N(\bA^{(1)}\bx) - (\ba^{(2)})^T\bsigma^{(1)}(\bA^{(1)}\bx)} \leq \frac{1}{N}~.
$$
This implies the thesis for the case $L = 1$. Now consider the induction step, that is, assume that, for every $\delta > 0$ and $j$, there exists a certain $f_{N_1,\dots,N_{L-1}}^j$ such that
$$
\abs*{x^{(L-1)}_j(\bx) - f^j_{N_1,\dots,N_{L-1}}(\bx)} \leq \delta
$$
as long as $N_k \geq \frac{L-1}{\delta} + (L-2)$ for $k \in [L-1]$. Notice that this implies that
$$
\abs*{(\ba^{(L)}_j)^T\bfun_{N_1,\dots,N_{L-1}}(\bx)} \leq 1+\delta~,
$$
where $\bfun_{N_1,\dots,N_{L-1}} = (f^{1}_{N_1,\dots,N_{L-1}},\dots, f^{d_{L-1}}_{N_1,\dots,N_{L-1}})$.
Therefore for each $j \in [d_L]$, by Lemma \ref{lemma:jackson}, there exist polynomials $p_{N,j}$ of degree $N$ such that 
$$
\abs*{p_{N,j}((\ba_j^{(L)})^T\bfun_{N_1,\dots,N_{L-1}}(\bx)) - \sigma^{(L)}_j((\ba_j^{(L)})^T\bfun_{N_1,\dots,N_{L-1}}(\bx))} \leq \frac{1 + \delta}{N}~.
$$
Let then $f_{N_1,\dots,N_{L-1},N}$ be defined as
$$
f_{N_1,\dots,N_{L-1},N}(\bx) = \sum_{j=1}^N a^{(L+1)}_{j}p_{N,j}((\ba_j^{(L)})^T\bfun_{N_1,\dots,N_{L-1}}(\bx))~.
$$
Since $\norm{\ba^{(L+1)}}_1 \leq 1$, it holds that
\begin{align*}
\abs*{ f_{N_1,\dots,N_{L-1},N}(\bx) - f(\bx) } \leq & \abs*{ f_{N_1,\dots,N_{L-1},N}(\bx) - \ba_{L+1}^T\bsigma^{L+1}\parr*{f_{N_1,\dots,N_{L-1},N}(\bx)} } \\
& + \abs*{ \ba_{L+1}^T\bsigma^{L+1}\parr*{f_{N_1,\dots,N_{L-1},N}(\bx)} - f(\bx) } \\
\leq & \, \frac{1+\delta}{N} + \delta~.
\end{align*}
If $\delta = \frac{L-1}{L}\epsilon$ then equation \eqref{eq:poly_multi_layer_lemma} holds as long as 
$$
N \geq \frac{1 + \frac{L-1}{L}\epsilon}{\frac{\epsilon}{L }} = \frac{L}{\epsilon} + (L-1)
~.
$$
This concludes the proof of the lemma.
\end{proof}

\begin{proof}[Proof of Proposition \ref{prop:multi-layer-ub-polyd}]
It holds that 
$$
f(\bx) = g(\bsigma^{(1)}(\bA^{(1)} \bx))
$$
where $g$ is a $(L-1)$-hidden-layers neural network with input dimension $d_1$. By Lemma \ref{lemma:burkill}, for every $\delta > 0$ and $j\in[d_1]$, there exists Fourier networks $q_{N_1,j}(\bx)$ with $2N_1 -1$ units such that
$$
\abs*{\sigma^{(1)}_j((\ba^{(1)}_j)^T\bx) -q_{N_1,j}((\ba^{(1)}_j)^T\bx)} \leq \frac{C}{\sqrt{N_1}}
$$
where $C>0$ is a numerical constant. Notice that this implies that, for $N_1 \geq 4 C^2$, it holds
$$
\norm*{\bq_{N_1}(\bA^{(1)}\bx)}_\infty \leq 1 ~.
$$
Now, we can approximate $g$ with a polynomial neural network $g_{N_L,\dots,N_2}$ as given by Lemma \ref{lemma:poly_upper_bound_multi_layer}. In particular, for any $\delta > 0$, there exist $g_{N_L,\dots,N_2}$ such that
$$
\sup_{\bx\in [-1,1]^d}\abs*{g_{N_L,\dots,N_2}(\bx) - g(\bx)} \leq \delta
$$
as long as $N_k \geq \frac{L-1}{\delta } + (L-2)$ for $k \in [2,L]$. It follows that
\begin{align*}
\abs*{g_{N_L,\dots,N_2}(\bq_{N_1}(\bA^1\bx)) - f(\bx)} \leq \delta + \frac{C}{\sqrt{N_1}}~.
\end{align*}
Let $f_N(\bx) = g_{N_L,\dots,N_2}(\bq_{N_1}(\bA^{(1)}\bx))$. By choosing $\delta = \epsilon / 2$, it holds that
$$
\sup_{\bx\in [-1,1]^d}\abs*{f_N(\bx) - f(\bx)}\leq \epsilon
$$
as long as $N_k \geq 2\frac{L-1}{\epsilon} + (L-2)$  for $k \in [2,L]$ and $N_1 \geq C^2\parr*{1 + \frac{4}{\epsilon^2}}$. We claim that $f_N$ is a Fourier network with at most
\begin{equation}\label{eq:prop_multi_layer_N_in_N_i}
N = \parr*{2^L N_1 d_1}^{\prod_{k=2}^{L}N_k}
\end{equation}
units. We can prove this by induction over $L \geq 2$. Remember that $g_{N_L,\dots,N_2}$ is is the form
$$
g_{N_L,\dots,N_2}(\bx) =\parq*{  \ba^{(L+1)} }^T\bg^L_{N_L}\parr*{ \bA^{(L)}\bg^{L-1}_{N_{L-1}}\parr*{ \bA_{(L-1)} \cdots  \bg^2_{N_2}(\bA^{(2)} \bx)} }
$$
where $\bg^k_{N_k}$ is a component-wise polynomial of degree at most $N_k$, for $k \in [2,L]$.
We start by the case $L=2$. Notice that each component of $\bA^{(2)}\bq_{N_1}(\bA^{(1)} \bx)$ is a Fourier network with the same set of first layer weights, of size at most $(2N-1)d_1$. Then, by Lemma \ref{lemma:Fourier_nets_prod_multi}, we have that each component of 
$$
\bfun^2_{N_2,N_1}(\bx) \doteq \bA^{(3)}\bg_{N_2}^2(\bA^{(2)}\bq_{N_1}(\bA^{(1)}\bx))
$$
is a Fourier network with the same set of first layer weights of size at most $$
(2(2N_1 - 1)d_1)^{N_2} ~.
$$
Finally, consider the induction step. By the assumption hypothesis, the function
$$
\bfun^{L-1}_{N_{L-1},\dots,N_1}(\bx) \doteq \bA^{(L)}\bg_{N_{L-1}}^{L-1}(\bA^{(L-1)} \cdots \bg_{N_2}^2(\bA^{(2)}\bq_{N_1}(\bA^{(1)} \bx)))
$$
is such that each component is a Fourier network with the same set of first layer weights of size at most
$$
\parr*{2^{L-2}(2N_1-1)d_1}^{\prod_{k=2}^{L-1}N_k}~.
$$
Then, by Lemma \ref{lemma:Fourier_nets_prod_multi}, the function 
$$
f_N(\bx) = \parq*{\ba^{(L+1)}}^T\bg_{N_L}^L(\bfun^{L-1}_{N_{L-1},\dots,N_1}(\bx))
$$
is a Fourier network with at most
\begin{align*}
\parr*{2 \cdot \parr*{2^{L-2}(2N_1-1)d_1}^{\prod_{k=2}^{L-1}N_k}}^{N_L} & = 2^{N_L}2^{(L-2)\prod_{k=2}^{L-1}N_k}\parr*{(2N_1-1) d_1}^{\prod_{k=2}^{L}N_k}
\end{align*}
which implies equation \eqref{eq:prop_multi_layer_N_in_N_i}. Plugging in the lower bounds on $N_k$ in terms of $\epsilon$, the thesis follows.
\end{proof}

\subsection{Fixed-dimension approximation}\label{sec:fixed_d_app}

The results of Section \ref{sec:ub} on fixed-threshold approximation can be complemented by the following result on fixed-dimension approximation. The proposition below is a straight-forward generalization of Theorem 3 in \citep{safran2019depth}.

\begin{proposition}
Let $\sigma$ be an activation satisfying Assumption \ref{ass:activation}. Then there exists a constant $\beta >0$ such that for any $f:B^d_{1,2}\to \C$ $1$-Lipschitz function and $\epsilon >0$ there exists a network $f_N \in \mathcal{F}_N^\sigma$ such that
$$
\norm*{f - f_N}_{B_{1,\infty}^d,\infty} \leq \epsilon
$$
for some $N \leq 2 + \beta d^7 \parr*{ \beta \epsilon^{-1}}^d \epsilon^{-6}$.
\end{proposition}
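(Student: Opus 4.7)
The plan is to adapt Theorem~3 of \citep{safran2019depth} (stated there for the ReLU activation) to an arbitrary activation $\sigma$ satisfying Assumption \ref{ass:activation}. The strategy has three ingredients: a fine grid discretization of $B^d_{1,2}$, a piecewise-constant approximant of $f$ on that grid, and a shallow realization of that approximant as a sum of localized ``soft indicator'' networks built from $\sigma$.

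First, I would cover $B^d_{1,2}$ by a cubic grid of centers $\{\bx_j\}_{j=1}^M$ with spacing $\delta = \epsilon/(c\sqrt d)$, chosen so that every point of the ball lies within $\ell^2$-distance $\epsilon/4$ of some center and $M \lesssim (c'/\epsilon)^d$ after absorbing the $\sqrt d$ factors into the constant $\beta$. Since $f$ is $1$-Lipschitz, the piecewise-constant interpolant $\tilde f(\bx) = \sum_j f(\bx_j)\,\mathbbm{1}_{C_j}(\bx)$, with $C_j$ the Voronoi cell of $\bx_j$, satisfies $\norm{f-\tilde f}_{B^d_{1,2},\infty} \leq \epsilon/4$. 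The problem then reduces to realizing each $\mathbbm{1}_{C_j}$ by a one-hidden-layer $\sigma$-network to accuracy $O(\epsilon)$ and summing.

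To realize $\mathbbm{1}_{C_j}$ shallowly, I would follow \citep{safran2019depth} and write a Lipschitz ``tent'' indicator as the clipped sum of $O(d)$ shifted univariate ridge functions aligned with the $2d$ faces of the cell, together with a final clipping step $\min(1,\max(0,\cdot))$. Each univariate building block is $1$-Lipschitz and constant outside a bounded interval of length $O(1)$, so by Assumption \ref{ass:activation}.2 it can be approximated to accuracy $\eta$ using $O(1/\eta)$ units of $\sigma$ with $\ell^\infty$-bounded weights. Composing the two approximation steps (one for the ridges, one for the outer clipping) produces a per-cell realization whose polynomial dependence in $d$ and $1/\epsilon$ matches the announced exponents $d^7$ and $\epsilon^{-6}$ after the routine bookkeeping of constants from Assumption \ref{ass:activation}.2.

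The main obstacle in matching the precise $d^7(\beta/\epsilon)^d\epsilon^{-6}$ prefactor is to avoid paying a factor of $M$ twice in the final sum: naively approximating each of the $M$ cell indicators to accuracy $\epsilon/M$ and summing would give a total size $\Theta(M^2 d/\epsilon)$, well above the announced bound. The improvement, which is the key technical step, exploits the fact that at any $\bx \in B^d_{1,2}$ only $O(1)$ cell indicators are simultaneously non-negligible, so the outer clipping ensures that the local error is controlled by the accuracy of a single bump rather than by the sum over all $M$ cells. This removes the extra factor of $M$ and, together with a careful accounting of the constants produced by Assumption \ref{ass:activation}.2, yields the claimed bound (the additive ``$2$'' absorbing the outer clipping units). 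The remainder is analogous to the ReLU analysis in \citep{safran2019depth}.
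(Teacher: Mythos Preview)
The paper's own proof is a single sentence: it notes that the argument for Theorem~3 in \citep{safran2019depth} (stated there for Lipschitz radial functions) already goes through for an arbitrary $1$-Lipschitz $f$, and that Assumption~\ref{ass:activation} handles the change of activation. There is no independent construction in the paper to compare against, so the question is whether your reconstruction of that argument is sound.

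It is not, and the gap is depth. Your cell indicator is built as $\mathrm{clip}\bigl(\sum_{i=1}^{2d}\phi_i(\bn_i^T(\bx-\bx_j))\bigr)$, where each univariate $\phi_i$ is itself realized as a one-hidden-layer $\sigma$-network via Assumption~\ref{ass:activation}.2. Composing with the outer $\mathrm{clip}$ produces a \emph{two}-hidden-layer network, not an element of $\mathcal{F}_N^\sigma$. You write ``composing the two approximation steps'' as if this were bookkeeping, but the composition is precisely what the proposition forbids. Nor can the tent be flattened cheaply: already in $d=2$ the function $\max(0,1-|x_1|-|x_2|)$ is not a one-hidden-layer ReLU network (the kink along $\{x_2=0\}$ is present inside the diamond but absent outside, which no finite sum of ridge ReLUs can produce, since each contributes its kink along the \emph{entire} hyperplane). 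Approximating a generic $d$-dimensional tent to accuracy $\eta$ by a shallow network is a priori as hard as the original problem, so invoking it makes the argument circular; even with your locality observation you would pay $(\beta/\epsilon)^d$ for the grid \emph{times} another $(\beta/\eta)^d$ per bump, far above the stated bound.

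A minor issue: ``absorbing the $\sqrt d$ factors into $\beta$'' is not legitimate as written, since $(\sqrt d)^d$ is not a constant. This particular point can be repaired by intersecting the cubic grid with $B_{1,2}^d$ and using $\mathrm{vol}(B_{1,2}^d)=\Theta((c/\sqrt d)^d)$ to cancel, but you should say so.
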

\begin{proof}[Proof]
The result is proved by noticing that the proof of Theorem 3 in \citep{safran2019depth} actually holds for any function $f$ as in the statement. Moreover, using Assumption \ref{ass:activation}, $f_N$ can also be chosen so that an equivalent bound holds for $m_\infty(f_N)$.
\end{proof}

\section{Proofs related to spherical harmonics analysis of shallow networks}

\subsection{Proof of Proposition \ref{prop:non_efficient_sphere}}\label{sec:proof_non_efficient_sphere}

Let $f_N:\R^d \to \R$ a one-hidden-layer network defined by
$$
f_N(\bx) = \sum_{i=1}^N u_i f^{\sigma_i,\bw_i}(\bx) \doteq \sum_{i=1}^N u_i \sigma_i\parr*{\bw_i^T \bx}
$$
where $\bu \in \R^N$, $\bw_i \in \S$, and $\sigma_i$ are linearly bounded activations. 
Thanks to Parseval's formula, it holds that
\begin{align}
\norm{f_N - f^{(d)}}_2^2 & \geq \norm{\mathcal{P}_{I_d}f_N - \mathcal{P}_{I_d} f^{(d)}}_2^2     \nonumber\\
& \geq \norm{\mathcal{P}_{I_d} f^{(d)}}_2^2 - 2\sum_{j\in I_d} \sum_{i=1}^N u_i \prodscal{f^{\sigma_i,\bw_i}, f^{(d)}_{j}} \nonumber\\
& \geq \norm{\mathcal{P}_{I_d} f^{(d)}}_2^2 - 2\sum_{j\in I_d} \sum_{i=1}^N \frac{1}{\sqrt{N^d_j}}\abs*{u_i} \norm{f_j^{(d)}}_\infty\norm{f_j^{\sigma_i,\bw_i}}_2 \label{eq:lb_sh} \\
& \geq \norm{\mathcal{P}_{I_d} f^{(d)}}_2^2 - 2\norm{ f^{(d)}}_2\sum_{i=1}^N\abs*{u_i}\norm{f^{\sigma_i,\bw_i}}_2 \parq*{ \sum_{j\in I_d} c_{d,j}^2 }^{1/2} \nonumber\\ &  \geq \norm{\mathcal{P}_{I_d} f^{(d)}}_2^2 - 2\cdot O(d^M)\cdot \epsilon^{d^\alpha}\norm{ f^{(d)}}_2\sum_{i=1}^N\abs*{u_i}\norm{f^{\sigma_i,\bw_i}}_2 \nonumber ~.
\end{align} 
Finally, notice that it holds that
$$
\norm{f^{\sigma_i,\bw_i}}_2 \leq 2 \,m_\infty(f_N)
$$
and therefore
\begin{align*}
\norm{f_N - f^{(d)}}_2^2 & \geq \Omega( d^{-2M} ) - 4 \cdot O( d^M )\cdot \epsilon^{d^\alpha}\cdot m_\infty^2(f_N) \cdot N~.
\end{align*}
This concludes the proof.

\subsection{Low-coherence zonal harmonics frames}\label{sec:low_coherence_frames}

In this section, we wish to quantify how much incoherent can a frame composed of zonal harmonics be. More specifically, we wish to find a lower bound for 
$$
N(d,k,\epsilon) = \sup\bra*{N \geq 1 \st \exists\; \bw_1,\dots,\bw_N\in \S \st \sup_{i\neq j}\abs*{P_k^d\parr*{\bw_i^T\bw_j}} \leq \epsilon}
$$
for $\epsilon \in (0,1)$.
\begin{lemma}
It holds that
$$
N(d,k,\epsilon) \geq \sup\bra*{N \geq 1 \st \exists\; \bw_1,\dots,\bw_N\in \S \st \sup_{i\neq j}\abs*{\bw_i^T\bw_j} \leq \sqrt{1-\frac{d}{k\epsilon^{4/d}}}}
$$
for $k> d\geq 5$ and $\parr*{\frac{d}{k}}^{d/4} \leq \epsilon  < 1$. 
\end{lemma}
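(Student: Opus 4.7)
The plan is to reduce the frame-packing inequality to a pointwise bound on the normalized Gegenbauer polynomial $P_k^d$. Setting $s \doteq \sqrt{1 - d/(k\epsilon^{4/d})}$, it suffices to prove
$$
\sup_{\abs{t}\leq s}\abs*{P_k^d(t)} \leq \epsilon ~,
$$
because then any family $\bw_1,\dots,\bw_N\in\S$ with $\sup_{i\neq j}\abs{\bw_i^T\bw_j}\leq s$ automatically satisfies $\sup_{i\neq j}\abs{P_k^d(\bw_i^T\bw_j)}\leq\epsilon$, and is therefore admissible for the set defining $N(d,k,\epsilon)$. The admissibility condition $(d/k)^{d/4}\leq\epsilon$ is exactly what makes $s\in[0,1)$. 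Writing $t = \cos\theta$, the reduction becomes: show that $\sin^2\theta\geq d/(k\epsilon^{4/d})$ implies $\abs{P_k^d(\cos\theta)}\leq\epsilon$.

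The main analytic ingredient I would invoke is the Laplace integral representation for $P_k^d$, valid for $d\geq 3$:
$$
P_k^d(\cos\theta) = \frac{\Gamma\parr*{\frac{d-1}{2}}}{\sqrt{\pi}\,\Gamma\parr*{\frac{d-2}{2}}}\int_0^\pi\parr*{\cos\theta + i\sin\theta\cos\phi}^k\sin^{d-3}\phi\,d\phi ~.
$$
Taking moduli, using $\abs*{\cos\theta + i\sin\theta\cos\phi}^2 = 1-\sin^2\theta\sin^2\phi$ together with the elementary inequality $(1-x)^{k/2}\leq e^{-kx/2}$, gives
$$
\abs*{P_k^d(\cos\theta)} \leq c_d\int_0^\pi e^{-(k/2)\sin^2\theta\sin^2\phi}\sin^{d-3}\phi\,d\phi ~.
$$
The substitution $u = \sqrt{k}\sin\theta\sin\phi$, combined with a split of the $\phi$-integration at $\sin\phi = 1/\sqrt{2}$ so that the bulk regime contributes a Gaussian moment and the tail regime only an exponentially small $e^{-k\sin^2\theta/4}$ term, should yield a Bernstein-type bound
$$
\abs*{P_k^d(\cos\theta)} \leq C_d\parr*{k\sin^2\theta}^{-(d-2)/2}, \qquad C_d = 2^{(d-1)/2}\,\Gamma\parr*{\tfrac{d-1}{2}}/\sqrt{\pi} ~.
$$

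To close the argument, I would apply Stirling to get $C_d \leq 2(d-1)^{(d-2)/2}e^{-(d-1)/2}$, which in turn gives $C_d/d^{(d-2)/2}\leq 2e^{-(d-1)/2}\leq 1$ already for $d\geq 5$. Plugging in $k\sin^2\theta\geq d/\epsilon^{4/d}$ then produces $\abs{P_k^d(\cos\theta)}\leq C_d\cdot\epsilon^{2(d-2)/d}\cdot d^{-(d-2)/2}$, and dividing through by $\epsilon$ reduces the target inequality to $(C_d/d^{(d-2)/2})\cdot\epsilon^{(d-4)/d}\leq 1$, which holds factor by factor since $\epsilon<1$ and $d\geq 5$ make both terms at most $1$. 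The main obstacle I anticipate is the careful tracking of the dimensional constant $C_d$ uniformly in $d\geq 5$ (as opposed to just asymptotically) and the handling of the vanishing Jacobian $\cos\phi\to 0$ at the endpoint of the $u$-substitution, both addressed by the bulk/tail split described above.
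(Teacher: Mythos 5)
Your proposal is correct and follows essentially the same route as the paper: both arguments reduce the claim to the pointwise Bernstein-type estimate $\abs{P_k^d(t)} \lesssim \parr{k(1-t^2)}^{-(d-2)/2}$ and then check that the dimensional prefactor is dominated by $d^{(d-2)/2}$, the only difference being that the paper cites this estimate directly (eq.\ (2.117) in \citep{atkinson2012spherical}) and absorbs the constant via $\Gamma(x)\leq(x/2)^{x-1}$, whereas you re-derive it from the Laplace integral representation. Your sketch likely understates the tail contribution to $C_d$ (it carries an extra factor of order $(d-2)2^{d/2}$), but the slack in the final inequality $C_d\,d^{-(d-2)/2}\,\epsilon^{(d-4)/d}\leq 1$ comfortably accommodates this for $d\geq 5$.
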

\begin{proof}[Proof]
We recall that it holds
$$
\abs*{P_k^d(t)} \leq \frac{1}{\sqrt{\pi}}\Gamma\parr*{\frac{d-1}{2}}\parr*{\frac{4}{k(1-t^2)}}^{(d-2)/2}
$$
for $d\geq 2$ and $t\in(-1,1)$ (cfr. eq. (2.117) in \citep{atkinson2012spherical}) and that
$$
\Gamma(x) \leq \parr*{\frac{x}{2}}^{x-1}
$$
for $x \geq 2$. Therefore it holds that
\begin{align*}
\abs*{P_k^d(t)} & \leq \frac{1}{\sqrt{\pi}}\parr*{\frac{d-1}{4}}^{(d-3)/2}\parr*{\frac{4}{k(1-t^2)}}^{(d-2)/2} \\
& \leq \frac{1}{\sqrt{\pi}}\parr*{\frac{d}{4}}^{-1/2}\parr*{\frac{d}{k(1-t^2)}}^{(d-2)/2} \leq  \parr*{\frac{d}{k(1-t^2)}}^{(d-2)/2}
\end{align*}
for $d \geq 5$ and $\abs{t} < 1$. In particular, for $\epsilon \in (0,1)$, it holds that $\abs*{P_k^d(t)} \leq \epsilon$ if 
$$
\frac{d}{k(1-t^2)} \leq \epsilon^{4/d}
$$
that is if 
$$
\abs*{t} \leq \sqrt{1-\frac{d}{k\epsilon^{4/d}}}~.
$$
The thesis follows.
\end{proof}
Define 
$$
N(d,\delta ) = \sup\bra*{N \geq 1 \st \exists\; \bw_1,\dots,\bw_N\in \S \st \sup_{i\neq j}\abs*{\bw_i^T\bw_j} \leq \delta}
$$
for $\delta \in (0,1)$. The previous lemma says that 
$$
N(d,k,\epsilon) \geq N\parr*{d,\sqrt{1-\frac{d}{k\epsilon^{4/d}}}}~.
$$
\begin{example}
Taking 
\begin{equation}\label{eq:rad_vectors_sign}
\bra*{\bw_i}_{i=1}^N = \bra*{ \epsilon \in \bra*{\pm\frac{1}{\sqrt{d}}}^d \st \epsilon_1 >0 }
\end{equation}
it holds that $N = 2^{d-1}$ and 
$$
\max_{i\neq j} \abs*{\bw_i^T \bw_j} = 1 - \frac{2}{d}~.
$$
Therefore 
$$
N\parr*{d,1 - \frac{2}{d}} \geq 2^{d-1}~.
$$
Taking $\epsilon = 2^{-d}$, it holds that, if $k \geq 8d^2$, then 
$$
N\parr*{d,k,2^{-d}} \geq 2^{d-1}~.
$$
\end{example}

Using this fact it is possible to explicitly construct a high energy sparse function.
\begin{lemma}\label{lemma:high_energy_sparse}
Take $k \geq 16d^2$ even and let
$$
\hat{P}(\bx) = \beta_d \sum_{i=1}^{2^{d-1}} (N_k^d)^{1/2}P_k^d\parr{\bw_i^T\bx}
$$
with $\beta_d = 2(2^d + 2)^{-1/2}$ and $\bw_i$ as in equation \eqref{eq:rad_vectors_sign}. Then $\norm{\hat{P}}_2 = \Theta_d(1) $ and it is exponentially spread, that is $\ell_{\infty,2}(\hat{P}) \leq O_d(2^{-d/2})\sqrt{N_k^d}$.
\end{lemma}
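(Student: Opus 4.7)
The plan is to prove the two claims separately: first $\|\hat P\|_2 = \Theta_d(1)$ via direct expansion using the Funk--Hecke orthogonality, and then $\ell_{\infty,2}(\hat P) \leq O_d(2^{-d/2})\sqrt{N_k^d}$ via a geometric splitting argument for the sum defining $\hat P$.

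For the $L^2$ bound, I would use the orthogonality identity $\int_\S P_k^d(\bw^T\bx)P_k^d(\bv^T\bx)\,dS(\bx) = (N_k^d)^{-1}P_k^d(\bw^T\bv)$ (from the RKHS structure of $H_k^d$) to expand $\|\hat P\|_2^2 = \beta_d^2 \sum_{i,j=1}^{2^{d-1}} P_k^d(\bw_i^T\bw_j)$. The $2^{d-1}$ diagonal terms each equal $1$, contributing $\beta_d^2 \cdot 2^{d-1} = 2^{d+1}/(2^d+2) \to 2$. For the off-diagonal terms, distinct $\bw_i,\bw_j$ differ in at least one sign coordinate so $|\bw_i^T\bw_j|\leq 1-2/d$, giving $1-(\bw_i^T\bw_j)^2 \geq 2/d$. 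Plugging into the decay bound $|P_k^d(t)|\leq (d/(k(1-t^2)))^{(d-2)/2}$ derived in section \ref{sec:low_coherence_frames}, together with $k \geq 16 d^2$, bounds each off-diagonal term by $32^{-(d-2)/2}$; the $\leq 2^{2(d-1)}$ such terms thus sum to $o(1)$, yielding $\|\hat P\|_2^2 \to 2$.

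For the $L^\infty$ bound, write $\hat P = \beta_d\sqrt{N_k^d}\, S$ with $S(\bx) = \sum_{i=1}^{2^{d-1}} P_k^d(\bw_i^T\bx)$. Since $\beta_d = O(2^{-d/2})$ and $\|\hat P\|_2 = \Theta_d(1)$, it suffices to show $\|S\|_\infty = O(1)$. For fixed $\bx\in\S$ I partition the index set into a \emph{peak set} $B(\bx) = \{i : |\bw_i^T\bx|\geq 1 - 1/(2d)\}$ and its complement. The key geometric claim is $|B(\bx)|\leq 2$: if $\bw_i,\bw_j$ both satisfy $\bw_i^T\bx,\bw_j^T\bx \geq 1-1/(2d)$, Cauchy--Schwarz gives $(2-1/d)^2 \leq (\bw_i^T\bx+\bw_j^T\bx)^2 \leq \|\bw_i+\bw_j\|_2^2 = 4 - 4H_{ij}/d$, where $H_{ij}$ is the Hamming distance between the sign vectors $\sqrt{d}\bw_i, \sqrt{d}\bw_j$; this forces $H_{ij}<1$ and hence $\bw_i=\bw_j$. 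Since all $\bw_i$ have first coordinate $+1/\sqrt{d}$, no two are negatives of each other, so the analogous bound in the opposite-sign direction contributes at most one more index, giving $|B(\bx)|\leq 2$.

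The bound then follows by splitting $|S(\bx)| \leq \sum_{i\in B(\bx)}|P_k^d(\bw_i^T\bx)| + \sum_{i\in B(\bx)^c}|P_k^d(\bw_i^T\bx)|$: the peak contribution is at most $|B(\bx)|\leq 2$ via $|P_k^d|\leq 1$, while on $B(\bx)^c$ we have $1-(\bw_i^T\bx)^2 \geq 1/(2d)$, so the decay bound with $k\geq 16d^2$ gives $|P_k^d(\bw_i^T\bx)|\leq 8^{-(d-2)/2}$ and the sum over the $\leq 2^{d-1}$ such terms is at most $2^{(4-d)/2}=o(1)$. Combining yields $\|S\|_\infty \leq 2 + o(1)$, and multiplication by $\beta_d\sqrt{N_k^d}$ completes the proof. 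The main obstacle, and the reason we need $k\geq 16 d^2$, is the precise matching of the cutoff $1-1/(2d)$: it is tight enough for Cauchy--Schwarz to force $H_{ij}<1$ (making $|B(\bx)|\leq 2$), while leaving the complementary region with $1-t^2 \geq 1/(2d)$, which together with $k\geq 16d^2$ is exactly what makes $2^{d-1}\cdot 8^{-(d-2)/2}$ vanish.
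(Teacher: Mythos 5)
Your proof is correct and follows essentially the same strategy as the paper's: the $L^2$ estimate via the reproducing-kernel identity $\int_\S P_k^d(\bw_i^T\bx)P_k^d(\bw_j^T\bx)\,dS = (N_k^d)^{-1}P_k^d(\bw_i^T\bw_j)$ together with the off-diagonal Gegenbauer decay, and the $L^\infty$ estimate by isolating $O(1)$ near-aligned directions and killing the remaining $2^{d-1}$ terms with the pointwise bound $|P_k^d(t)| \le \bigl(d/(k(1-t^2))\bigr)^{(d-2)/2}$. The only difference is cosmetic: the paper symmetrizes over sign flips to reduce to $\bx$ in the positive orthant and singles out $\pm\mathbf{1}/\sqrt{d}$ as the only possible peaks, whereas you identify the peak set for arbitrary $\bx$ via a Cauchy--Schwarz/Hamming-distance packing argument; the two are interchangeable.
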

\begin{proof}[Proof]
It holds that
\begin{align*}
\norm{\hat{P}}_2^2 & = \beta_d^2\parq*{ 2^{d-1} + \sum_{i\neq j} P_k^d\parr*{\bw_i^T \bw_j} } \\ &  \leq \frac{2}{2^{d-1} + 1}\parq*{ 2^{d-1} + \parr*{2^{2d-2} - 2^{d-1}}2^{-d} }
\\ & = \frac{2}{2^{d-1} + 1}\parq*{ 2^{d-1} + 2^{d-2} - 2^{-1} } \leq 3
\end{align*}
and that 
\begin{align*}
\norm{\hat{P}}_2^2 & \geq \frac{2}{2^{d-1} + 1}\parq*{ 2^{d-1} - \parr*{2^{2d-2} - 2^{d-1}}2^{-d} }
\\ & = \frac{2}{2^{d-1} + 1}\parq*{ 2^{d-1}  -2^{d-2} + 2^{-1} } \geq 1~.
\end{align*}
On the other hand, it holds that
\begin{align*}
\norm{\hat{P}}_\infty &  \leq \beta_d (N_k^d)^{1/2} \sup_{x\in\S} \sum_{i=1}^{2^{d-1}}\abs*{ P_k^d(\bw_i^T\bx) }~.
\end{align*} 
By definition of the vectors $\bra*{\bw_i}_{i=1}^{2^{d-1}}$, it holds
\begin{align*}
\sup_{x\in\S} \sum_{i=1}^{2^{d-1}}\abs*{ P_k^d(\bw_i^T\bx) } & = \frac{1}{2}\sup_{x\in\S,\, x > 0}\sum_{\bepsilon \in \bra{\pm d^{-1/2}}^d} \abs*{P_k^d( 
\bx^T\bepsilon )} \\
& \leq 1 + \frac{1}{2}\sup_{\bx\in\S,\, \bx \succ 0}\sum_{\bepsilon \in \bra{\pm d^{-1/2}}^d 
\st \abs{\mathbf{1}^T\bepsilon} < \sqrt{d} } \parr*{ \frac{1}{16d \parr*{ 1- \abs*{\bx^T\bepsilon}^2} } }^{(d-2)/2}
\\
& \leq 1 + \frac{1}{2} (2^d-2) \parr*{ \frac{1}{16d \parr*{ 1- \frac{d-1}{d}} } }^{(d-2)/2} \leq 1 + \frac{2^{d-1} - 1}{4^{d-2}} \leq 2 ~.
\end{align*} 
This proves the claim.
\end{proof}

\subsection{Proof of Proposition \ref{prop:F1-space}}\label{sec:proof_F1-space}

Assume first that $f\in\mathcal{H}^1$. Then $f = h_\pi$ for some $\pi$ even signed Radon measure. Thus
\begin{align*}
\gamma_1(f) = \norm{\pi}_{1} & = \sup_{\varphi \in C(\S) \st \norm{\varphi}_\infty \leq 1} \int_\S \varphi(\bw) \,d\pi(\bw) \\
& = \sup_{\varphi \in C^\infty_{even}(\S) \st \norm{\varphi}_\infty \leq 1} \int_\S \varphi(\bw)\,d\pi(\bw)
\\
& = \sup_{\varphi \in C^\infty_{even}(\S) \st \norm{\varphi}_\infty \leq 1} \int_\S T(T^{-1} \varphi)(\bw)\,d\pi(\bw) \\
& = \sup_{\varphi \in C^\infty_{even}(\S) \st \norm{\varphi}_\infty \leq 1}  \int_\S \int_\S \abs*{\bw^T\bx}(T^{-1} \varphi)(\bx) \,dS(\bx)\,d\pi(\bw) 
\\
& = \sup_{\varphi \in C^\infty_{even}(\S) \st \norm{\varphi}_\infty \leq 1}  \prodscal{T^{-1}\varphi, f} ~.
\end{align*}
This shows one side of the statement. On the other hand, assume that 
$$
\sup_{\varphi\in C^\infty_{even}(\S)\st \norm{\varphi}_\infty \leq 1} \prodscal{T^{-1}\varphi, f} < \infty~.
$$
Then, the transformation
$$
S_f(\varphi) \doteq   \prodscal{T^{-1}\varphi,f}
$$
defines a bounded linear operator $S_f:C^\infty_{even}\to\R$. Since $C^\infty_{even}(\S)$ is dense in $C_{even}(\S)$ (the set of even function in $C(\S)$), $S_f$ can be extended to a bounded linear operator on $C_{even}(\S)$. By setting 
$$
S_f(\varphi) = S_f(\varphi_{even})
$$
we can extend it on $C(\S)$. By the Riesz representation theorem, there exists a signed Radon measure $\pi$ on $\S$ such that 
$$
S_f(\varphi) = \int_\S \varphi(\bw)\,d\pi(\bw)
$$
for every $\varphi\in C(\S)$. Moreover, since $S_f(\varphi) = 0$ for every odd $\varphi$, we can assume that $\pi$ is even. Let $h_\pi$ be the function in $\mathcal{H}^1$ defined by $\pi$. Then it holds that
$$
\prodscal{T^{-1}\varphi, f} = \norm{\pi}_{1} = \prodscal{T^{-1}\varphi, h_\pi}
$$
for every $\varphi \in C^\infty_{even}(\S)$. Since $T$ is an automorphism over $C^\infty_{even}(\S)$, then it holds 
$$
\prodscal{\varphi, f} =  \prodscal{\varphi, h_\pi}
$$
for every $\varphi \in C^\infty_{even}(\S)$. Since $f$ and $h_\pi$ are even, this implies that $f = h_\pi$. This concludes the proof.

\end{document}